\newtheorem{theorem}{Theorem}
\newtheorem{proposition}{Proposition}
\newtheorem{corollary}{Corollary}
\newtheorem{remark}{Remark}
\newtheorem{assumption}{Assumption}
\newcommand{\N}{\mathbb{N}}
\newcommand{\R}{\mathbb{R}}
\newcommand{\E}{\mathbb{E}}
\newcommand{\V}{\mathbb{V}}
\title{Accelerating SGDM via Learning Rate and Batch Size Schedules: A Lyapunov-Based Analysis}
\author{\name Yuichi Kondo \email ce265022@meiji.ac.jp \\
      \addr Meiji University
      \AND
      \name Hideaki Iiduka \email iiduka@cs.meiji.ac.jp \\
      \addr Meiji University
     }
\begin{document}

\maketitle

\begin{abstract}
We analyze the convergence behavior of stochastic gradient descent with momentum (SGDM) under dynamic learning-rate and batch-size schedules by introducing a novel and simpler Lyapunov function. We extend the existing theoretical framework to cover three practical scheduling strategies commonly used in deep learning: a constant batch size with a decaying learning rate, an increasing batch size with a decaying learning rate, and an increasing batch size with an increasing learning rate. Our results reveal a clear hierarchy in convergence: a constant batch size does not guarantee convergence of the expected gradient norm under our Lyapunov-based analysis, whereas an increasing batch size does, and simultaneously increasing both the batch size and learning rate achieves a provably faster decay. Empirical results validate our theory, showing that dynamically scheduled SGDM significantly outperforms its fixed-hyperparameter counterpart in convergence speed. We also evaluated a warmup schedule in experiments, which empirically outperformed all other strategies in convergence behavior.
\end{abstract}

\section{Introduction}
Stochastic gradient descent (SGD) \citep{robb1951} and its variants are fundamental methods for training deep neural networks (DNNs) as they enable efficient optimization of empirical risk minimization problems. This paper focuses on a representative variant, SGD with Momentum (SGDM) \citep{polyak1964, nest1983, pmlr-v28-sutskever13}.

Typical SGDM algorithms include the Stochastic Heavy-Ball method (SHB) \citep{polyak1964} and its normalized variant (NSHB) \citep{nshb}, defined as follows:
\[
\begin{cases}
\text{SHB:} & \bm{m}_t = \beta \bm{m}_{t-1} + \nabla f_{B_t}(\bm{\theta}_t), \\
\text{NSHB:} & \bm{m}_t = \beta \bm{m}_{t-1} + (1 - \beta) \nabla f_{B_t}(\bm{\theta}_t),
\end{cases}
\]
\[
\bm{\theta}_{t+1} = \bm{\theta}_t - \lambda_t \bm{m}_t,
\]
where $\lambda_t$ denotes the learning rate, $\beta \in [0,1)$ is the momentum coefficient, and $\nabla f_{B_t}(\bm{\theta}_t)$ is the stochastic gradient computed on the mini-batch $B_t$.

SGDM, including SHB and NSHB, generalizes the standard SGD. By leveraging historical gradient information, SGDM can accelerate convergence and stabilize the optimization process, as demonstrated in previous studies \citep{8503173, NEURIPS2019_4eff0720, ijcai2018p410}.

The performance of SGD-based methods critically depends on hyperparameters such as the learning rate and batch size. Dynamic learning rate scheduling is widely adopted for improving training. For example, the cosine annealing schedule \citep{loshchilov2017sgdr} smoothly reduces the learning rate, enhancing both convergence speed and generalization. Additionally, increasing the batch size has been reported to improve the efficiency of mini-batch SGD \citep{JMLR:v20:18-789, l.2018dont, balles2016coupling, goyal2018accuratelargeminibatchsgd, pmlr-v54-de17a}. Recent theoretical studies, such as \citet{umeda2025increasing} and \citet{kamo2025increasing}, further demonstrate that increasing the batch size can accelerate improvements in generalization performance and reductions in full gradient norms. Furthermore, \citet{islamov2026on} studied the role of batch size scaling in momentum-based conditional gradient methods, demonstrating that static batch sizes suffer from a critical saturation threshold, and they proposed an adaptive increasing schedule. While their setting differs from ours in both the algorithm (SCG/Scion) and the assumption ($\mu$-KL condition), the parallel finding that increasing batch sizes accelerates convergence provides a unified perspective on batch size scaling across different momentum methods.

While \citet{umeda2025increasing} analyzed the convergence of vanilla SGD with scheduled learning rates and batch sizes without addressing momentum-based methods, \citet{kamo2025increasing} studied the convergence of SGDM under a constant learning rate and increasing batch size but did not consider dynamic learning rate scheduling.

Although optimization-theoretic analyses of SGDM with dynamic learning rates remain limited, a complementary line of research models SGD as a diffusion process to study the interplay of dynamic learning rates and batch sizes on landscape exploration and exit times \citep{xie2021a}. Our work takes an optimization convergence perspective and complements this diffusion-theoretic approach by providing explicit convergence bounds under practical scheduling strategies. \citet{pmlr-v134-sebbouh21a} studied the almost-sure convergence rates of SGD and SHB for smooth convex functions, providing an important theoretical foundation for momentum-based methods. The existing theoretical studies on SGDM primarily rely on constructing appropriate Lyapunov functions and leveraging their monotonic decrease to establish convergence \citep{10.1214/18-EJS1395, pmlr-v119-mai20b, JMLR:v22:20-195, defazio2021momentumprimalaveragingtheoretical}. In particular, \citet{NEURIPS2020_d3f5d4de} is a pioneering and influential work that, in the context of NSHB, derives a quantitative upper bound on the expected gradient norm of the objective function $f$ under a fixed learning rate, assuming only that $f$ is non-convex and $L$-smooth. However, extending such results to dynamic learning-rate schedules remains challenging.

In this work, we aim to fill this gap by analyzing the convergence of SGDM under dynamic learning rate scheduling.

\textbf{Our main contributions are summarized as follows:}

\begin{itemize}
\item We introduce a novel Lyapunov function for SGDM, enabling a rigorous convergence analysis that adapts to dynamic learning-rate schedules.
\item We develop a unified theoretical framework covering both SHB and NSHB and derive convergence rates of the expected gradient norm under various scheduling strategies.
\item We extend the analysis of \citet{kamo2025increasing} from constant learning rates to decaying learning rates. Furthermore, we examine settings in which both the learning rate and batch size are increased and demonstrate that this leads to improved convergence. This setting has been studied by \citet{umeda2025increasing} only for vanilla SGD.
\item We validate our theoretical findings through experiments under four scheduling strategies and show that they align with theoretical predictions. The schedulings improve the full gradient norm in the following increasing order: (i) constant batch size with a decaying learning rate, (ii) increasing batch size with a decaying learning rate, (iii) increasing batch size with an increasing learning rate, and (iv) increasing batch size with a warmup learning rate.
\end{itemize}

\section{Preliminaries}
\label{sec:2}
Let $\N$ denote the set of natural numbers. For any $n \in \N$, define $[n] := \{1, 2, \ldots, n\}$ and $[0:n] := \{0, 1, \ldots, n\}$. Let $\R^d$ be a $d$-dimensional Euclidean space with inner product $\langle \bm{\theta}_1, \bm{\theta}_2 \rangle := \bm{\theta}_1^\top \bm{\theta}_2$ and norm $\|\bm{\theta}\| := \sqrt{\langle \bm{\theta}, \bm{\theta} \rangle}$. Let $\R_+^d := \{ \bm{\theta} \in \R^d \mid \theta_i \geq 0 \text{ for all } i \in [d] \}$ and $\R_{++}^d := \{ \bm{\theta} \in \R^d \mid \theta_i > 0 \text{ for all } i \in [d] \}$. For scalars, define $\R_+ := \{ x \in \R \mid x \geq 0 \}$ and $\R_{++} := \{ x \in \R \mid x > 0 \}$.

Let $(x_t)$ and $(y_t)$ be sequences in $\R_+$. We write $y_t = O(x_t)$ if there exist constants $c \in \R_+$ and $t_0 \in \N$ such that $y_t \leq c x_t$ for all $t \geq t_0$.

\subsection{Empirical Risk Minimization}
Let $\bm{\theta} \in \R^d$ denote the parameters of a DNN. Let $S = \{(\bm{x}_1, \bm{y}_1), \ldots, (\bm{x}_n, \bm{y}_n)\}$ be a training dataset, where each $\bm{x}_i$ is associated with a label $\bm{y}_i$, and $n \in \N$ denotes the number of training samples. For each $(\bm{x}_i, \bm{y}_i)$, let the corresponding loss function be $f_i(\cdot) := f(\cdot; (\bm{x}_i, \bm{y}_i)) : \R^d \to \R$. The empirical loss is defined as $f(\bm{\theta}) := \frac{1}{n} \sum_{i \in [n]} f_i(\bm{\theta})$. Empirical risk minimization (ERM) aims to minimize this empirical loss. Note that this work focuses on finding a stationary point of the empirical loss, i.e., a point $\bm{\theta}^\star \in \R^d$ such that $\nabla f(\bm{\theta}^\star) = \bm{0}$.

We assume that the loss functions $f_i\;(i \in [n])$ satisfy the conditions in the following assumption.
\begin{assumption}\label{assum:1}
Let $n \in \N$ denote the number of training samples,

(A1) $f:\R^d \to \R$ is differentiable and $L$-smooth; i.e., there exists $L > 0$ such that, for all $\bm{\theta}_1, \bm{\theta}_2 \in \R^d$, $\|\nabla f(\bm{\theta}_1) - \nabla f(\bm{\theta}_2)\| \leq L \|\bm{\theta}_1 - \bm{\theta}_2\|$. The minimum value of $f$ is denoted by $f^\star := \inf_{\bm{\theta} \in \R^d} f(\bm{\theta})>-\infty$.

(A2) Let $\xi$ be a random variable independent of $\bm{\theta}$. The stochastic gradient $\nabla f_\xi(\bm{\theta})$ satisfies:
\begin{itemize}
\item[(i)] $\E_\xi[\nabla f_\xi(\bm{\theta})] = \nabla f(\bm{\theta})$ for all $\bm{\theta} \in \R^d$,
\item[(ii)] There exists $\sigma \ge 0$ such that for all $\bm{\theta} \in \R^d$, $\V_\xi[\nabla f_\xi(\bm{\theta})] = \E_\xi\left[\| \nabla f_\xi(\bm{\theta}) - \nabla f(\bm{\theta}) \|^2\right] \le \sigma^2$.
\end{itemize}

(A3) Let $b \in \N$ be a batch size with $b \le n$, and let $\bm{\xi} = (\xi_1, \ldots, \xi_b)^\top$ be a vector of $b$ i.i.d.\ random variables, independent of $\bm{\theta}$. The mini-batch stochastic gradient is defined by $\nabla f_B(\bm{\theta}) := \frac{1}{b} \sum_{i=1}^b \nabla f_{\xi_i}(\bm{\theta})$, which is an unbiased estimator of the full gradient $\nabla f(\bm{\theta})$.
\end{assumption}

\subsection{Mini-batch SHB and NSHB Optimizers}
Momentum-based stochastic methods are widely used to accelerate convergence and enhance stability. Here, we focus on two such methods:
\begin{itemize}
\item Mini-batch Stochastic Heavy Ball (SHB)  \citep{polyak1964}
\item Mini-batch Normalized SHB (NSHB)  \citep{nshb}
\end{itemize}
At each iteration $t$, given the current parameter $\bm{\theta}_t \in \R^d$, a mini-batch $\bm{\xi}_t = (\xi_{t,1},\ldots,\xi_{t,b_t})^\top$ is sampled i.i.d.\ from $[n]$, and the mini-batch gradient is computed as $\nabla f_{B_t}(\bm{\theta}_t) := \frac{1}{b_t} \sum_{i=1}^{b_t} \nabla f_{\xi_{t,i}}(\bm{\theta}_t)$.

\begin{algorithm}[htb]
\caption{Mini-batch NSHB}
\label{algo:nshb}
\textbf{Input}: Initial parameter $\bm{\theta}_0$ \\
\textbf{Parameter}: Momentum coefficient $\beta \in [0,1)$, learning rates $\{\eta_t\}_{t=0}^{T-1}$, batch sizes $\{b_t\}_{t=0}^{T-1}$, total steps $T$ \\
\textbf{Output}: Final parameter $\bm{\theta}_T$
\begin{algorithmic}[1]
\State Initialize $\bm{m}_{-1} \leftarrow \bm{0}$
\For{$t=0$ \textbf{to} $T-1$}
  \State $\nabla f_{B_t}(\bm{\theta}_t) \leftarrow \frac{1}{b_t} \sum_{i=1}^{b_t} \nabla f_{\xi_{t,i}}(\bm{\theta}_t)$
  \State $\bm{m}_t \leftarrow \beta \bm{m}_{t-1} + (1-\beta) \nabla f_{B_t}(\bm{\theta}_t)$
  \State $\bm{\theta}_{t+1} \leftarrow \bm{\theta}_t - \eta_t \bm{m}_t$
\EndFor
\State \textbf{return} $\bm{\theta}_T$
\end{algorithmic}
\end{algorithm}

\begin{algorithm}[htb]
\caption{Mini-batch SHB}
\label{algo:shb}
\textbf{Input}: Initial parameter $\bm{\theta}_0$ \\
\textbf{Parameter}: Momentum coefficient $\beta \in [0,1)$, learning rates $\{\alpha_t\}_{t=0}^{T-1}$, batch sizes $\{b_t\}_{t=0}^{T-1}$, total steps $T$ \\
\textbf{Output}: Final parameter $\bm{\theta}_T$
\begin{algorithmic}[1]
\State Initialize $\bm{m}_{-1} \leftarrow \bm{0}$
\For{$t=0$ \textbf{to} $T-1$}
  \State $\nabla f_{B_t}(\bm{\theta}_t) \leftarrow \frac{1}{b_t} \sum_{i=1}^{b_t} \nabla f_{\xi_{t,i}}(\bm{\theta}_t)$
  \State $\bm{m}_t \leftarrow \beta \bm{m}_{t-1} + \nabla f_{B_t}(\bm{\theta}_t)$
  \State $\bm{\theta}_{t+1} \leftarrow \bm{\theta}_t - \alpha_t \bm{m}_t$
\EndFor
\State \textbf{return} $\bm{\theta}_T$
\end{algorithmic}
\end{algorithm}

Both algorithms can be equivalently rewritten in the following forms:

NSHB:
\[
\bm{\theta}_{t+1} = \bm{\theta}_t - \eta_t(1-\beta)\nabla f_{B_t}(\bm{\theta}_t) + \beta\frac{\eta_t}{\eta_{t-1}}(\bm{\theta}_t - \bm{\theta}_{t-1}),
\]
SHB:
\[
\bm{\theta}_{t+1} =\bm{\theta}_t
- \alpha_t\nabla f_{B_t}(\bm{\theta}_t)
+ \beta\frac{\alpha_t}{\alpha_{t-1}}(\bm{\theta}_t - \bm{\theta}_{t-1}).
\]
Notably, NSHB reduces to SHB when $\eta_t = \frac{\alpha_t}{1-\beta}$.

\section{Convergence Analysis of Mini-batch SGDM}
\label{sec:3}
As noted in the previous section, NSHB and SHB are equivalent up to the reparametrization $\eta_t = \frac{\alpha_t}{1-\beta}$. For notational consistency, we use $\lambda_t$ to denote the learning rate throughout this section. Therefore, we present the convergence analysis for NSHB only. The corresponding results for SHB follow analogously and are provided in Appendix~\ref{sec:shb}.

\subsection{A Novel Lyapunov Function}
In this section, we introduce the following Lyapunov function $\mathcal{L}_t$ to analyze the convergence of SGDM:
\begin{align}\label{lyap}
\mathcal{L}_t :=
\begin{cases}
f(\bm{\theta}_t), & t = 0, \\
f(\bm{\theta}_t) + A_{t-1} \|\bm{m}_{t-1}\|^2, & t > 0,
\end{cases}
\end{align}
where $A_t \geq 0$ is a deterministic scalar depending only on $t$. In particular, for the NSHB method, $A_t$ is defined as
\[
A_t := \frac{\eta_t - L(1-\beta)\eta_t^2}{2(1-\beta)}.
\]
A sketch explaining the appropriateness of this choice is provided later, and a detailed proof is deferred to Appendix~\ref{sec:B.2}.

To contextualize our proposed Lyapunov function \eqref{lyap}, we compare it with those of the prior studies. As summarized in Table~\ref{tab:lyapunov}, our formulation is significantly simpler. Moreover, it is highly versatile, as it can naturally accommodate dynamic learning-rate schedules.
\begin{table*}[htbp]
\centering
\caption{Comparison of Lyapunov Functions for SGDM Convergence Analysis. \\
The key variables introduced in each work are defined as follows: \\
In (i), $\bm{z}_t := \frac{1}{1-\beta}\bm{\theta}_t - \frac{\beta}{1-\beta}\bm{\theta}_{t-1}$ is an auxiliary variable, and $\{c_i\}$ is a sequence of positive constants. \\
In (ii), $F_{\mu}$ is the Moreau envelope of $F = f + I_{\mathcal{X}}$, $\bm{p}_t := \frac{1-\beta}{\beta}(\bm{\theta}_t - \bm{\theta}_{t-1})$, $\bm{d}_t := \frac{1}{\eta}(\bm{\theta}_{t-1} - \bm{\theta}_t)$, and $\zeta := \frac{1-\beta}{\nu}$. \\
In (iii), $r_t$ controls the weight of past memory, and $a, b > 0$ ensure the Lyapunov function is non-negative and decreasing. \\
For further details, the reader is referred to the respective original works. \\
Note that \citet{kamo2025increasing} uses the same form of Lyapunov function as (i). \\
(i): \citet{NEURIPS2020_d3f5d4de}, (ii): \citet{pmlr-v119-mai20b}, (iii): \citet{10.1214/18-EJS1395}}
\label{tab:lyapunov}
\begin{tabular}{clcc}
\toprule
Work & Lyapunov Function $\mathcal{L}_t$ & Function Properties & Learning Rate $\lambda_t$ \\
\midrule
\textbf{Ours} &
$\displaystyle \mathcal{L}_t = f(\bm{\theta}_t) + A_{t-1} \|\bm{m}_{t-1}\|^2$ & 
\makecell{$L$-smooth} & 
Dynamic \\
\midrule
(i) &
$\displaystyle \mathcal{L}_t = f(\bm{z}_t) - f^\star + \sum_{i=1}^{t-1} c_i \|\bm{\theta}_{t+1-i} - \bm{\theta}_{t-i}\|^2$ &
\makecell{$L$-smooth} &
Constant \\
\midrule
(ii) &
\small$\displaystyle\mathcal{L}_t = F_{\mu}(\bar{\bm{\theta}}_t) + \frac{\nu\zeta^2}{4\mu^2}\|\bm{p}_t\|^2 + \frac{\lambda\zeta^2}{2\mu^2}\|\bm{d}_t\|^2 + \left(\frac{(1-\beta)\zeta^2}{2\mu^2}+\frac{\zeta}{\mu}\right)$ &
\makecell{weakly convex} & 
Constant \\
\midrule
(ii) &
$\displaystyle \mathcal{L}_t = 2f(\bm{\theta}_t)+\frac{\varphi_t}{\nu\lambda^2}+\frac{\zeta}{2}\|\bm{d}_t\|^2$ &
\makecell{$L$-smooth} & 
Constant \\
\midrule
(iii) &
$\displaystyle \mathcal{L}_t = (a+br_{t-1})f(\bm{\theta}_t) +\frac{a}{2r_{t-1}}\|\bm{m}_t\|^2-b\langle\nabla f(\bm{\theta}_t),\bm{m}_t\rangle
$ &
\makecell{$f\in C^2 (\mathbb{R}^d)$, \\bounded Hessian,\\$\|\nabla f (\bm{\theta}) \|^2\leq cf(\bm{\theta})$} & 
$\lambda_t=\lambda/t^\beta$ \\
\bottomrule
\end{tabular}
\end{table*}

\subsection{General Convergence Bound}
\textbf{Technical condition on learning rates:} To ensure the theoretical validity of the convergence analysis, we impose the following mild constraint on the variation of the learning rates:

\begin{align}\label{lr_growth_control}
\frac{\lambda_{t+1}}{\lambda_t} \leq c, 
\end{align}
for some constant $c$ satisfying $1 \leq c < \frac{1}{\beta^2}$. This condition accommodates both decaying and increasing learning-rate schedules:

\begin{itemize}
\item If the schedule is non-increasing, setting $c = 1$ implies $\lambda_{t+1} \leq \lambda_t$.
\item If the schedule is non-decreasing, the learning rates may increase within the range satisfying $\lambda_t \leq \lambda_{t+1} \leq c \lambda_t$.
\end{itemize}

\textbf{Practical Limitations under Large Momentum Coefficients:} When the momentum coefficient $\beta$ is close to $1$, which is standard in deep learning (e.g., $\beta = 0.9$ or $\beta = 0.99$), the condition \eqref{lr_growth_control} becomes highly restrictive. Specifically, $1/\beta^2 \approx 1.235$ for $\beta = 0.9$ and $1/\beta^2 \approx 1.020$ for $\beta = 0.99$, leaving very little room for the learning rate to increase. Furthermore, since $\lambda_{\max}$ is bounded by a term proportional to $1 - c\beta^2$ (see Theorem~\ref{thm:general}), a large momentum forces $\lambda_{\max}$ to be very small in order to maintain monotonic descent of our Lyapunov function. This presents a structural trade-off between momentum stability and aggressive learning rate scheduling.

The following theorem serves as the foundation for all subsequent theoretical results.
\begin{theorem}
[General and unified convergence bound for NSHB]\label{thm:general} Suppose Assumption~\ref{assum:1} holds. Let $\{\bm{\theta}_t\}$ be the sequence generated by Algorithm~\ref{algo:nshb} (NSHB) with learning rates $\lambda_t$ and batch sizes $b_t$. Assume
\[
\lambda_t \in [\lambda_{\min}, \lambda_{\max}] \subset\displaystyle \left[0, \frac{1 - c\beta^2}{L(1-\beta)}\right)
\]
and $\sum_{t=0}^{T-1} \lambda_t \neq 0$. Define
\[
B_T := \frac{1}{\sum_{t=0}^{T-1} \lambda_t}, \quad
V_T := \frac{1}{\sum_{t=0}^{T-1} \lambda_t} \sum_{t=0}^{T-1} \frac{\lambda_t}{b_t}.
\]
Then, for any $T \in \N$, the following bound holds:
\[
\min_{0 \leq t \leq T-1} \E \bigl[ \|\nabla f(\bm{\theta}_t)\|^2 \bigr]
\leq
\frac{2(f(\bm{\theta}_0) - f^\star)}{1-\beta} B_T + \sigma^2 V_T,
\]
where $\E$ denotes the expectation over all randomness up to iteration $T$.
\end{theorem}

\begin{remark}
[Translation from squared gradient bound to gradient norm] The convergence bound in Theorem~\ref{thm:general} controls the squared gradient norm $\E[\|\nabla f(\bm{\theta}_t)\|^2]$. If one wishes to report the bound in terms of the gradient norm $\E[\|\nabla f(\bm{\theta}_t)\|]$, it can be obtained directly via Jensen's inequality:
\[
\min_{0\le t \le T-1} \E[\|\nabla f(\bm{\theta}_t)\|] 
\le \sqrt{\min_{0\le t \le T-1} \E[\|\nabla f(\bm{\theta}_t)\|^2]}.
\]
This remark clarifies that any statements or plots referring to the gradient norm (rather than squared norm) are consistent with Theorem~\ref{thm:general}.
\end{remark}

\textbf{Proof Sketch of Theorem~\ref{thm:general}} (The full proof is provided in Appendix~\ref{sec:B.2}.)

To simplify the exposition, we focus on the case of NSHB with $\lambda_t = \eta_t$.

The main technical challenge in our analysis arises from the cross term,
\[
\E[\langle \nabla f(\bm{\theta}_t), \bm{m}_{t-1} \rangle],
\]
which appears when applying the $L$-smoothness of $f$ to the upper bound $f(\bm{\theta}_{t+1})$, but it is difficult to evaluate directly.

To resolve this issue, we introduce the Lyapunov function (for simplicity, in this sketch we consider only the case $t>0$ in \eqref{lyap}),
\[
\mathcal{L}_t := f(\bm{\theta}_t) + A_{t-1} \|\bm{m}_{t-1}\|^2,
\]
and evaluate its expected difference:
\begin{align*}
\E[\mathcal{L}_{t+1} - \mathcal{L}_t] = \E[f(\bm{\theta}_{t+1}) - f(\bm{\theta}_t)] + A_t \E[\|\bm{m}_t\|^2] - A_{t-1} \E[\|\bm{m}_{t-1}\|^2]. 
\end{align*}
To cancel out the cross terms appearing in the upper bounds of both $\E[f(\bm{\theta}_{t+1}) - f(\bm{\theta}_t)]$ and $A_t \E[\|\bm{m}_t\|^2]$, we define the coefficient $A_t$ as
\[
A_t := \frac{\eta_t - L(1 - \beta) \eta_t^2}{2(1 - \beta)}.
\]
This definition yields a tractable upper bound on $\E[\mathcal{L}_{t+1} - \mathcal{L}_t]$.

Excluding the influence of stochastic noise, the expected Lyapunov function decreases monotonically. Summing over $t$ and rearranging terms leads to the convergence bound stated in Theorem~\ref{thm:general}.

\subsection{Constant Batch-size and Decaying Learning-rate Scheduler}
\label{sec:const_bs_decay_lr}

First, we will consider the setting where the batch size remains fixed throughout training, while the learning rate follows a non-increasing schedule:

\begin{align}\label{scheduler_1}
b_t = b , \quad \lambda_{t+1} \leq \lambda_t \quad (t \in \N).
\end{align}

Let $p > 0$ and $T, E \in \N$, with $0 \leq \lambda_{\min} \leq \lambda_{\max}$. Commonly used decaying learning-rate schedules include:

\textbf{[Constant LR]}
\begin{align}\label{constant}
\lambda_t = \lambda_{\max},
\end{align}
\textbf{[Diminishing LR]}
\begin{align}\label{diminishing}
\lambda_t = \frac{\lambda_{\max}}{\sqrt{t+1}},
\end{align}
\textbf{[Cosine LR]}
\begin{align}\label{cosine}
\lambda_t = \lambda_{\min} + \frac{\lambda_{\max} - \lambda_{\min}}{2} \left(1 + \cos\left(\left\lfloor \frac{t}{K} \right\rfloor \frac{\pi}{E}\right)\right),
\end{align}
\textbf{[Polynomial Decay LR]}
\begin{align}\label{polynomial}
\lambda_t = (\lambda_{\max} - \lambda_{\min}) \left(1 - \frac{t}{T}\right)^p + \lambda_{\min},
\end{align}
where $K = \lceil n / b \rceil$ is the number of iterations per epoch, $E$ is the total number of epochs, and $T = K E$ in the cosine annealing schedule.

Applying Theorem~\ref{thm:general} to the case of a constant batch size and the learning-rate schedules in \eqref{scheduler_1}, we obtain the following explicit convergence bounds. The proof is provided in Appendix~\ref{sec:B.3}.
\begin{corollary}
[Convergence rates under schedule \eqref{scheduler_1}]\label{cor:3.1} Under the assumptions of Theorem~\ref{thm:general}, suppose Algorithm~\ref{algo:nshb} (NSHB) is run with a constant batch size $b_t \equiv b$ and a learning-rate schedule $\{\lambda_t\}$ satisfying \eqref{scheduler_1}. Then, the quantities $B_T$ and $V_T$ defined in Theorem~\ref{thm:general} satisfy
\[
B_T \leq
\begin{cases}
\displaystyle\frac{1}{\lambda_{\max} T}, & [\text{Constant LR \eqref{constant}}]\\[1em]
\displaystyle\frac{1}{2 \lambda_{\max} (\sqrt{T+1} - 1)}, & [\text{Diminishing LR \eqref{diminishing}}]\\[1em]
\displaystyle\frac{2}{(\lambda_{\min} + \lambda_{\max}) T}, & [\text{Cosine LR \eqref{cosine}}]\\[1em]
\displaystyle\frac{p+1}{(p \lambda_{\min} + \lambda_{\max}) T}, & [\text{Polynomial LR \eqref{polynomial}}]
\end{cases}, \qquad V_T = \frac{1}{b}.
\]
As a result, the expected gradient norm under NSHB satisfies
\begin{align*}
\min_{0 \leq t \leq T-1}\E[\|\nabla f(\bm{\theta}_t)\|] 
=\begin{cases}
\displaystyle O\left(\sqrt{\frac{1}{T} + \frac{1}{b}}\right)
& \left[\begin{aligned}&\text{Constant LR \eqref{constant}}\\ &\text{Cosine LR \eqref{cosine}}\\ 
&\text{Polynomial LR \eqref{polynomial}}\end{aligned}\right],\\
\displaystyle O\left(\sqrt{\frac{1}{\sqrt{T}} + \frac{1}{b}}\right)
& [\text{Diminishing LR \eqref{diminishing}}].
\end{cases}
\end{align*}
\end{corollary}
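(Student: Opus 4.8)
The plan is to observe that both quantities in the corollary are controlled by the single sum $\Sigma_T := \sum_{t=0}^{T-1}\lambda_t$. Because the batch size is constant, $b_t \equiv b$, the weighted average defining $V_T$ collapses immediately: $V_T = \frac{1}{\Sigma_T}\sum_{t=0}^{T-1}\frac{\lambda_t}{b} = \frac{1}{\Sigma_T}\cdot\frac{\Sigma_T}{b} = \frac{1}{b}$, with no dependence on the schedule. Hence the only substantive work is to lower bound $\Sigma_T$ for each of the four schedules, since $B_T = 1/\Sigma_T$. Once the four $B_T$ bounds are in hand, the gradient-norm rates follow mechanically: substitute $B_T$ and $V_T = \frac{1}{b}$ into the bound of Theorem~\ref{thm:general}, absorb the constants $2C_{\mathrm{alg}}(f(\bm{\theta}_0)-f^\star)$ and $\sigma^2$ into the $O(\cdot)$, and apply the Jensen inequality of the Remark to pass from $\E[\|\nabla f(\bm{\theta}_t)\|^2]$ to $\E[\|\nabla f(\bm{\theta}_t)\|]$.

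For the Constant schedule the sum is exact, $\Sigma_T = \lambda_{\max}T$, giving $B_T = \frac{1}{\lambda_{\max}T}$. For the Diminishing schedule I would reindex $\Sigma_T = \lambda_{\max}\sum_{k=1}^{T}k^{-1/2}$ and apply the standard integral comparison for a decreasing summand, $\sum_{k=1}^{T}k^{-1/2} \ge \int_1^{T+1}x^{-1/2}\,dx = 2(\sqrt{T+1}-1)$, which yields the claimed $B_T$. The Polynomial schedule is handled by the same idea after a reflection: writing $\Sigma_T = (\lambda_{\max}-\lambda_{\min})T^{-p}\sum_{k=1}^{T}k^p + \lambda_{\min}T$ via the substitution $k = T-t$, and bounding $\sum_{k=1}^{T}k^p \ge \int_0^T x^p\,dx = \frac{T^{p+1}}{p+1}$, gives $\Sigma_T \ge \frac{T}{p+1}(\lambda_{\max}+p\lambda_{\min})$ and hence the stated bound. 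These two cases are routine once the monotonicity of the summand is used to orient the integral comparison.

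The Cosine schedule is the one requiring care, and I expect it to be the main obstacle, because of the floor $\lfloor t/K\rfloor$ and the oscillating cosine. The key structural observation is that within each epoch the argument $\lfloor t/K\rfloor$ is constant, so (using $T = KE$) the sum factors as $\Sigma_T = K\sum_{j=0}^{E-1}\bigl[\lambda_{\min} + \tfrac{\lambda_{\max}-\lambda_{\min}}{2}(1+\cos(j\pi/E))\bigr]$. Everything then hinges on controlling $\sum_{j=0}^{E-1}\cos(j\pi/E)$; rather than merely bounding it, I would evaluate it in closed form using the Dirichlet-kernel identity $\sum_{j=0}^{E-1}\cos(j\theta) = \frac{\sin(E\theta/2)}{\sin(\theta/2)}\cos\bigl((E-1)\theta/2\bigr)$ with $\theta = \pi/E$, which simplifies (since $\sin(\pi/2)=1$ and $\cos(\pi/2 - \pi/(2E)) = \sin(\pi/(2E))$) to exactly $1$. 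In particular the cosine sum is nonnegative, so dropping it gives $\Sigma_T \ge K E \cdot \frac{\lambda_{\min}+\lambda_{\max}}{2} = \frac{(\lambda_{\min}+\lambda_{\max})T}{2}$, which is precisely the claimed $B_T$. Assembling the four cases, $B_T = O(1/T)$ for Constant, Cosine, and Polynomial, while $B_T = O(1/\sqrt{T})$ for Diminishing since $\frac{1}{\sqrt{T+1}-1} = O(1/\sqrt{T})$; adding the schedule-independent term $V_T = \frac{1}{b}$ and then taking the square root via the Remark yields the two stated rates.
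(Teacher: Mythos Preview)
Your proposal is correct and follows essentially the same route as the paper: immediate reduction of $V_T$ to $1/b$, then lower-bounding $\Sigma_T$ schedule by schedule via exact evaluation (Constant), integral comparison (Diminishing, Polynomial), and epoch-wise factoring plus closed-form evaluation of the cosine sum (Cosine). Your Dirichlet-kernel computation of $\sum_{j=0}^{E-1}\cos(j\pi/E)=1$ is in fact more explicit than the paper's corresponding step, but the overall strategy is identical.
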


\textbf{Interpretation of the Variance Floor and Technical Limitations:} From Corollary~\ref{cor:3.1}, the variance term $V_T = 1/b$ remains as a constant floor that does not vanish as $T \to \infty$. We emphasize that this is a limitation of our specific Lyapunov analysis, not an inherent property of SGDM.

In conventional analyses, the variance bound scales with the squared step size (e.g., $O(\eta^2/b)$ under a constant learning rate \citep{NEURIPS2020_d3f5d4de, kamo2025increasing}, or $\sum \lambda_t^2/b_t$ under dynamic schedules \citep{umeda2025increasing}), which vanishes as the step size decreases. In contrast, our Lyapunov function \eqref{lyap} cancels the cross-term $\E[\langle \nabla f(\bm{\theta}_t), \bm{m}_{t-1} \rangle]$ via the coefficient $A_t$, at the cost of a variance term $V_T$ that depends linearly on $\lambda_t$ rather than $\lambda_t^2$. When $b_t \equiv b$, the learning rates cancel out, leaving $V_T = 1/b$ regardless of the decay schedule.

Therefore, our framework requires a dynamically increasing batch size to achieve convergence to zero; this represent a trade-off between proof simplicity and strict convergence under constant batch sizes.

\subsection{Increasing Batch-size and Decaying Learning-rate Scheduler}
\label{sec:incr_bs_decay_lr}
Next, we consider the setting where the batch size increases over time while the learning rate decreases, i.e.,
\begin{align}\label{scheduler_2}
b_t \leq b_{t+1}, \quad \lambda_{t+1} \leq \lambda_t \quad (t \in \N).
\end{align}
We introduce the total number of phases \(M\in\mathbb{N}\) (also referred to as the number of batch size updates) up front and index the phases by \(m=0,\dots,M-1\). For each phase \(m\), let \(E_m\in\mathbb{N}\) denote the number of epochs in phase \(m\), and let \(K_m\in\mathbb{N}\) denote the number of steps per epoch in that phase. Therefore, the \(m\)-th phase contains \(K_m E_m\) iterations. For convenience, we define the phase-indexed step sets
\[
S_m := \left\{t\in\mathbb{N} \bigg| \sum_{k=0}^{m-1} K_k E_k \leq t < \sum_{k=0}^{m} K_k E_k\right\},
\]
with the convention \(\sum_{k=0}^{-1}(\cdot)=0\). The total number of iterations is
\begin{align}\label{eq:total_steps}
T := \sum_{m=0}^{M-1} K_m E_m.
\end{align}
For any \(m\) and \(t\in S_m\), the batch size \(b_t\) may be specified in a phase-wise manner.

\textbf{[Exponentially Growing Batch Size]}
\begin{align}\label{exponential_bs}
b_t = \delta^{m \left\lceil \frac{t}{\sum_{k=0}^{m} K_{k} E_{k}} \right\rceil} b_0,
\end{align}
where \(\delta>1\) is the growth factor and \(b_0>0\) is the initial batch size. For example, setting $\delta = 2$ corresponds to doubling the batch size at each phase. Specifically, the sequence of batch sizes can be represented as
\[ 
\underbrace{
\begin{aligned}
\underbrace{b_0 \delta^0, \ldots, b_0 \delta^0}_{K_0 E_0}, \underbrace{b_0 \delta^1, \ldots, b_0 \delta^1}_{K_1 E_1}, \ldots, \underbrace{b_0 \delta^m, \ldots, b_0 \delta^m}_{K_m E_m}, \ldots, \underbrace{b_0 \delta^{M-1}, \ldots, b_0 \delta^{M-1}.}_{K_{M-1} E_{M-1}}
\end{aligned}
}_{T=\sum_{m=0}^{M-1}K_mE_m}
\]
This phase-based schedule follows \citet{l.2018dont, umeda2025increasing, kamo2025increasing}.

Applying Theorem~\ref{thm:general} to the setting of increasing batch sizes and decaying learning rates in \eqref{scheduler_2}, we derive the following convergence bounds. The proof is given in Appendix~\ref{sec:B.4}.
\begin{corollary}
[Convergence rates under schedule \eqref{scheduler_2}]\label{cor:3.2} Under the assumptions of Theorem~\ref{thm:general}, suppose Algorithm~\ref{algo:nshb} (NSHB) is run with batch sizes $\{b_t\}$ and learning rates $\{\lambda_t\}$ following \eqref{scheduler_2}. For $M \in \N$, let $T = \sum_{m=0}^{M-1} K_m E_m$, $E_{\max} = \sup_{m} E_m$, and $K_{\max} = \sup_{m} K_m$. Then, $B_T$ and $V_T$ from Theorem~\ref{thm:general} satisfy the bounds below, where the bound for $B_T$ is defined in Corollary~\ref{cor:3.1}, and $V_T$ is bounded by:
\[
V_T \leq
\begin{cases}
\displaystyle\frac{\delta K_{\max} E_{\max}}{(\delta - 1) b_0 T}, & [\text{Constant LR \eqref{constant}}],\\[1em]
\displaystyle\frac{\delta K_{\max} E_{\max}}{2 (\delta - 1) b_0 (\sqrt{T+1} - 1)}, & [\text{Diminishing LR \eqref{diminishing}}],\\[1em]
\displaystyle\frac{2 \delta \lambda_{\max} K_{\max} E_{\max}}{(\delta - 1)(\lambda_{\min} + \lambda_{\max}) b_0 T}, & [\text{Cosine LR \eqref{cosine}}],\\[1em]
\displaystyle\frac{(p+1) \delta \lambda_{\max} K_{\max} E_{\max}}{(\delta - 1)(\lambda_{\max} + p \lambda_{\min}) b_0 T}, & [\text{Polynomial LR \eqref{polynomial}}].
\end{cases}
\]
As a result, the expected gradient norm under NSHB satisfies
\begin{align*}
\min_{0 \leq t \leq T-1} \E[\|\nabla f(\bm{\theta}_t)\|] 
=
\begin{cases}
\displaystyle O\left(\frac{1}{\sqrt{T}}\right), 
& \left[\begin{aligned}&\text{Constant LR \eqref{constant}, Cosine LR \eqref{cosine},}\\ &\qquad\quad\text{Polynomial LR \eqref{polynomial}}\end{aligned}\right],\\
\displaystyle O\left(\frac{1}{T^{1/4}}\right), 
& [\text{Diminishing LR \eqref{diminishing}}].
\end{cases}
\end{align*}
\end{corollary}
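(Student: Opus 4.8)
The plan is to obtain the two quantities $B_T$ and $V_T$ of Theorem~\ref{thm:general} separately and then combine them. The first observation is that $B_T = 1/\sum_{t=0}^{T-1}\lambda_t$ depends only on the learning-rate sequence and is completely insensitive to the batch sizes. Since the schedules in \eqref{scheduler_2} employ exactly the same non-increasing learning-rate families \eqref{constant}--\eqref{polynomial} already treated in Corollary~\ref{cor:3.1}, the four bounds on $B_T$ derived there carry over verbatim; this is precisely why the statement says the $B_T$ bound is ``defined in Corollary~\ref{cor:3.1}.'' No new estimation is needed for this component.

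The substantive step is bounding the variance factor, which I would write as $V_T = B_T \sum_{t=0}^{T-1}\lambda_t/b_t$. First I would decompose the weighted sum over the $M+1$ phases: because $b_t = \delta^m b_0$ is held constant on each phase $S_m$ (of length $K_m E_m$), we have
\[
\sum_{t=0}^{T-1}\frac{\lambda_t}{b_t} = \sum_{m=0}^{M}\frac{1}{\delta^m b_0}\sum_{t\in S_m}\lambda_t.
\]
Next, using that each of the admissible schedules satisfies $\lambda_t \le \lambda_{\max}$ (the maximum of the cosine and polynomial forms is attained at the first step and equals $\lambda_{\max}$), together with $K_m E_m \le K_{\max}E_{\max}$, every inner sum is bounded uniformly by $K_{\max}E_{\max}\lambda_{\max}$. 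Factoring this out leaves the geometric series $\sum_{m=0}^{M}\delta^{-m} \le \delta/(\delta-1)$, yielding
\[
\sum_{t=0}^{T-1}\frac{\lambda_t}{b_t} \le \frac{\delta\,\lambda_{\max}K_{\max}E_{\max}}{(\delta-1)\,b_0}.
\]

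Multiplying this schedule-independent bound by the schedule-specific $B_T$ from Corollary~\ref{cor:3.1} reproduces the four stated expressions for $V_T$: the $\lambda_{\max}$ factor cancels against the $1/\lambda_{\max}$ inside $B_T$ in the constant and diminishing cases, but survives for cosine and polynomial decay, whose $B_T$ carries a $1/(\lambda_{\min}+\lambda_{\max})$-type denominator instead. Finally I would substitute both bounds into Theorem~\ref{thm:general}. The decisive feature, in contrast to Corollary~\ref{cor:3.1}, is that $V_T$ now inherits the \emph{same} asymptotic order in $T$ as $B_T$: both are $O(1/T)$ for the constant/cosine/polynomial schedules and both are $O(1/\sqrt{T})$ for the diminishing one. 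Hence $\min_{t}\E[\|\nabla f(\bm{\theta}_t)\|^2]$ is $O(1/T)$ or $O(1/\sqrt{T})$ respectively, and the stated gradient-norm rates $O(1/\sqrt{T})$ and $O(1/T^{1/4})$ follow by Jensen's inequality, as noted in the Remark.

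The main obstacle is conceptual rather than computational: one must recognize that the phase-based geometric growth of $b_t$ exactly offsets the linear growth of the step count per phase, so that the noise term ceases to stagnate at the constant $1/b$ of Corollary~\ref{cor:3.1} and instead decays at the rate of $B_T$. Once the per-phase decomposition is in place, the only genuine inequalities required are the uniform bound $\sum_{t\in S_m}\lambda_t \le K_{\max}E_{\max}\lambda_{\max}$ and the summation of the geometric series in $\delta$; the remainder is bookkeeping against the $B_T$ estimates already established.
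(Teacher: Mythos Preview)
Your proposal is correct and follows essentially the same route as the paper. Both arguments rest on the phase decomposition $\sum_t \lambda_t/b_t = \sum_m (\delta^m b_0)^{-1}\sum_{t\in S_m}\lambda_t$, the crude bound $\lambda_t\le\lambda_{\max}$ (equivalently $1/\sqrt{t+1}\le 1$ in the diminishing case), the phase-length bound $|S_m|\le K_{\max}E_{\max}$, and the geometric sum $\sum_m\delta^{-m}\le\delta/(\delta-1)$; the paper merely splits the four cases explicitly whereas you handle them by one uniform inequality and then multiply by the schedule-specific $B_T$, which is a cosmetic difference only.
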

Since the batch size $b_t$ increases over time, the variance term $V_T = \frac{1}{\sum_{t=0}^{T-1} \lambda_t} \sum_{t=0}^{T-1} \frac{\lambda_t}{b_t}$ vanishes as $T \to \infty$, unlike in the constant batch size case. Consequently, we have
\[
\min_{0 \leq t \leq T-1} \E\left[\|\nabla f(\bm{\theta}_t)\|\right] \to 0 \quad \text{as } T \to \infty,
\]
showing that using growing batch sizes during training can remove the variance floor and guarantee convergence under suitable learning rates.

Next, we consider the case where the batch size is updated $M$ times. In this case, the total number of iterations is given by \eqref{eq:total_steps} as $T = \sum_{m=0}^{M-1} K_m E_m$. Here, the number of iterations per phase satisfies $K_m \ge 1$, and we define $E_{\min} := \inf_m E_m > 0$. Then, the following inequality holds:
\[
T = \sum_{m=0}^{M-1} K_m E_m \ge \sum_{m=0}^{M-1} E_m \ge ME_{\min}.
\]  
Therefore, the convergence rate with respect to the number of updates $M$ is
\begin{align}\label{eq:cor3.2_by_M}
\min_{0 \leq t \leq T-1} \mathbb{E}[\|\nabla f(\bm{\theta}_t)\|] = O\left(\frac{1}{\sqrt{T}}\right) = O\left(\frac{1}{\sqrt{M}}\right).
\end{align}

\subsection{Increasing Batch-size and Increasing Learning-rate Scheduler}
\label{sec:incr_bs_incr_lr}
Next, we consider the setting in which both the batch size and the learning rate increase over time:
\begin{align}\label{scheduler_3}
b_t \leq b_{t+1}, \quad \lambda_t \leq \lambda_{t+1} \quad (t \in \N).
\end{align}
For any $m$ and $t \in S_m$, the batch sizes and learning rates are, for example, given by:

\textbf{[Exponential Growth of Batch Size and Learning Rate]}
\begin{align}\label{exp_bs_lr}
b_t = \delta^{m \left\lceil \frac{t}{\sum_{k=0}^{m} K_{k} E_{k}} \right\rceil} b_0, \quad
\lambda_t = \gamma^{m \left\lceil \frac{t}{\sum_{k=0}^{m} K_{k} E_{k}} \right\rceil} \lambda_0,
\end{align}
where $\delta, \gamma > 1$ with $\gamma < \delta$. Here, $b_0 > 0$ and $\lambda_0 > 0$ denote the initial batch size and learning rate, respectively.

Applying Theorem~\ref{thm:general} to the setting defined by \eqref{scheduler_3} and the learning rate growth condition \eqref{lr_growth_control}, we obtain the following convergence bounds. The proof is given in Appendix~\ref{sec:B.5}.
\begin{corollary}
[Convergence rates under schedule \eqref{scheduler_3}]\label{cor:3.3} Under the assumptions of Theorem~\ref{thm:general}, suppose Algorithm~\ref{algo:nshb} (NSHB) is run with batch sizes $\{b_t\}$ and learning rates $\{\lambda_t\}$ satisfying \eqref{scheduler_3} and \eqref{lr_growth_control}. For all $M \in \N$, here, $T$, $E_{\max}$, and $K_{\max}$ are as defined in Corollary~\ref{cor:3.2}. Let $E_{\min} = \inf_{M \in \N} \inf_{m \in [0:M-1]} E_m < +\infty$, $K_{\min} = \inf_{M \in \N} \inf_{m \in [0:M-1]} K_m < +\infty$, and $\hat{\gamma}=\frac{\gamma}{\delta} < 1$. Then, the quantities $B_T$ and $V_T$ in Theorem~\ref{thm:general} satisfy the bounds:
\[
B_T \leq \frac{\delta^2}{\lambda_0 K_{\min} E_{\min} \gamma^{M}}, \qquad
V_T \leq \frac{K_{\max} E_{\max} \lambda_0 \delta^2}{K_{\min} E_{\min} b_0 (1 - \hat{\gamma}) \gamma^{M}}.
\]
As a result, the expected gradient norm under NSHB satisfies
\[
\min_{0 \leq t \leq T-1} \E[\|\nabla f(\bm{\theta}_t)\|] = O\left( \frac{1}{\gamma^{M/2}} \right).
\]
\end{corollary}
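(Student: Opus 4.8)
The plan is to exploit the phase structure of schedule~\eqref{exp_bs_lr} to reduce both $B_T$ and $V_T$ to finite geometric sums, and then to invoke Theorem~\ref{thm:general} together with the Jensen step noted in the Remark following it. First I would observe that for $t \in S_m$ the ceiling $\lceil t / \sum_{k=0}^m K_k E_k \rceil$ equals $1$, so the learning rate and batch size are \emph{constant} within each phase, namely $\lambda_t = \gamma^m \lambda_0$ and $b_t = \delta^m b_0$, and the phase $S_m$ contains exactly $K_m E_m$ steps. Consequently both defining sums in Theorem~\ref{thm:general} split cleanly across phases:
\[
\sum_{t=0}^{T-1} \lambda_t = \lambda_0 \sum_{m=0}^{M} K_m E_m \gamma^m, \qquad \sum_{t=0}^{T-1} \frac{\lambda_t}{b_t} = \frac{\lambda_0}{b_0} \sum_{m=0}^{M} K_m E_m \hat{\gamma}^m,
\]
where $\hat{\gamma} = \gamma/\delta$.

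For $B_T$ the task is a \emph{lower} bound on $\sum_t \lambda_t$. Since every term is nonnegative and $\gamma > 1$, I would discard all but the final phase and use $K_M E_M \geq K_{\min} E_{\min}$ to get $\sum_t \lambda_t \geq \lambda_0 K_{\min} E_{\min} \gamma^M$, whence $B_T \le (\lambda_0 K_{\min} E_{\min} \gamma^M)^{-1}$; the stated $\delta^2$ factor is harmless slack relative to this estimate. This already shows $B_T = O(\gamma^{-M})$.

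For $V_T = B_T \sum_t \lambda_t/b_t$ I would instead need an \emph{upper} bound on the second sum. Here the hypothesis $\gamma < \delta$, i.e.\ $\hat{\gamma} < 1$, is decisive: bounding $K_m E_m \le K_{\max} E_{\max}$ and summing the geometric series gives $\sum_{m=0}^M \hat{\gamma}^m \le (1-\hat{\gamma})^{-1}$, so $\sum_t \lambda_t/b_t \le \lambda_0 K_{\max} E_{\max} / (b_0(1-\hat{\gamma}))$. Multiplying by the bound on $B_T$ then yields $V_T = O(\gamma^{-M})$, matching the claimed form up to constants. I expect this geometric-series step to be the place where the structural assumption really bites, and hence the main obstacle: increasing $\lambda$ inflates the numerator of $V_T$, and only the faster geometric growth of $b_t$ (the condition $\gamma < \delta$) keeps the variance sum summable and forces the variance floor to vanish as $M$ grows.

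Finally, I would substitute both $O(\gamma^{-M})$ bounds into the master inequality of Theorem~\ref{thm:general}, obtaining $\min_{t} \E[\|\nabla f(\bm{\theta}_t)\|^2] \le 2 C_{\mathrm{alg}}(f(\bm{\theta}_0)-f^*) B_T + \sigma^2 V_T = O(\gamma^{-M})$, and then apply the Jensen inequality from the Remark to pass to $\min_t \E[\|\nabla f(\bm{\theta}_t)\|] = O(\gamma^{-M/2})$. One caveat I would flag is that, since $\lambda_{\max} = \gamma^M \lambda_0$ must still lie below the $L$-smoothness threshold of Theorem~\ref{thm:general}, the bound is to be read phase-for-phase in $M$ subject to that admissibility constraint rather than as an unrestricted $M \to \infty$ limit; the content of the corollary is that this schedule decays geometrically in $M$, and hence strictly faster than the polynomial-in-$T$ rates obtained in Corollaries~\ref{cor:3.1}--\ref{cor:3.2}.
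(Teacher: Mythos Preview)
Your proposal is correct and follows essentially the same route as the paper: decompose both sums phase-by-phase, bound the phase lengths by $K_{\min}E_{\min}$ / $K_{\max}E_{\max}$, reduce to geometric sums in $\gamma$ and $\hat{\gamma}=\gamma/\delta$, and then feed the resulting $O(\gamma^{-M})$ bounds on $B_T$ and $V_T$ into Theorem~\ref{thm:general} and the Jensen step. The only cosmetic difference is in the lower bound on $\sum_t \lambda_t$: you keep just the last-phase term $\lambda_0 K_{\min}E_{\min}\gamma^M$, whereas the paper retains the full geometric sum and then weakens it via $\tfrac{\gamma^M-1}{\gamma-1} > \gamma^M/\gamma^2 > \gamma^M/\delta^2$ to manufacture the stated $\delta^2$ factor---your version is actually tighter and more direct.
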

This result shows that using exponentially increasing batch sizes and learning rates yields an exponentially fast decay in the expected gradient norm.

In contrast, under the schedule with increasing batch sizes and a decaying learning rate in \eqref{scheduler_2}, the convergence rate remains polynomial, as shown in \eqref{eq:cor3.2_by_M}, and scales as \(O(1/\sqrt{M})\) with respect to the number of updates \(M\). The exponential schedule \eqref{scheduler_3}, however, achieves a much faster convergence rate of \(O(\gamma^{-M/2})\), which is asymptotically superior.

\section{Experiments}
\label{sec:4}
We evaluated two momentum-based optimization algorithms—stochastic heavy ball (SHB) and its normalized variant (NSHB)—on CIFAR-100 using ResNet-18. Unless stated otherwise, we set the momentum coefficient to $\beta = 0.9$ and trained all models for 300 epochs. The experiments were conducted on a system equipped with dual Intel Xeon Silver 4316 CPUs and NVIDIA Tesla A100 80GB GPUs. The software environment consisted of Python 3.8.2, CUDA 12.2, and PyTorch 2.4.1.

We considered the following four training schedules:
\begin{enumerate}
\item[(i)] Constant batch size with decaying learning rate,
\item[(ii)] Increasing batch size with decaying learning rate,
\item[(iii)] Increasing batch size with increasing learning rate,
\item[(iv)] Increasing batch size with warmup learning rate.
\end{enumerate}

\begin{figure}[htbp]
\centering
\begin{minipage}{0.49\linewidth}
\centering
\includegraphics[width=\linewidth]{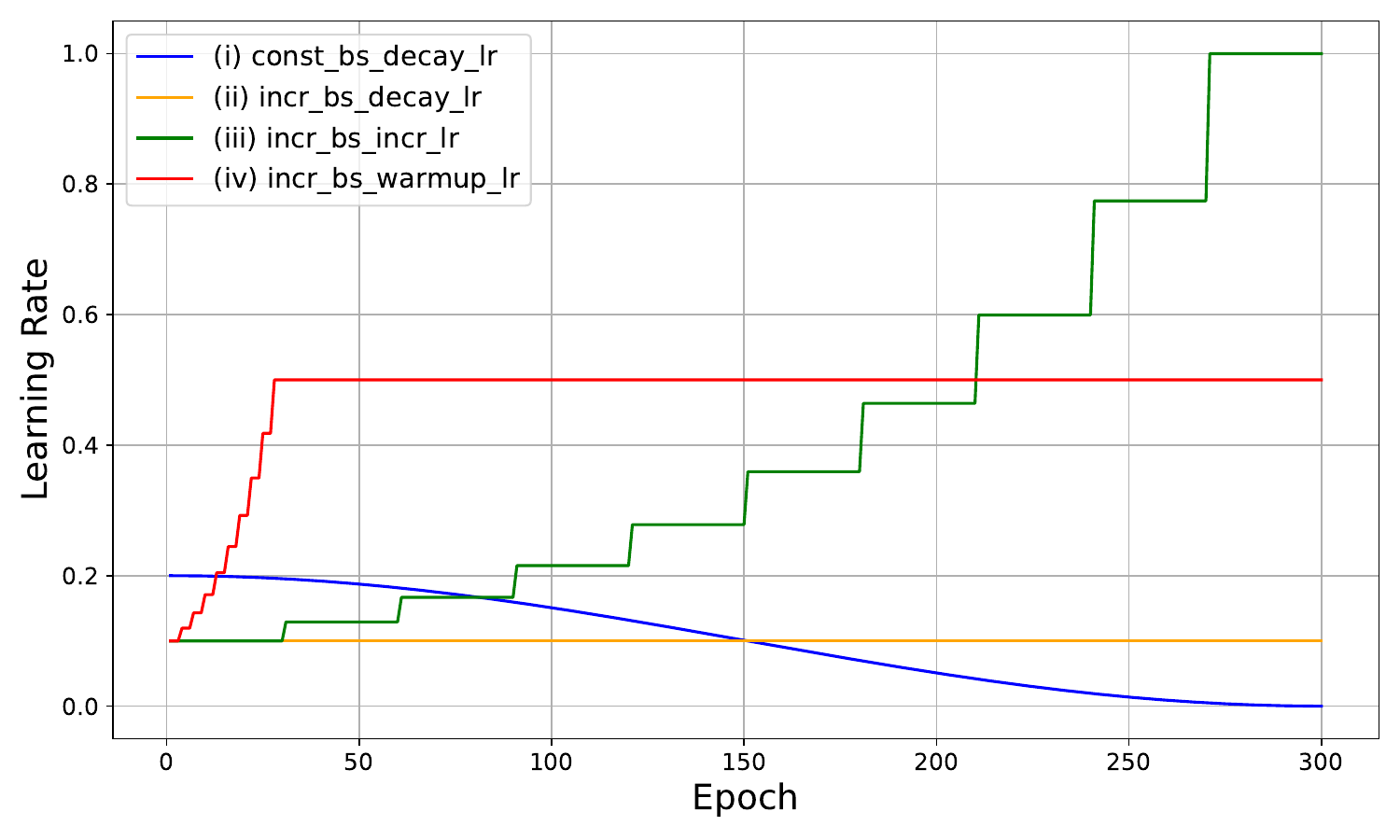}
\caption*{Learning-rate schedules}
\end{minipage}
\hfill
\begin{minipage}{0.49\linewidth}
\centering
\includegraphics[width=\linewidth]{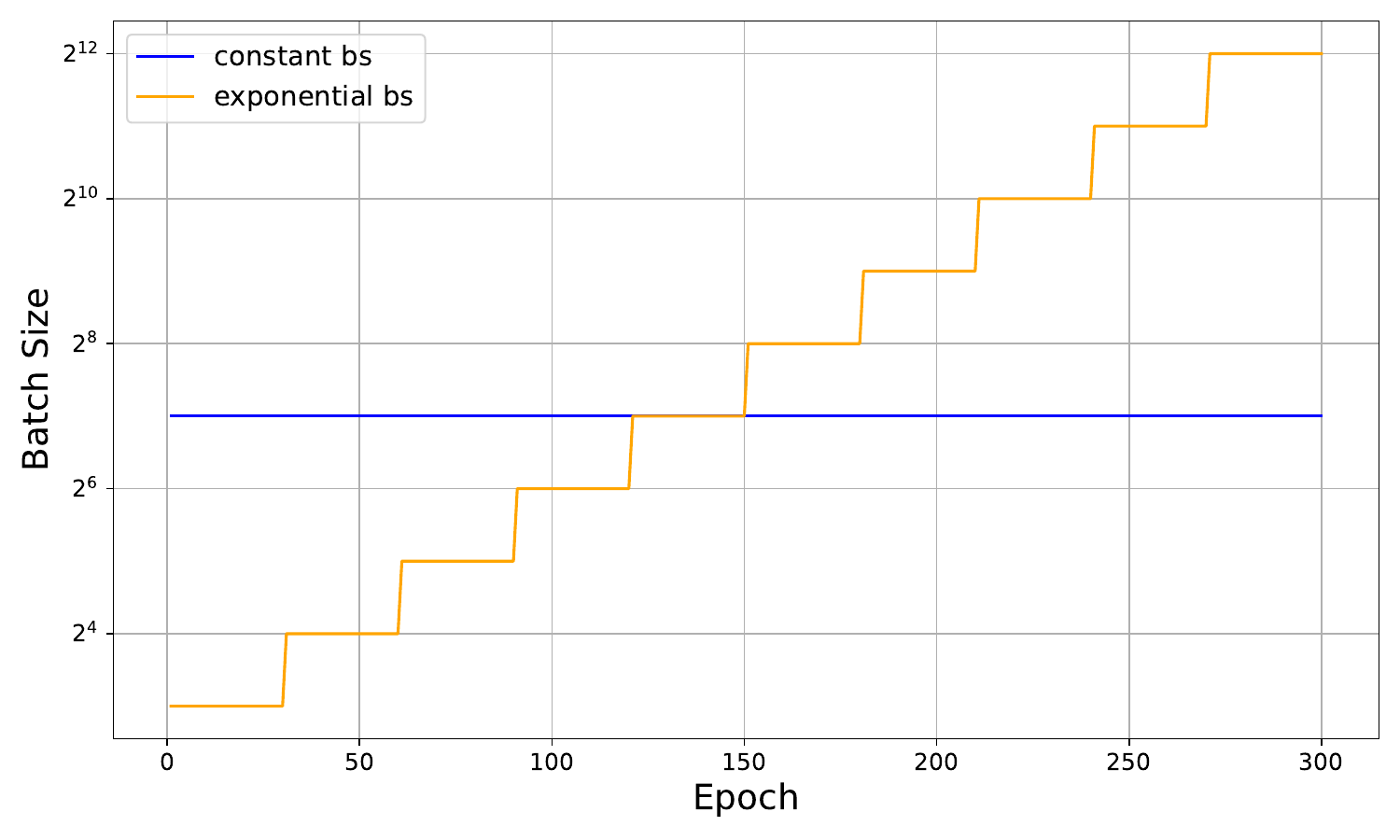}
\caption*{Batch-size schedules}
\end{minipage}
\caption{Learning rate (left) and batch size (right) schedules for the four training configurations (i)--(iv) used in our experiments. Results for other learning rate schedules (e.g., diminishing \eqref{diminishing} and polynomial decay \eqref{polynomial}) are provided in Appendix \ref{sec:C}.}
\label{fig:schedules}
\end{figure}

\textbf{On the Synchronization Mismatch between Theory and Experiments:} Our theoretical framework assumes that the learning rate $\lambda_t$ and batch size $b_t$ are updated synchronously at phase boundaries. In the warmup schedule (iv), however, these updates are decoupled: the learning rate is updated at 3-epoch intervals during the warmup phase, while the batch size scales up at 30-epoch intervals. This choice is motivated by the critical batch size \citep{JMLR:v20:18-789, pmlr-v202-sato23b, Imaizumi19062024}: increasing the batch size as frequently as the learning rate risks exceeding the critical threshold prematurely, causing most of the training to be conducted at a very large batch size, which can degrade performance. The synchronized update setting is analyzed in Appendix~\ref{sec:incr_bs_warmup_lr}.

The solid lines in the figures show the mean over three runs; the shaded areas indicate the range between the maximum and minimum values. We report the training loss, test accuracy, and full gradient norm $\|\nabla f(\bm{\theta}_e)\|$ versus the number of epochs, where $\bm{\theta}_e$ denotes the model parameters at the end of each epoch. The full gradient norm was computed at the end of each epoch by averaging the gradients over all mini-batches to reconstruct the gradient over the entire training set and subsequently taking its L2 norm.

Figures~\ref{fig:nshb_summary}--\ref{fig:shb_summary} summarize representative training dynamics under the four schedules considered in this work; detailed settings and additional runs are provided in Appendix~\ref{sec:C}. With respect to the gradient norm, both methods show the same ordering across schedules: (i) worst, (ii) intermediate, (iii) better intermediate, and (iv) best; this corroborates our theoretical predictions. As for test accuracy, largely orthogonal to convergence guarantees, both methods show similar trends: increasing the batch size generally improves generalization, and schedule (iv) attains the best accuracy. Additionally, for SHB, a trade-off is observed between learning rate regimes: the high learning rate reduces the gradient norm faster, whereas the low learning rate yields better test accuracy.

\noindent
\begin{minipage}[t]{0.49\linewidth}
\begin{figure}[H]
 \centering
 \begin{minipage}{1.0\linewidth}
 \centering
 \includegraphics[width=\linewidth]{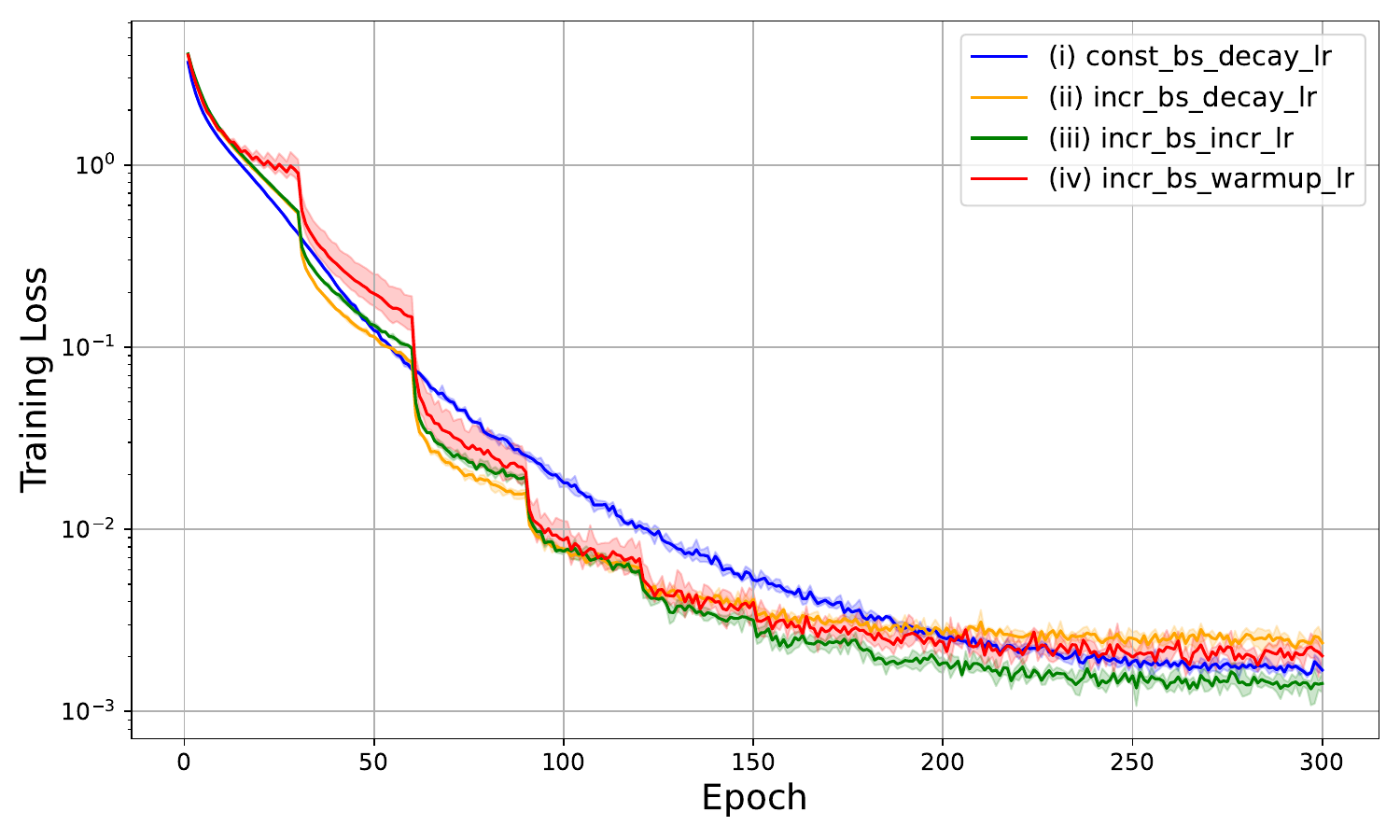}
 \caption*{Training loss}
 \end{minipage}

 \begin{minipage}{1.0\linewidth}
 \centering
 \includegraphics[width=\linewidth]{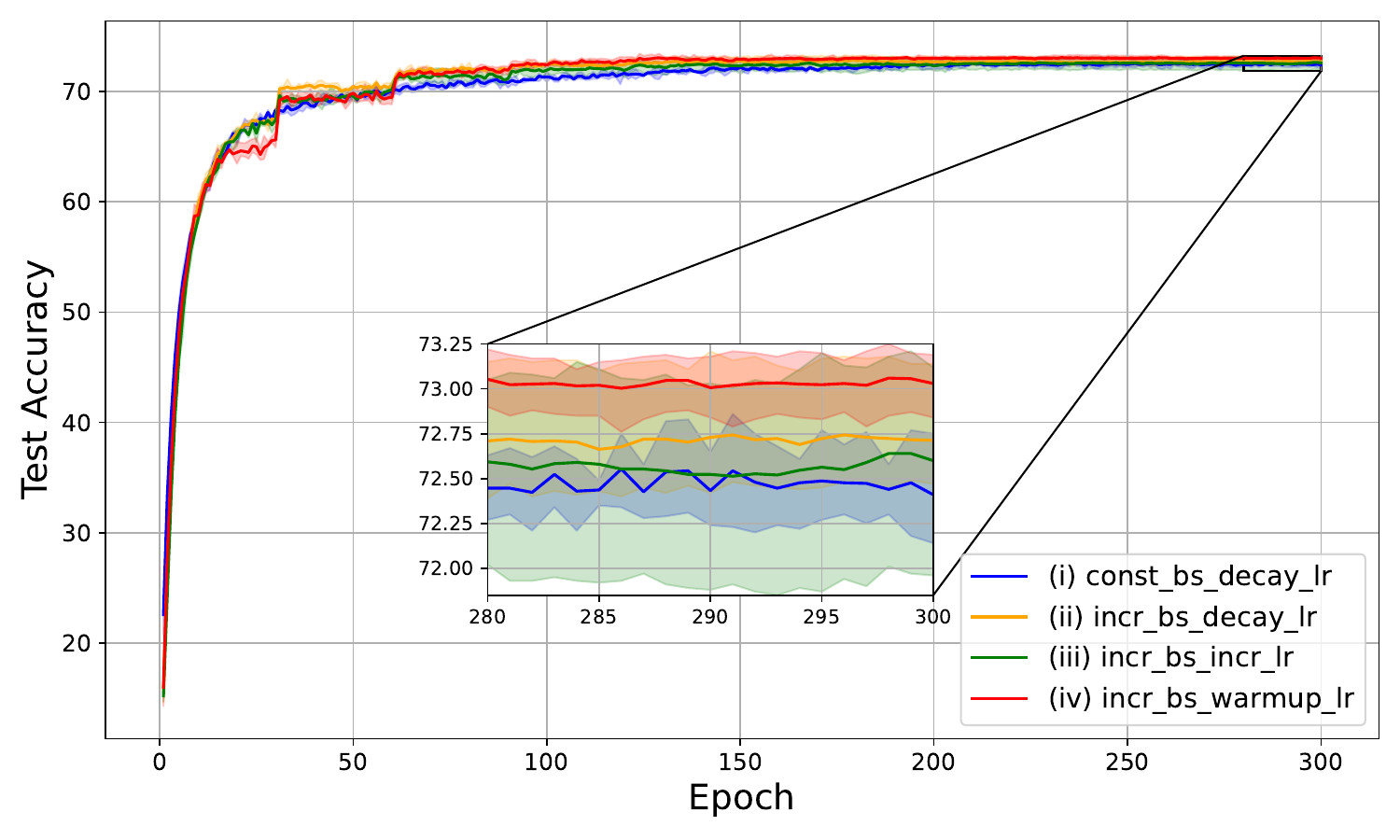}
 \caption*{Test accuracy}
 \end{minipage}

 \begin{minipage}{1.0\linewidth}
 \centering
 \includegraphics[width=\linewidth]{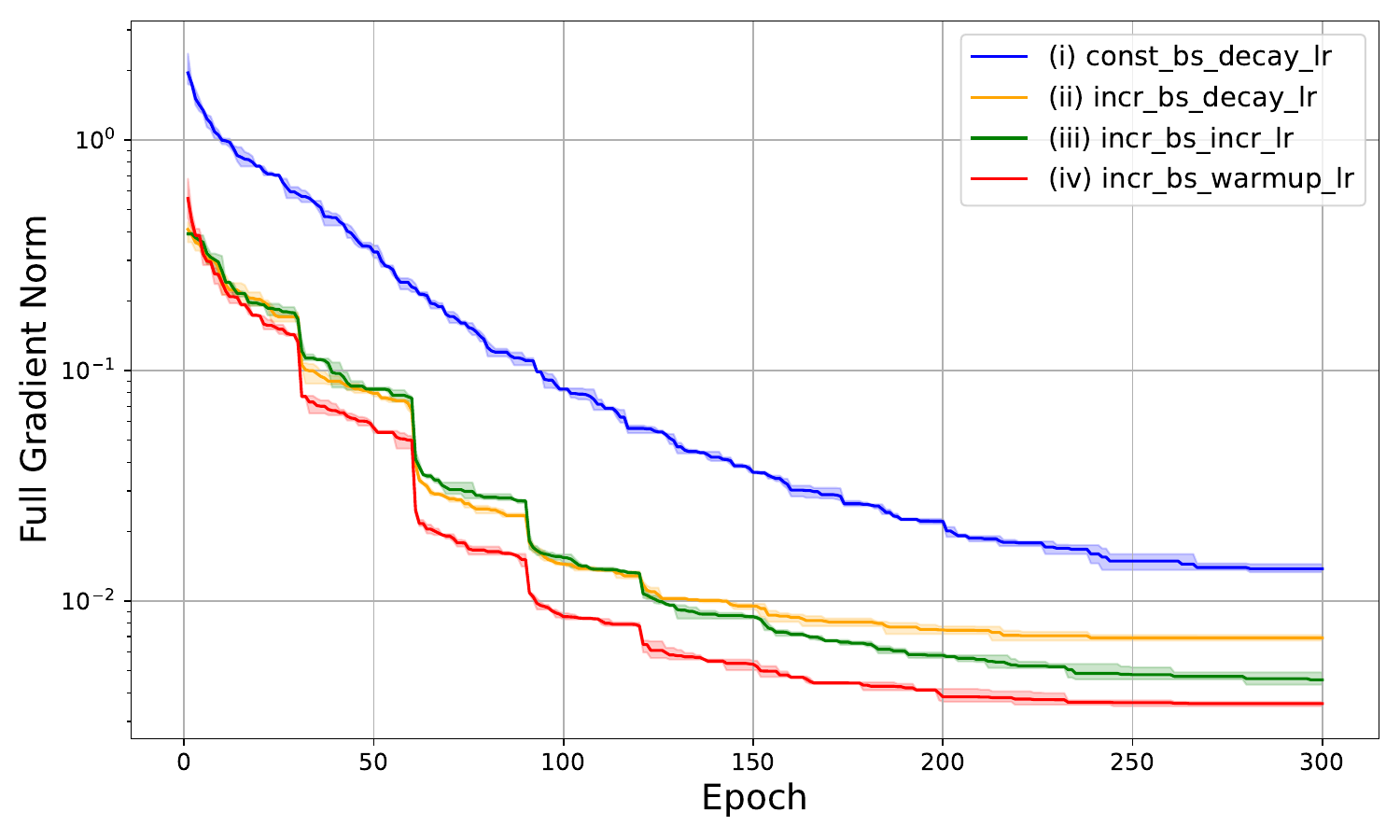}
 \caption*{Full gradient norm}
 \end{minipage}

 \caption{Results of NSHB: representative training dynamics. Each panel shows (top) training loss, (middle) test accuracy, and (bottom) gradient norm. For clarity, one representative instance is shown for each of four schedules; schedule (i) uses the Cosine LR \eqref{cosine}, and schedule (ii) uses the Constant LR \eqref{constant}. Full results for all schedules and additional runs are provided in Appendix~\ref{sec:C.1}.}
 \label{fig:nshb_summary}
\end{figure}
\end{minipage}
\hfill
\begin{minipage}[t]{0.49\linewidth}
\begin{figure}[H]
 \centering
 \begin{minipage}{1.0\linewidth}
 \centering
 \includegraphics[width=\linewidth]{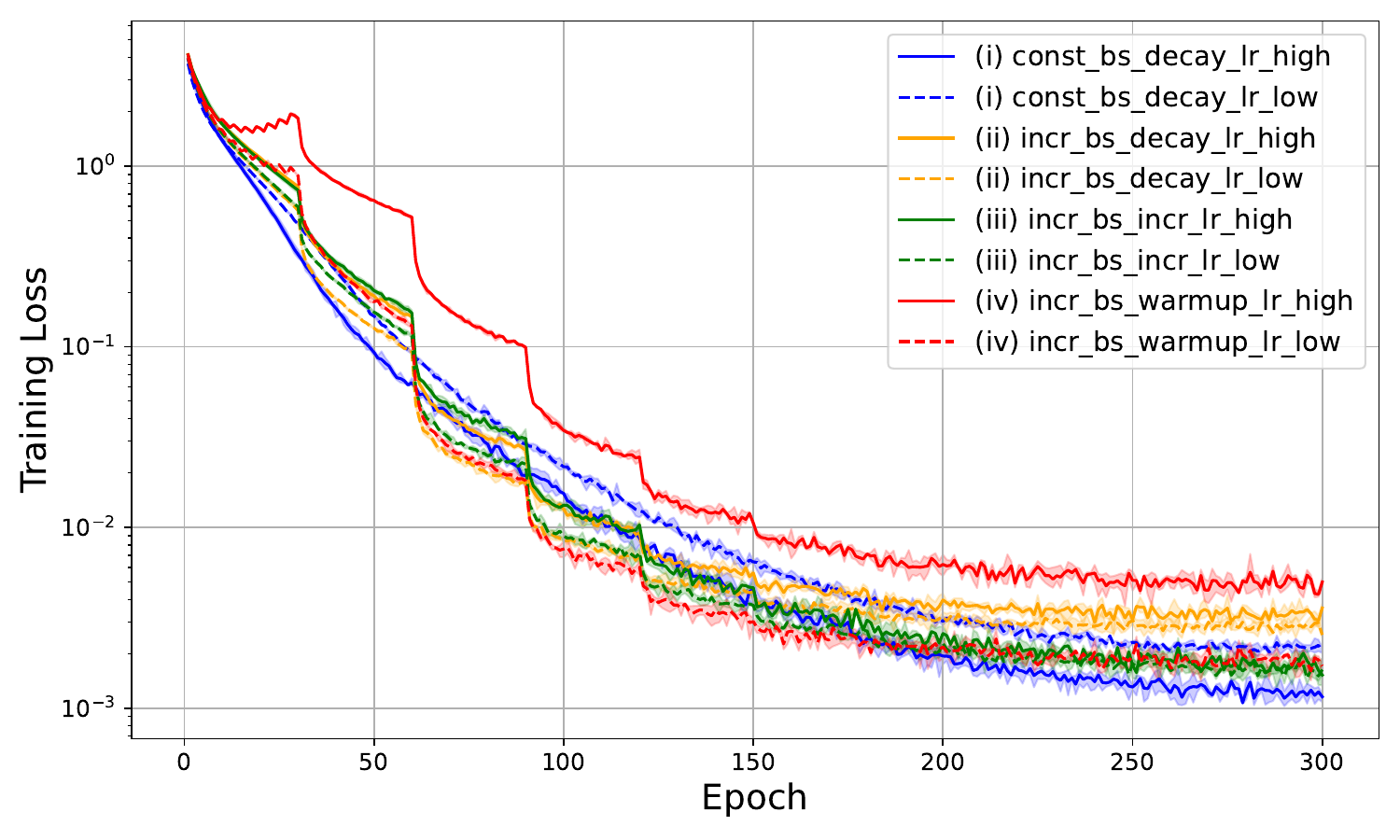}
 \caption*{Training loss}
 \end{minipage}

 \begin{minipage}{1.0\linewidth}
 \centering
 \includegraphics[width=\linewidth]{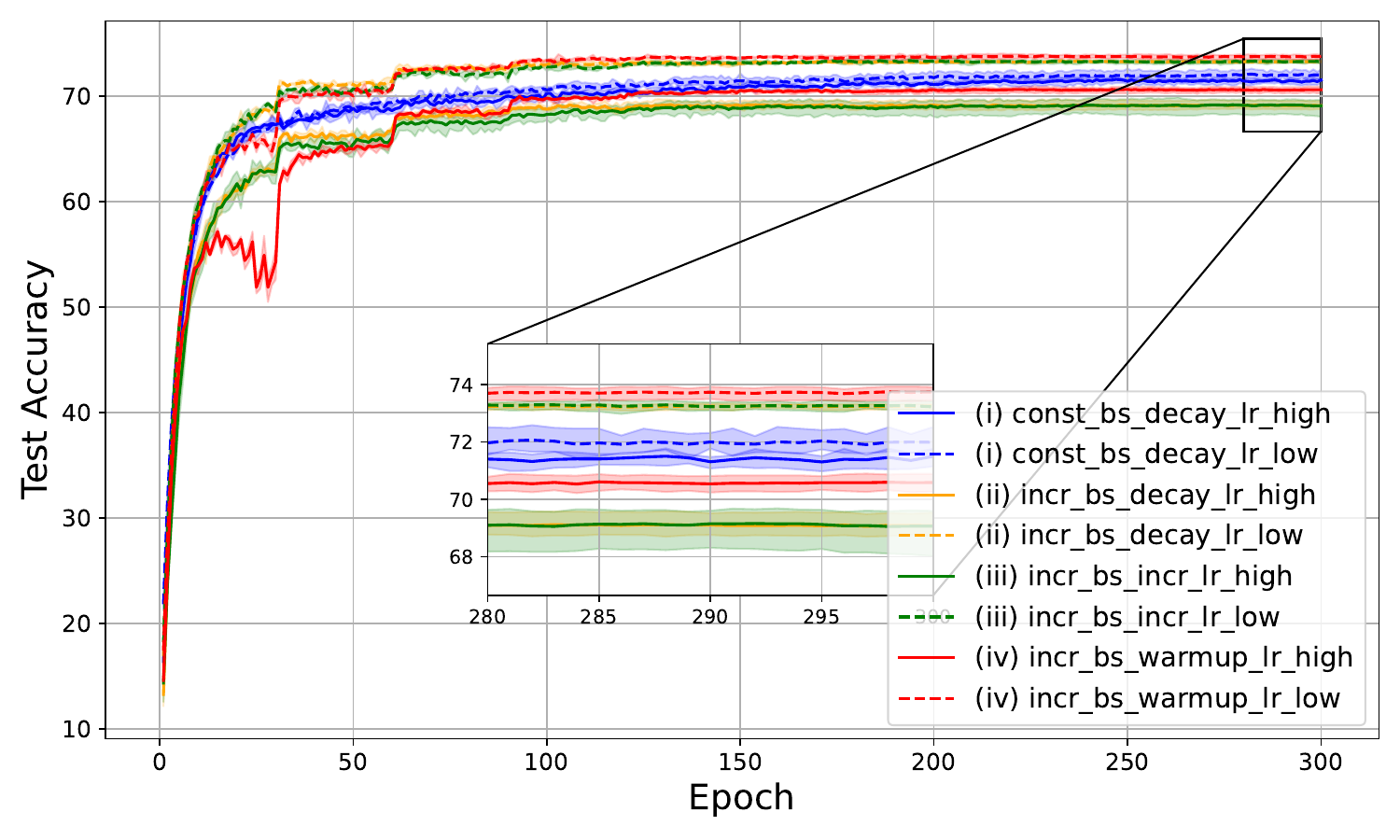}
 \caption*{Test accuracy}
 \end{minipage}

 \begin{minipage}{1.0\linewidth}
 \centering
 \includegraphics[width=\linewidth]{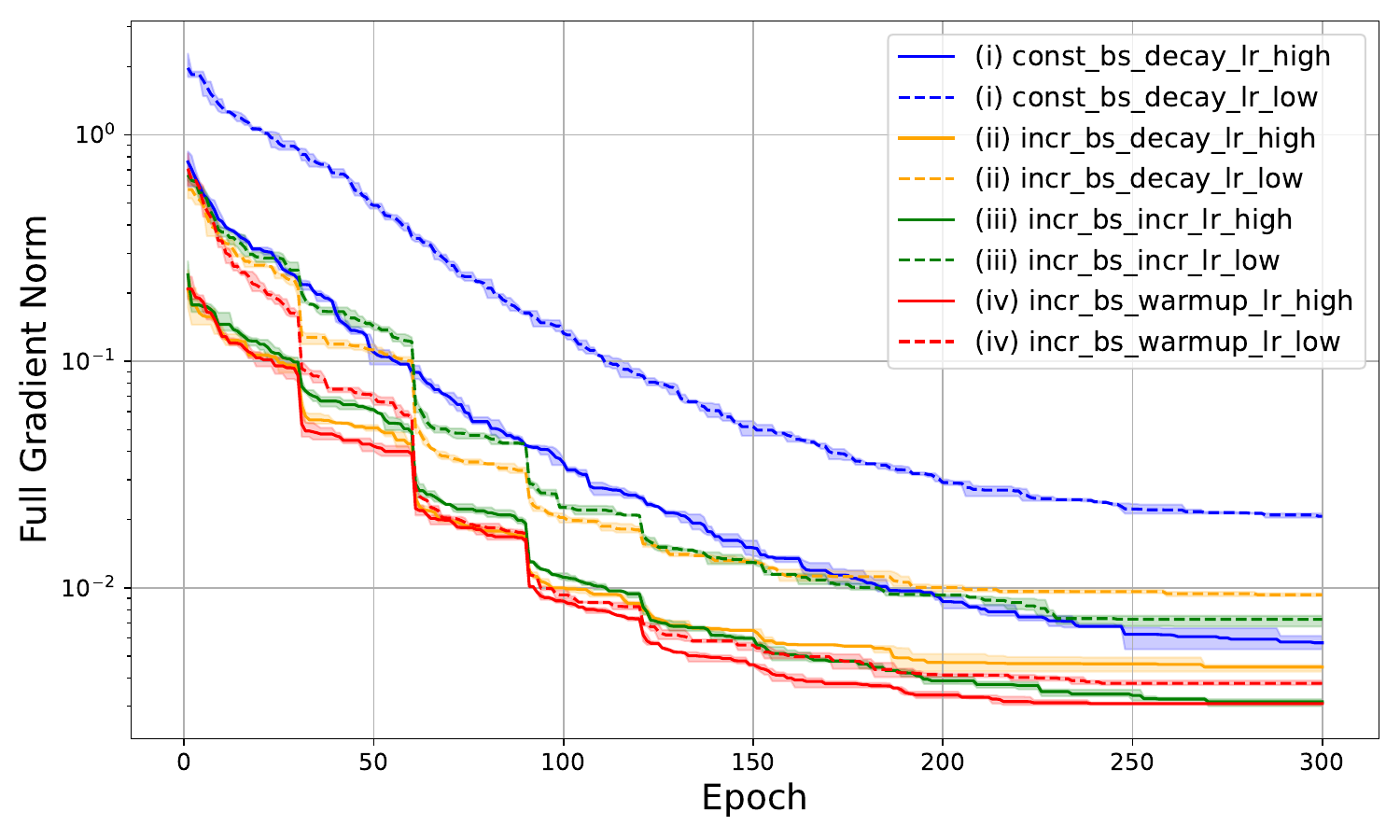}
 \caption*{Full gradient norm}
 \end{minipage}

 \caption{Results of SHB: representative training dynamics. Solid lines indicate the high learning rate setting (same LR as NSHB), and dashed lines indicate the low learning rate setting (one-tenth of the NSHB LR), in consideration of the theoretical constraints in Theorem~\ref{thm:general}. See Appendix~\ref{sec:C.2} for full results and additional settings.}
 \label{fig:shb_summary}
\end{figure}
\end{minipage}

Figure~\ref{fig:other_optim} compares NSHB and SHB with commonly used optimizers (SGD, RMSProp, Adam, and AdamW) under the increasing batch-size schedule from (ii). In terms of optimization dynamics, SGD, NSHB, and SHB exhibit a rapid decrease in gradient norms during the early stages. However, in the later stages, Adam achieves smaller gradient norms. This tendency has also been reported by \citet{kamo2025increasing}, and we anticipate further theoretical analyses of Adam under increased batch sizes.

\begin{figure}[htbp]
\centering
\begin{minipage}{0.49\linewidth}
\centering
\includegraphics[width=\linewidth]{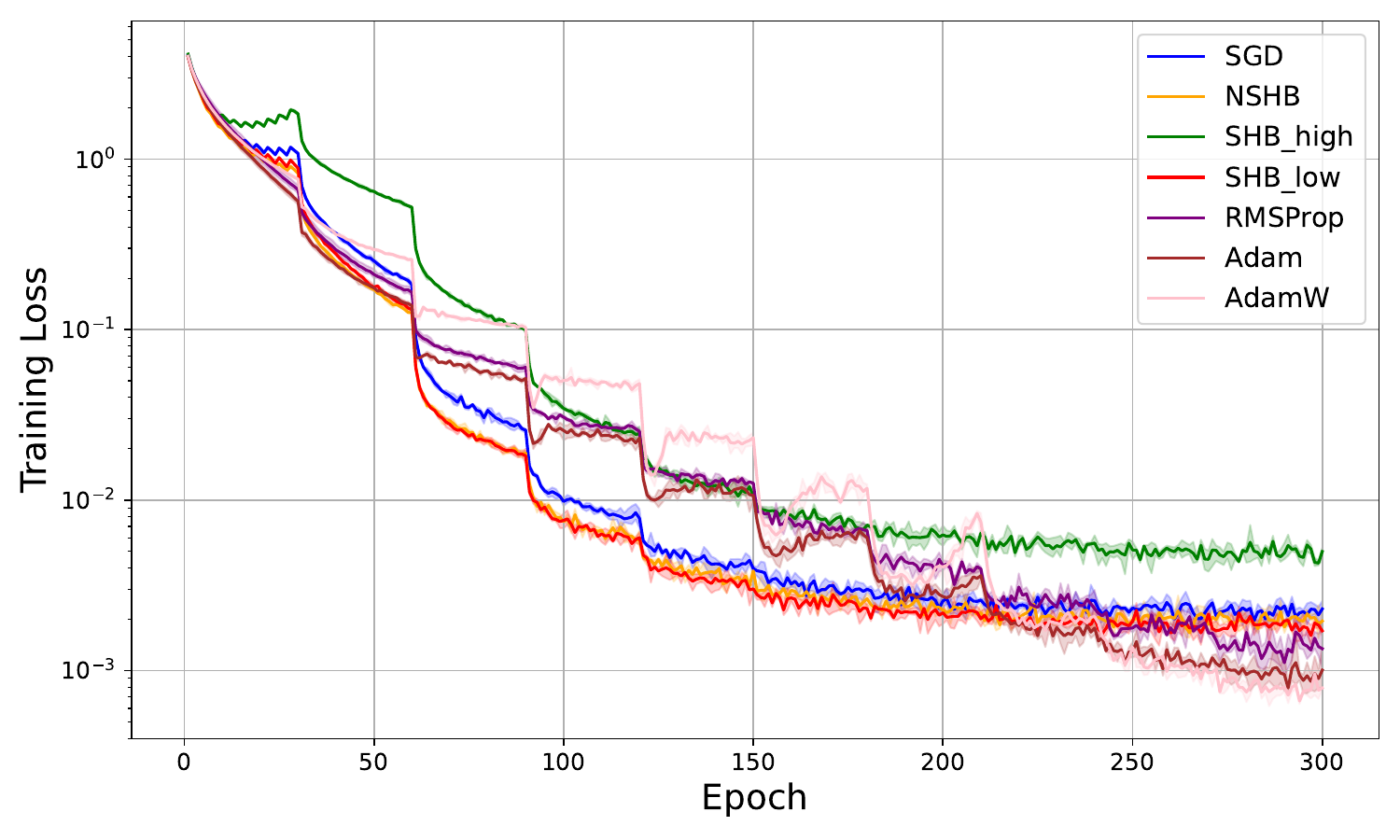}
\caption*{Training loss}
\end{minipage}
\hfill
\begin{minipage}{0.49\linewidth}
\centering
\includegraphics[width=\linewidth]{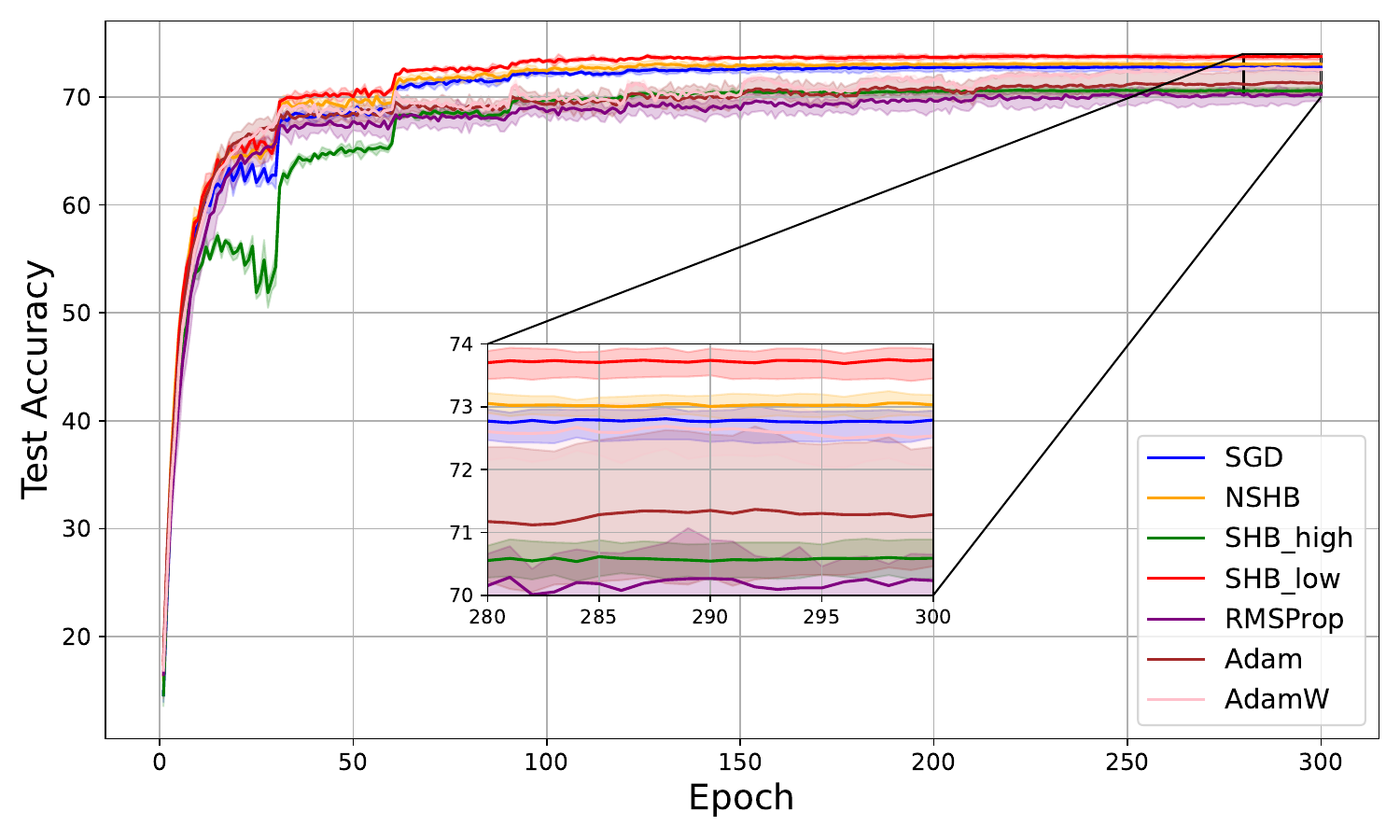}
\caption*{Test accuracy}
\end{minipage}
\begin{minipage}{0.49\linewidth}
\centering
\includegraphics[width=\linewidth]{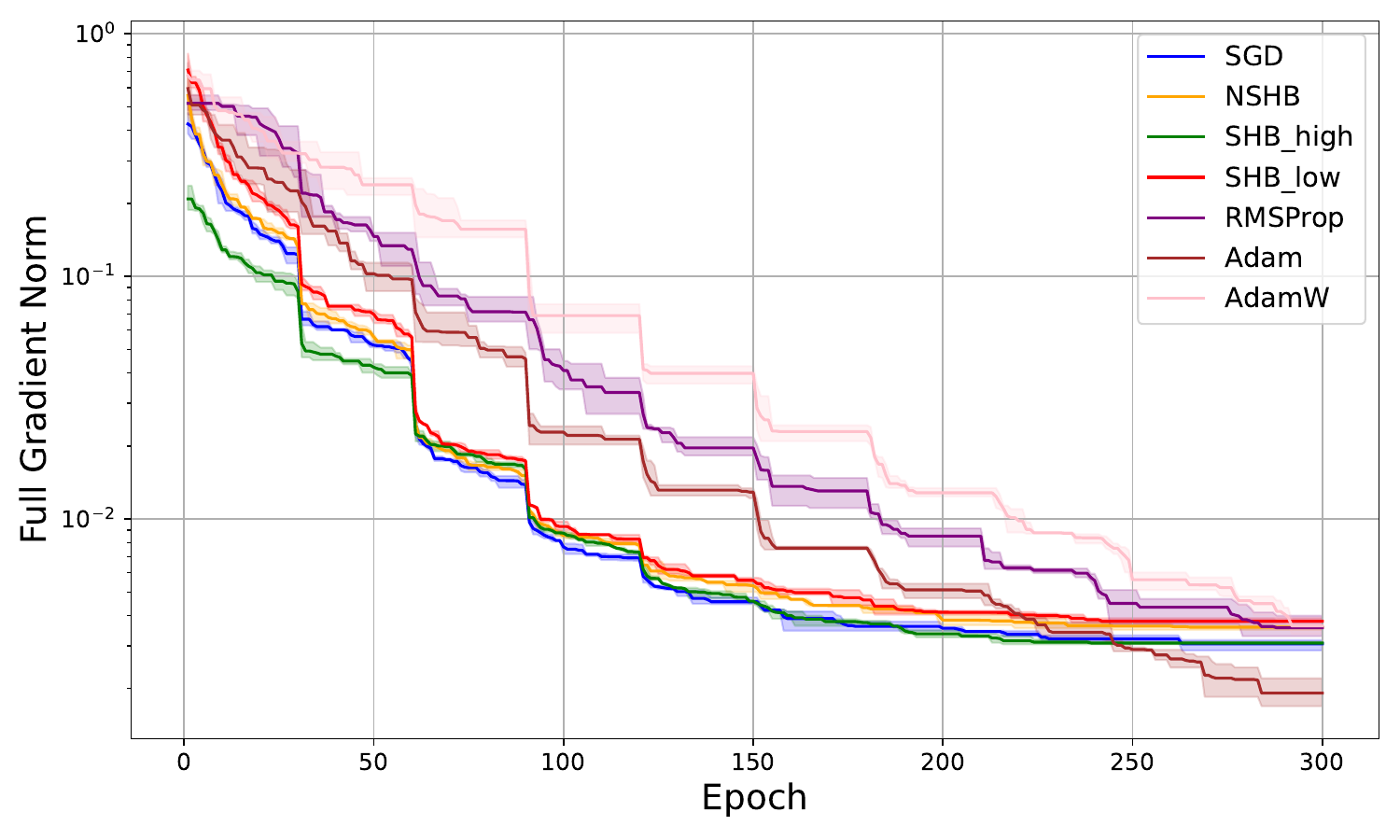}
\caption*{Full gradient norm}
\end{minipage}
\caption{Comparison of NSHB and SHB with common optimizers under the increasing batch-size schedule from (ii). SGD uses the warmup schedule from (iv); RMSProp, Adam, and AdamW use LR=0.0005. Optimizer hyperparameters are as follows: RMSProp (\(\beta_2=0.99\)), Adam/AdamW (\(\beta_1=0.9,\ \beta_2=0.999\)); AdamW additionally uses a weight decay of 0.01.}
\label{fig:other_optim}
\end{figure}

\section{Discussion}
\label{sec:discussion}
In this section, we discuss the technical challenges in extending existing SGDM analyses to dynamic schedules, interpret the empirical results from the perspective of escape from local minima, and present an extension to dynamic momentum scheduling.

\subsection{Technical Challenges in Extending SGDM Analysis to Dynamic Schedules}
Under a constant learning rate $\eta$, \citet{NEURIPS2020_d3f5d4de} introduces the auxiliary variable $\bm{z}_t := \frac{1}{1-\beta}\bm{\theta}_t - \frac{\beta}{1-\beta}\bm{\theta}_{t-1}$, which gives $\bm{z}_{t+1} - \bm{z}_t = -\eta \nabla f_{B_t}(\bm{\theta}_t)$, eliminating the momentum term. Under a dynamic learning rate $\eta_t$, however, this becomes:

\begin{align*}
\bm{z}_{t+1} - \bm{z}_t = -\eta_t \nabla f_{B_t}(\bm{\theta}_t) - \frac{\beta}{1-\beta}(\eta_t - \eta_{t-1})\bm{m}_{t-1},
\end{align*}
introducing a residual term that prevents direct application of this approach. Without such a variable transformation, the cross-term $\E[\langle \nabla f(\bm{\theta}_t), \bm{m}_{t-1} \rangle]$ remains in the descent bound and its sign is indefinite across iterations, complicating standard analysis. Our framework resolves this by choosing the coefficient $A_t$ in $\mathcal{L}_t$ to cancel this cross-term at each iteration.

\subsection{Escape from Sharp Minima via Increasing Batch Size}
\label{sec:discussion_noise}
The dynamic scheduling of learning rates and batch sizes can be interpreted from the perspective of escape from local minima. Under the theoretical framework of \citet{xie2021a}, the magnitude of stochastic noise scales with the ratio $\lambda_t / b_t$. Our exponential growth schedules satisfy $\lambda_t / b_t \to 0$ as $T \to \infty$, which induces a two-phase effect analogous to simulated annealing:

\begin{enumerate}
\item \textbf{Early Training:} A small batch size maintains a large $\lambda_t / b_t$ ratio, introducing strong noise that facilitates escape from sharp local minima with poor generalization.
\item \textbf{Late Training:} The exponentially increasing batch size drives $\lambda_t / b_t$ toward $0$, suppressing noise fluctuations and allowing parameters to stably settle into flatter, more generalizable minima.
\end{enumerate}
This interpretation is supported by the numerical evaluations of \citet{pmlr-v304-imaizumi26a} for momentum methods with increasing batch sizes. Their results show that large constant batch sizes yield worse generalization than small constant batch sizes, while an exponentially increasing batch-size schedule outperforms both. This closely aligns with our interpretation, wherein a small initial batch size enables escape from sharp minima and the subsequent increase stabilizes convergence toward flat minima. Therefore, their findings provide strong empirical support for the practical success of schedules (ii), (iii), and (iv).

\subsection{Extension to Dynamic Momentum Scheduling}
It is well known that SGDM's performance strongly depends not only on the learning rate but also on the momentum coefficient $\beta$ \citep{JMLR:v25:22-1189}. Several recent studies \citep{9746839, li2025on} have demonstrated the effectiveness of dynamic momentum scheduling in practice. Our Lyapunov-based analysis naturally extends to the case where $\beta_t$ is monotonically non-increasing. Specifically, by redefining the Lyapunov coefficient as
\[
A_t := \frac{\eta_t - L(1-\beta_t)\eta_t^2}{2(1-\beta_t)},
\]
the same descent argument applies, and the convergence bound becomes
\[
\min_{0 \leq t \leq T-1} \mathbb{E}\left[\|\nabla f(\bm{\theta}_t)\|^2\right]
\leq
\frac{2(f(\bm{\theta}_0) - f^\star)}{\sum_{t=0}^{T-1}(1-\beta_t)\eta_t}
+ \frac{\sigma^2}{\sum_{t=0}^{T-1}(1-\beta_t)\eta_t}\sum_{t=0}^{T-1}\frac{(1-\beta_t)\eta_t}{b_t}.
\]
Note that this reduces to Theorem~\ref{thm:general} when $\beta_t = \beta$ is fixed. The details are provided in Appendix~\ref{sec:dynamic_beta}.

\section{Conclusion}
In this paper, we extended the existing theoretical analyses of the learning rate and batch size scheduling for mini-batch SGD to the SGDM framework. By introducing a newly constructed Lyapunov function, we provided a unified analysis of widely used momentum-based optimization training methods. Consequently, we established convergence guarantees on the expected full gradient norm of the empirical loss. Theoretically, we showed that an increasing batch-size schedule ensures convergence of SGDM. and that simultaneously increasing both the batch size and the learning rate can accelerate convergence.

We summarize our main contributions as follows:

\begin{itemize}
\item We introduced a novel and simpler Lyapunov function for SGDM that unifies the convergence analysis of SHB and NSHB under dynamic learning-rate and batch-size schedules.
\item We extended the analysis of \citet{kamo2025increasing} from constant to decaying learning rates, as well as extended the increasing learning-rate and batch-size setting of \citet{umeda2025increasing}, originally studied only for vanilla SGD, to the momentum setting.
\item We validated our theoretical findings through experiments confirming that the full gradient norm improves progressively across four scheduling strategies, from a constant batch size with a decaying learning rate to an increasing batch size combined with a warmup learning rate.
\end{itemize}

\subsubsection*{Acknowledgments}
We are sincerely grateful to the Action Editor, Antonio Orvieto, and the three anonymous reviewers for helping us improve the original manuscript. This research is partly supported by the computational resources of the DGX A100 named TAIHO at Meiji University. This work was supported by the Japan Society for the Promotion of Science (JSPS) KAKENHI Grant Number 24K14846 awarded to Hideaki Iiduka.

\bibliography{main}
\bibliographystyle{tmlr}

\clearpage \appendix

\section{General Convergence Bound for SHB}
\label{sec:shb}
As stated in Section~\ref{sec:3}, the stochastic heavy ball (SHB) method is equivalent to the normalized version (NSHB) up to the reparametrization $\eta_t = \frac{\alpha_t}{1-\beta}$, where $\alpha_t$ and $\eta_t$ denote the learning rates of SHB and NSHB, respectively. For completeness, this section presents an explicit general convergence bound for SHB derived under this equivalence, corresponding to Theorem~\ref{thm:general} in the main text.
\begin{theorem}
[General and unified convergence bound for SHB]\label{thm:general_shb} Suppose Assumption~\ref{assum:1} holds. Let $\{\bm{\theta}_t\}$ be the sequence generated by Algorithm~\ref{algo:shb} (SHB) with learning rates $\lambda_t$ and batch sizes $b_t$. Assume
\[
\lambda_t \in [\lambda_{\min}, \lambda_{\max}] \subset \displaystyle \left[0, \frac{1 - c\beta^2}{L}\right)
\]
and $\sum_{t=0}^{T-1} \lambda_t \neq 0$. Define
\[
B_T := \frac{1}{\sum_{t=0}^{T-1} \lambda_t}, \quad
V_T := \frac{1}{\sum_{t=0}^{T-1} \lambda_t} \sum_{t=0}^{T-1} \frac{\lambda_t}{b_t}.
\]
Then, for any $T \in \N$, the following bound holds:
\[
\min_{0 \leq t \leq T-1} \E \bigl[ \|\nabla f(\bm{\theta}_t)\|^2 \bigr]
\leq
2 (f(\bm{\theta}_0) - f^\star) B_T + \sigma^2 V_T,
\]
where $\E$ denotes the expectation over all randomness up to iteration $T$.
\end{theorem}
Since the mathematical structure of the convergence bound is identical to Theorem~\ref{thm:general} except for the constants scaled by the reparametrization, all subsequent schedule-specific results (Corollaries~\ref{cor:3.1}--\ref{cor:3.4}) follow analogously for SHB.

\section{Increasing Batch Size with Warmup Learning Rate Scheduler}
\label{sec:incr_bs_warmup_lr}
In the main experiments, the batch size was increased every 30 epochs, while the learning rate followed a more frequent warmup schedule. This setting does not satisfy the scheduler defined in equation~\eqref{exp_bs_lr}, which updates both the batch size and the learning rate in the same epoch and forms the basis of our theoretical analysis. Here, we provide an extended theoretical analysis under the stricter assumption of synchronized updates to ensure clarity and completeness. Although the experiments deviate from these assumptions, the theoretical results offer valuable insight.

Let the warmup period last for $T_w = \sum_{m=0}^{M_w} K_m E_m > 0$ iterations, corresponding to $M_w$ phases of increasing learning rate. The schedule satisfies:
\begin{align}\label{scheduler_4}
\begin{aligned}
&b_t \leq b_{t+1} \quad (t \in \N), \\
&\lambda_t \leq \lambda_{t+1} \quad (t < T_w), \quad
\lambda_{t+1} \leq \lambda_t \quad (t \geq T_w).
\end{aligned}
\end{align}
The increase in the learning rate during the warmup period must satisfy the growth condition in \eqref{lr_growth_control}.

The batch sizes $\{b_t\}$ follow the exponential growth schedule in \eqref{exponential_bs}, and the learning rates $\{\lambda_t\}$ follow warmup variants of the constant and cosine decay schedules, for all $m \in [0:M]$ and all $t \in S_m$, as follows:

\textbf{[Constant LR with Warmup]}
\begin{align}\label{warmup_constant}
\lambda_t =
\begin{cases}
\gamma^{m \left\lceil \frac{t}{\sum_{k=0}^m K_k E_k} \right\rceil} \lambda_0 & (m \in [0:M_w]), \\
\gamma^{M_w} \lambda_0 & (m \in [M_w:M]),
\end{cases}
\end{align}
\textbf{[Cosine LR with Warmup]}
\begin{align}\label{warmup_cosine}
&\lambda_t =
\begin{cases}
\gamma^{m \left\lceil \frac{t}{\sum_{k=0}^m K_k E_k} \right\rceil}\lambda_0 & (m \in [0:M_w]), \\
\begin{aligned}
&\lambda_{\min} + \frac{\lambda_{\max} - \lambda_{\min}}{2} \\
&\times \left(1 + \cos\left(\left[\sum_{k=0}^{m-1} E_k + \left\lfloor \frac{t - \sum_{k=0}^{m-1} K_k E_k}{K_m} \right\rfloor - E_w \right] \frac{\pi}{E_M - E_w}\right)\right)
\end{aligned}
& (m \in [M_w:M]),
\end{cases}
\end{align}
where $E_w$ is the number of warmup epochs, $\lambda_{\max} := \gamma^{M_w} \lambda_0$, and $\gamma > 1$ is the learning rate growth factor during warmup.

By applying Theorem~\ref{thm:general} to the combined schedule in \eqref{scheduler_4} with exponentially increasing batch sizes \eqref{exponential_bs}, we obtain the following convergence bounds. The proof is given in Appendix~\ref{sec:B.6}.
\begin{corollary}
[Convergence rates under schedule \eqref{scheduler_4}]\label{cor:3.4} Under the assumptions of Theorem~\ref{thm:general}, suppose Algorithm~\ref{algo:nshb} (NSHB) or Algorithm~\ref{algo:shb} (SHB) is run with batch sizes $\{b_t\}$ and learning rates $\{\lambda_t\}$ defined by the warmup schedule \eqref{scheduler_4}. Let $\delta, \gamma > 1$ with $\hat{\gamma}=\frac{\gamma}{\delta} < 1$. Here, $T$, $E_{\max}$, $E_{\min}$, $K_{\max}$, and $K_{\min}$ are as defined in Corollaries~\ref{cor:3.2}–\ref{cor:3.3}. Then, the quantities $B_T$ and $V_T$ in Theorem~\ref{thm:general} satisfy the following bounds:
\[
\begin{array}{cl}
[\text{Constant LR \eqref{warmup_constant}}] & 
\begin{aligned}
B_T &\leq \frac{\delta^2}{\lambda_0 K_{\min} E_{\min} \gamma^{M_w}} + \frac{1}{\lambda_{\max} (T - T_w)}, \\
V_T &\leq \frac{K_{\max} E_{\max} \lambda_0 \delta^2}{K_{\min} E_{\min} b_0 (1 - \hat{\gamma}) \gamma^{M_w}} + \frac{\delta K_{\max} E_{\max}}{(\delta - 1) b_0 (T - T_w)}.
\end{aligned} \\[2.5em]
[\text{Cosine LR \eqref{warmup_cosine}}] & 
\begin{aligned}
B_T &\leq \frac{\delta^2}{\lambda_0 K_{\min} E_{\min} \gamma^{M_w}} + \frac{2}{(\lambda_{\min} + \lambda_{\max})(T - T_w)}, \\
V_T &\leq \frac{K_{\max} E_{\max} \lambda_0 \delta^2}{K_{\min} E_{\min} b_0 (1 - \hat{\gamma}) \gamma^{M_w}} + \frac{2 \delta \lambda_{\max} K_{\max} E_{\max}}{(\delta - 1)(\lambda_{\min} + \lambda_{\max}) b_0 (T - T_w)}.
\end{aligned}
\end{array}
\]
As a result, the expected gradient norm under both NSHB and SHB satisfies
\[
\min_{t \in [T_w, T-1]} \E[\|\nabla f(\bm{\theta}_t)\|] = O\left(\frac{1}{\sqrt{T - T_w}}\right).
\]
\end{corollary}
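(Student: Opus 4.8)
The plan is to exploit the fact that the warm-up schedule \eqref{scheduler_4} is a concatenation of two regimes that have already been analyzed: on the warm-up segment $t \in [0, T_w)$ the batch size and learning rate both grow exponentially, which is exactly schedule \eqref{scheduler_3} with $M$ replaced by $M_w$ (Corollary~\ref{cor:3.3}), while on the post-warm-up segment $t \in [T_w, T)$ the batch size grows and the learning rate decays, which is schedule \eqref{scheduler_2} (Corollary~\ref{cor:3.2}). I would therefore establish the two stated $B_T$ and $V_T$ bounds by splitting the sums that define them at the index $T_w$ and quoting the two earlier corollaries for each piece.

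Concretely, write $S_1 := \sum_{t=0}^{T_w - 1}\lambda_t$, $S_2 := \sum_{t=T_w}^{T-1}\lambda_t$, $N_1 := \sum_{t=0}^{T_w-1}\lambda_t/b_t$, and $N_2 := \sum_{t=T_w}^{T-1}\lambda_t/b_t$, so that $B_T = (S_1+S_2)^{-1}$ and $V_T = (N_1+N_2)/(S_1+S_2)$. For $B_T$ I would use the sub-additivity bound $\tfrac{1}{S_1+S_2}\le \tfrac{1}{S_1}+\tfrac{1}{S_2}$, and for $V_T$ the mediant inequality $\tfrac{N_1+N_2}{S_1+S_2}\le\tfrac{N_1}{S_1}+\tfrac{N_2}{S_2}$ (valid since both fractions are nonnegative). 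The quantities $1/S_1$ and $N_1/S_1$ are precisely the $B_T$ and $V_T$ of Corollary~\ref{cor:3.3} evaluated on the warm-up run of length $T_w$, hence bounded by the first summands $\tfrac{\delta^2}{\lambda_0 K_{\min}E_{\min}\gamma^{M_w}}$ and $\tfrac{K_{\max}E_{\max}\lambda_0\delta^2}{K_{\min}E_{\min}b_0(1-\hat\gamma)\gamma^{M_w}}$. Likewise $1/S_2$ and $N_2/S_2$ are the corresponding quantities for the post-warm-up run, bounded by Corollaries~\ref{cor:3.1}--\ref{cor:3.2} with $T$ replaced by $T-T_w$ and the relevant constant or cosine learning-rate formula; these yield the second summands and reproduce both displays verbatim.

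For the final rate I would run the Lyapunov telescope of Theorem~\ref{thm:general} only over the post-warm-up window $[T_w, T-1]$, which replaces the initial energy $f(\bm{\theta}_0)$ by $\E[\mathcal{L}_{T_w}]$ and gives $\min_{t\in[T_w,T-1]}\E[\|\nabla f(\bm{\theta}_t)\|^2] \le 2 C_{\mathrm{alg}}(\E[\mathcal{L}_{T_w}]-f^\star)/S_2 + \sigma^2 N_2/S_2$. The crucial observation is that $\E[\mathcal{L}_{T_w}]-f^\star$ is bounded by a constant independent of $T$: telescoping the same per-step inequality over the warm-up window and discarding the nonnegative gradient contributions yields $\E[\mathcal{L}_{T_w}]-f^\star \le (f(\bm{\theta}_0)-f^\star) + C\sigma^2 N_1$, and since on the warm-up phase $\lambda_t/b_t$ scales like $\hat\gamma^{\,m}=(\gamma/\delta)^m$ with $\hat\gamma<1$, the accumulated noise $N_1$ is a convergent geometric sum, hence $O(1)$ in $T$. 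Both terms are then $O(1/S_2)=O(1/(T-T_w))$, and Jensen's inequality (the Remark following Theorem~\ref{thm:general}) upgrades this to $\min_{t\in[T_w,T-1]}\E[\|\nabla f(\bm{\theta}_t)\|]=O(1/\sqrt{T-T_w})$.

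The main obstacle is controlling the handover at $t=T_w$: I must ensure that the energy injected during the warm-up---where the learning rate is deliberately increasing---does not accumulate fast enough to overwhelm the subsequent decay. This is exactly where the hypothesis $\hat\gamma=\gamma/\delta<1$ (the batch size outgrowing the learning rate) is indispensable, since it makes $N_1$ finite and forces $\E[\mathcal{L}_{T_w}]$ to remain bounded. A secondary point to verify is that the learning-rate ceiling of Theorem~\ref{thm:general} holds throughout: within each warm-up phase $\lambda_t$ is constant and across phases it jumps by the factor $\gamma$, so the growth condition \eqref{lr_growth_control} is satisfied provided $\gamma\le c<1/\beta^2$, which I would impose as a standing requirement so that the per-step Lyapunov inequality stays valid on both segments.
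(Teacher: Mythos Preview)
Your proposal is correct and follows the same approach the paper takes: the paper's own proof (Appendix~\ref{sec:B.6}) consists of a single sentence stating that the result ``is directly obtained by applying the results established in Corollaries~\ref{cor:3.2} and~\ref{cor:3.3}, and thus the proof is omitted.'' Your decomposition at $T_w$, the sub-additivity and mediant-type inequalities for $B_T$ and $V_T$, and the restart of the Lyapunov telescope at $T_w$ with a bounded handover energy (via the geometric-sum estimate on $N_1$, which is where the hypothesis $\hat\gamma<1$ enters) constitute a correct and complete filling-in of that omitted argument.
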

In the phase $t \geq T_w$, both algorithms use increasing batch sizes and decaying learning rates, yielding convergence rates comparable to those under the decaying schedule~\eqref{scheduler_2} (see Corollary~\ref{cor:3.2}).

Importantly, the warmup phase for $t < T_w$ accelerates early-stage convergence by preventing premature decay of the learning rate. Thus, combining increasing batch sizes with warmup and decaying learning-rate schedules allows Algorithms~\ref{algo:nshb} and~\ref{algo:shb} to reduce early-stage bias and later-stage variance, achieving faster overall convergence compared with decaying schedules alone.

\section{Proof}
\label{sec:B}
\subsection{Proposition}
\label{sec:B.1}
The following proposition summarizes the properties of the mini-batch gradient under the assumption of sampling with replacement, which is standard in the analysis of stochastic optimization \citep[see, e.g.,][]{doi:10.1137/16M1080173}.
\begin{proposition}\label{prop:1}
Let $t \in \N$, $\bm{\xi}_t$ be a random variable independent of $\bm{\xi}_j$ ($j \in [0:t-1]$), $\bm{\theta}_t \in \R^d$ be independent of $\bm{\xi}_t$, and $\nabla f_{B_t}(\bm{\theta}_t)$ be the mini-batch gradient, where $\nabla f_{\xi_{t,i}}$ ($i\in [b_t]$) is the stochastic gradient (see Assumption \ref{assum:1}(A2)). Then, the following hold:
\begin{align*}
&\E_{\bm{\xi}_t}\left[\nabla f_{B_t}(\bm{\theta}_t) \Big|\hat{\bm{\xi}}_{t-1} \right] = \nabla f (\bm{\theta}_t), \quad
\V_{\bm{\xi}_t}\left[\nabla f_{B_t}(\bm{\theta}_t) \Big|\hat{\bm{\xi}}_{t-1} \right] 
\leq \frac{\sigma^2}{b_t},
\end{align*}
where $\E_{\bm{\xi}_t}[\cdot|\hat{\bm{\xi}}_{t-1}]$ and $\V_{\bm{\xi}_t}[\cdot|\hat{\bm{\xi}}_{t-1}]$ are respectively the expectation and variance with respect to $\bm{\xi}_t$ conditioned on $\bm{\xi}_{t-1} = \hat{\bm{\xi}}_{t-1}$.
\end{proposition}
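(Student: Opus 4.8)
The plan is to reduce both identities to the single-sample properties in Assumption~\ref{assum:1}(A2) by exploiting that, once we condition on $\hat{\bm{\xi}}_{t-1}$, the iterate $\bm{\theta}_t$ is frozen to a deterministic vector: it depends only on the past randomness $\bm{\xi}_0, \dots, \bm{\xi}_{t-1}$. Under this conditioning the coordinates $\xi_{t,1}, \dots, \xi_{t,b_t}$ of $\bm{\xi}_t$ remain i.i.d.\ and independent of $\bm{\theta}_t$, so (A2) applies termwise to each $\nabla f_{\xi_{t,i}}(\bm{\theta}_t)$, giving conditional mean $\nabla f(\bm{\theta}_t)$ and conditional second moment of the deviation bounded by $\sigma^2$.

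For the unbiasedness claim, I would write out $\nabla f_{B_t}(\bm{\theta}_t) = \frac{1}{b_t}\sum_{i=1}^{b_t}\nabla f_{\xi_{t,i}}(\bm{\theta}_t)$, pass the conditional expectation through the finite sum by linearity, and apply (A2)(i) to each summand. The $b_t$ copies of $\nabla f(\bm{\theta}_t)$ average back to $\nabla f(\bm{\theta}_t)$, which is exactly the first identity.

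For the variance bound, I would use the vector definition $\V_{\bm{\xi}_t}[\,\cdot\mid\hat{\bm{\xi}}_{t-1}] = \E_{\bm{\xi}_t}[\|\cdot - \nabla f(\bm{\theta}_t)\|^2\mid\hat{\bm{\xi}}_{t-1}]$, set $X_i := \nabla f_{\xi_{t,i}}(\bm{\theta}_t) - \nabla f(\bm{\theta}_t)$, and factor out the $1/b_t^2$ to reduce to bounding $\E_{\bm{\xi}_t}[\|\sum_{i=1}^{b_t} X_i\|^2\mid\hat{\bm{\xi}}_{t-1}]$. Expanding the squared norm as $\sum_{i,j}\langle X_i, X_j\rangle$ and taking the conditional expectation, the off-diagonal terms vanish because the $X_i$ are conditionally independent and each has zero conditional mean, leaving $\sum_{i=1}^{b_t}\E_{\bm{\xi}_t}[\|X_i\|^2\mid\hat{\bm{\xi}}_{t-1}]$. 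Bounding each term by $\sigma^2$ via (A2)(ii) and dividing by $b_t^2$ gives the stated bound $\sigma^2/b_t$.

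The routine parts here are the linearity of expectation and the i.i.d.\ averaging; the step I would be most careful about is the conditioning and measurability bookkeeping --- namely, making explicit that conditioning on $\hat{\bm{\xi}}_{t-1}$ renders $\bm{\theta}_t$ deterministic so that the pointwise bounds of (A2) may be invoked, and that the vector-valued variance decomposes over independent summands exactly as in the scalar case, with the cross terms killed by independence and centering.
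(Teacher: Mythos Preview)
Your proposal is correct and follows essentially the same approach as the paper's proof: both arguments use linearity of conditional expectation together with (A2)(i) for unbiasedness, and expand the squared norm of the centered sum, kill the cross terms via conditional independence and zero mean, and bound each diagonal term by $\sigma^2$ via (A2)(ii). The only cosmetic difference is that the paper writes out the cross-term calculation explicitly rather than packaging it as ``conditionally independent centered summands,'' and it does not dwell on the measurability point you raise about $\bm{\theta}_t$ being frozen under conditioning.
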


\begin{proof} 
Assumption \ref{assum:1}(A3) and the independence of $b_t$ and $\bm{\xi}_t$ ensure that
\begin{align*}
\E_{\bm{\xi}_t} \left[\nabla f_{B_t}(\bm{\theta}_t) \Big| \hat{\bm{\xi}}_{t-1} \right]
= 
\E_{\bm{\xi}_t} \left[\frac{1}{b_t} \sum_{i=1}^{b_t} \nabla f_{\xi_{t,i}} (\bm{\theta}_t) \Bigg| \hat{\bm{\xi}}_{t-1} \right]
=
\frac{1}{b_t} \sum_{i=1}^{b_t}
\E_{\xi_{t,i}} \left[\nabla f_{\xi_{t,i}} (\bm{\theta}_t) \Big| \hat{\bm{\xi}}_{t-1} \right],
\end{align*}
which, together with Assumption \ref{assum:1}(A2)(i) and the independence of $\bm{\xi}_t$ and $\bm{\xi}_{t-1}$, implies that
\begin{align}\label{eq:1}
\E_{\bm{\xi}_t} \left[\nabla f_{B_t}(\bm{\theta}_t) \Big| \hat{\bm{\xi}}_{t-1} \right] 
= 
\frac{1}{b_t} \sum_{i=1}^{b_t}
\nabla f (\bm{\theta}_t) 
= \nabla f (\bm{\theta}_t).
\end{align}
Assumption \ref{assum:1}(A3), the independence of $b_t$ and $\bm{\xi}_t$, and (\ref{eq:1}) imply that
\begin{align*}
\V_{\bm{\xi}_t} \left[ \nabla f_{B_t}(\bm{\theta}_t) \Big| \hat{\bm{\xi}}_{t-1} \right]
&= 
\E_{\bm{\xi}_t} \left[ \|\nabla f_{B_t} (\bm{\theta}_t) - \nabla f (\bm{\theta}_t)\|^2 \Big| \hat{\bm{\xi}}_{t-1} \right]\\
&= 
\E_{\bm{\xi}_t} 
\left[ \left\| 
\frac{1}{b_t} \sum_{i=1}^{b_t} \nabla f_{\xi_{t,i}} (\bm{\theta}_t) - \nabla f (\bm{\theta}_t) \right\|^2 \Bigg| \hat{\bm{\xi}}_{t-1} \right]\\
&= 
\frac{1}{b_t^2} \E_{\bm{\xi}_t} 
\left[ \left\| 
\sum_{i=1}^{b_t} \left( 
\nabla f_{\xi_{t,i}} (\bm{\theta}_t) - \nabla f (\bm{\theta}_t) 
\right) 
\right\|^2 \Bigg| \hat{\bm{\xi}}_{t-1} \right].
\end{align*}
From the independence of $\xi_{t,i}$ and $\xi_{t,j}$ ($i \neq j$) and Assumption \ref{assum:1}(A2)(i), for all $i,j \in [b_t]$ such that $i \neq j$,
\begin{align*}
&\E_{\xi_{t,i}}[\langle \nabla f_{\xi_{t,i}}(\bm{\theta}_t) - \nabla f (\bm{\theta}_t), \nabla f_{\xi_{t,j}}(\bm{\theta}_t)- \nabla f (\bm{\theta}_t) \rangle| \hat{\bm{\xi}}_{t-1}]\\
&=
\langle \E_{\xi_{t,i}}[ \nabla f_{\xi_{t,i}}(\bm{\theta}_t) | \hat{\bm{\xi}}_{t-1}] - \E_{\xi_{t,i}}[\nabla f (\bm{\theta}_t)|\hat{\bm{\xi}}_{t-1}],\nabla f_{\xi_{t,j}}(\bm{\theta}_t)- \nabla f (\bm{\theta}_t) \rangle\\
&= 0.
\end{align*}
Hence, Assumption \ref{assum:1}(A2)(ii) guarantees that
\begin{align*}
\V_{\bm{\xi}_t} \left[ \nabla f_{B_t}(\bm{\theta}) \Big| \hat{\bm{\xi}}_{t-1} \right]
=
\frac{1}{b_t^2} 
\sum_{i=1}^{b_t} \E_{\xi_{t,i}} \left[\left\| 
\nabla f_{\xi_{t,i}} (\bm{\theta}_t) - \nabla f (\bm{\theta}_t) 
\right\|^2 \Big| \hat{\bm{\xi}}_{t-1} \right]
\leq
\frac{\sigma^2 b_t}{b_t^2} 
= \frac{\sigma^2}{b_t},
\end{align*}
which completes the proof.
\end{proof}

\subsection{Proof of Theorem~\ref{thm:general}}
\label{sec:B.2}
In this section, we prove Theorem~\ref{thm:general} for the case of the NSHB algorithm, where the learning rate is denoted by $\eta_t$. The proof for the SHB algorithm, which uses the learning rate $\alpha_t$, follows an analogous argument. In particular, if we define $\eta_t = \frac{\alpha_t}{1 - \beta}$, then the SHB update rule becomes equivalent to that of NSHB. Therefore, it is sufficient to prove the result for NSHB only.

\begin{proof} 
From the $L$-smoothness of the function $f$, the descent lemma \citep[Lemma 5.7]{beck2017} holds. That is,
\[ f(\bm\theta_{t+1}) \leq f(\bm\theta_{t}) + \langle \nabla f(\bm\theta_{t}),\bm\theta_{t+1}-\bm\theta_{t} \rangle + \frac{L}{2}\|\bm\theta_{t+1}-\bm\theta_{t}\|^2 .\]
Applying the update rule $\bm\theta_{t+1}=\bm\theta_{t}-\eta_t\bm m_{t}$ gives
\begin{align}\label{eq:B.2.1}
f(\bm\theta_{t+1}) \leq f(\bm\theta_{t}) -\eta_{t} \langle \nabla f(\bm\theta_{t}),\bm m_{t} \rangle + \frac{L}{2}\eta_{t}^2\|\bm m_{t}\|^2 .
\end{align}
By expanding $\bm m_t = \beta\bm m_{t-1} + (1-\beta) \nabla f_{B_t}(\bm\theta_{t})$, we obtain
\begin{align*}
\langle \nabla f(\bm\theta_t),\bm m_t \rangle = \beta\langle \nabla f(\bm\theta_t),\bm m_{t-1} \rangle + (1-\beta)\langle \nabla 
f(\bm\theta_t),\nabla f_{B_t}(\bm\theta_t) \rangle 
\end{align*}
and
\begin{align*}
\|\bm m_{t}\|^2 
&= \|\beta\bm m_{t-1} + (1-\beta) \nabla f_{B_t}(\bm\theta_{t})\|^2 \\ 
&= \beta^2\|\bm m_{t-1}\|^2 + 2\beta(1-\beta)\langle \nabla f_{B_t}(\bm\theta_{t}),\bm m_{t-1} \rangle + (1-\beta)^2\|\nabla f_{B_t}(\bm\theta_{t})\|^2.
\end{align*}
By Proposition~\ref{prop:1},
\begin{align*}
\E_{\bm{\xi}_t} \left[\left\|\nabla f_{B_t} (\bm{\theta}_t) \right\|^2 | \hat{\bm{\xi}}_{t-1} \right]
&= 
\E_{\bm{\xi}_t} \left[ \left\|\nabla f_{B_t} (\bm{\theta}_t) - \nabla f(\bm{\theta}_t) + \nabla f(\bm{\theta}_t) \right\|^2 \Big| \hat{\bm{\xi}}_{t-1} \right]\\
&=
\E_{\bm{\xi}_t} \left[ \left\|\nabla f_{B_t} (\bm{\theta}_t) - \nabla f(\bm{\theta}_t) \right\|^2 \Big| \hat{\bm{\xi}}_{t-1} \right] + 2 \E_{\bm{\xi}_t} \left[ \langle \nabla f_{B_t} (\bm{\theta}_t) - \nabla f(\bm{\theta}_t),
\nabla f(\bm{\theta}_t) \rangle \Big| \hat{\bm{\xi}}_{t-1} \right]\\ 
&\quad + 
\E_{\bm{\xi}_t} \left[ \left\| \nabla f(\bm{\theta}_t) \right\|^2 \Big| \hat{\bm{\xi}}_{t-1} \right]\\
&\leq \frac{\sigma^2}{b_t} 
+ \left\| \nabla f(\bm{\theta}_t) \right\|^2. 
\end{align*}
Hence, taking the expectation conditioned on $\bm\xi_{t-1}=\hat{\bm\xi}_{t-1}$, we have
\begin{align*}
\E_{\bm\xi_t}[\langle \nabla f(\bm\theta_t),\bm m_t \rangle|\hat{\bm\xi}_{t-1}] = \beta\langle \nabla f(\bm\theta_t),\bm m_{t-1} \rangle + (1-\beta)\|\nabla 
f(\bm\theta_t)\|^2
\end{align*}
and
\begin{align*}
\E_{\bm\xi_t}[\|\bm m_{t}\|^2|\hat{\bm\xi}_{t-1}]
&\leq \beta^2\|\bm m_{t-1}\|^2 + 2\beta(1-\beta)\langle \nabla f(\bm\theta_{t}),\bm m_{t-1} \rangle + (1-\beta)^2\left(\frac{\sigma^2}{b_t}+\|\nabla f(\bm\theta_{t})\|^2\right).
\end{align*}
Taking the total expectation, we get
\begin{align*}
\E[\langle \nabla f(\bm\theta_t),\bm m_t \rangle] = \beta\E[\langle \nabla f(\bm\theta_t),\bm m_{t-1} \rangle] + (1-\beta)\E[\|\nabla 
f(\bm\theta_t)\|^2]
\end{align*}
and
\begin{align}\label{eq:B.2.2}
\E[\|\bm m_{t}\|^2]
\leq \beta^2\E[\|\bm m_{t-1}\|^2] + 2\beta(1-\beta)\E[\langle \nabla f(\bm\theta_{t}),\bm m_{t-1} \rangle] + (1-\beta)^2\left(\frac{\sigma^2}{b_t}+\E[\|\nabla f(\bm\theta_{t})\|^2]\right).
\end{align}
Therefore, taking the total expectation on both sides of \eqref{eq:B.2.1},
\begin{equation}\label{eq:B.2.3}
\begin{aligned}
\E[f(\bm\theta_{t+1})-f(\bm\theta_{t})] 
&= -\eta_{t} \E[\langle \nabla f(\bm\theta_{t}),\bm m_{t} \rangle] + \frac{L}{2}\eta_{t}^2\E[\|\bm m_{t}\|^2]\\
&\leq -\left\{ (1-\beta)\eta_{t}-\frac{L}{2}(1-\beta)^2\eta_{t}^2 \right\}\E[\|\nabla f(\bm\theta_{t})\|^2] \\
&\quad -\bigg\{ \beta\eta_{t}-L\beta(1-\beta)\eta_{t}^2 \bigg\}\E[\langle \nabla f(\bm\theta_{t}),\bm m_{t-1} \rangle] + \frac{L}{2}\beta^2\eta_{t}^2\E[\|\bm m_{t-1}\|^2] \\
&\quad + \frac{L}{2}(1-\beta)^2\eta_{t}^2\frac{\sigma^2}{b_t}.
\end{aligned}
\end{equation}
Defining the Lyapunov function $\mathcal{L}_t$ as
\[
\mathcal{L}_t :=
\begin{cases}
f(\bm{\theta}_t), & t = 0, \\
f(\bm{\theta}_t) + A_{t-1} \|\bm{m}_{t-1}\|^2, & t > 0,
\end{cases}
\]
we will first consider the case $t > 0$. Here, the following equality holds:
\begin{align}\label{eq:B.2.4}
\E[\mathcal{L}_{t+1} - \mathcal{L}_t] = \E[f(\bm\theta_{t+1})-f(\bm\theta_{t})] + A_{t}\E[\|\bm m_{t}\|^2] - A_{t-1}\E[\|\bm m_{t-1}\|^2].
\end{align}
From \eqref{eq:B.2.2},
\begin{equation}\label{eq:B.2.5}
\begin{aligned}
A_{t}\E[\|\bm m_{t}\|^2] - A_{t-1}\E[\|\bm m_{t-1}\|^2]
&\leq A_t(1-\beta)^2\E[\|\nabla f(\bm\theta_{t})\|^2] \\
&\quad + 2A_t\beta(1-\beta)\E[\langle \nabla f(\bm\theta_{t}),\bm m_{t-1} \rangle] - (A_{t-1}-\beta^2A_t)\E[\|\bm m_{t-1}\|^2] \\
&\quad + A_t(1-\beta)^2\frac{\sigma^2}{b_t}.
\end{aligned}
\end{equation}
Therefore, by combining \eqref{eq:B.2.3} and \eqref{eq:B.2.5}, we obtain the following expression for \eqref{eq:B.2.4}:
\begin{equation}\label{eq:B.2.6}
\begin{aligned}
\E[\mathcal{L}_{t+1} - \mathcal{L}_t] 
&\leq \left[ -\left\{ (1-\beta)\eta_{t}-\frac{L}{2}(1-\beta)^2\eta_{t}^2 \right\} + A_t(1-\beta)^2 \right]\E[\|\nabla f(\bm\theta_{t})\|^2] \\
&\quad + \bigg[-\bigg\{ \beta\eta_{t}-L\beta(1-\beta)\eta_{t}^2 \bigg\} + 2A_t\beta(1-\beta)\bigg]\E[\langle \nabla f(\bm\theta_{t}),\bm m_{t-1} \rangle] \\
&\quad + \left\{ \frac{L}{2}\beta^2\eta_{t}^2 - (A_{t-1}-\beta^2A_t) \right\}\E[\|\bm m_{t-1}\|^2] \\
&\quad + \left\{ \frac{L}{2}(1-\beta)^2\eta_{t}^2\ + A_t(1-\beta)^2 \right\}\frac{\sigma^2}{b_t}.
\end{aligned}
\end{equation}
In order to eliminate the term $\E[\langle \nabla f(\bm\theta_t),\bm m_{t-1} \rangle]$, we choose $A_t$ such that
\[A_t = \frac{\eta_t-L(1-\beta)\eta_t^2}{2(1-\beta)}.\]
To ensure that $A_t\geq0$, we require $\eta_t\leq\frac{1}{L(1-\beta)}$. Under this choice, \eqref{eq:B.2.6} simplifies to
\begin{align}\label{eq:B.2.8}
\begin{aligned}
\E[\mathcal{L}_{t+1} - \mathcal{L}_t] 
&\leq -\frac{1}{2}(1-\beta)\eta_t \E[\|\nabla f(\bm{\theta}_t)\|^2] 
-\frac{1}{2}\left( \frac{\eta_{t-1} - \beta^2 \eta_t}{1-\beta} - L \eta_{t-1}^2 \right) \E[\|\bm{m}_{t-1}\|^2] 
+ \frac{1}{2}(1-\beta)\eta_t \frac{\sigma^2}{b_t} \\
&\leq -\frac{1}{2}(1-\beta)\eta_t \E[\|\nabla f(\bm{\theta}_t)\|^2] 
-\frac{1}{2}\left( \frac{1 - c\beta^2}{1-\beta} - L \eta_{t-1} \right) \eta_{t-1} \E[\|\bm{m}_{t-1}\|^2] 
+ \frac{1}{2}(1-\beta)\eta_t \frac{\sigma^2}{b_t} \\
&\leq -\frac{1}{2}(1-\beta)\eta_t \E[\|\nabla f(\bm{\theta}_t)\|^2] 
+ \frac{1}{2}(1-\beta)\eta_t \frac{\sigma^2}{b_t}.
\end{aligned}
\end{align}
The first inequality follows from the choice of the Lyapunov coefficient $A_t$. The second inequality uses the technical condition \eqref{lr_growth_control}, i.e., $\frac{\eta_t}{\eta_{t-1}} \leq c$. The third inequality holds by assuming
\[
\eta_t \leq \frac{1 - c\beta^2}{L(1 - \beta)},
\]
which ensures that the coefficient of $\E[\|\bm{m}_{t-1}\|^2]$ is non-positive and therefore removable from the upper bound.

Furthermore, since $1 \leq c < \frac{1}{\beta^2}$, it follows that $\eta_t \leq \frac{1 - c\beta^2}{L(1-\beta)} \leq \frac{1}{L(1-\beta)}$, which guarantees that the Lyapunov coefficient satisfies $A_t \geq 0$.

Next, in the case $t=0$, $\bm m_{-1}=\bm0$, together with \eqref{eq:B.2.2} and \eqref{eq:B.2.3}, leads to
\begin{align}\label{eq:B.2.9}
\begin{aligned}
\E[\mathcal{L}_1-\mathcal{L}_0] 
&= \E[f(\bm\theta_1)-f(\bm\theta_0)]+A_0\E[\|\bm m_0\|^2]\\
&\leq \left[ -\left\{ (1-\beta)\eta_0-\frac{L}{2}(1-\beta)^2\eta_0^2 \right\}+A_0(1-\beta)^2 \right] \E[\|\nabla f(\bm\theta_0)\|^2]\\
&\quad + \left\{ \frac{L}{2}(1-\beta)^2\eta_0^2 + A_0(1-\beta)^2 \right\} \frac{\sigma^2}{b_0}\\
&= -\frac{1}{2}(1-\beta)\eta_0\E[\|\nabla f(\bm\theta_{0})\|^2] + \frac{1}{2}(1-\beta)\eta_0\frac{\sigma^2}{b_0}.
\end{aligned}
\end{align}
Applying \eqref{eq:B.2.9} to $t=0$ and \eqref{eq:B.2.8} to $1\leq t\leq T-1$ and then summing over $t=0,\ldots,T-1$ yields
\begin{align}\label{eq:B.2.10}
\frac{1}{2}(1-\beta)\sum_{t=0}^{T-1}\eta_t\E[\|\nabla f(\bm\theta_{t})\|^2] \leq \E[\mathcal{L}_0-\mathcal{L}_T] + \frac{1}{2}(1-\beta)\sigma^2\sum_{t=0}^{T-1}\frac{\eta_t}{b_t}.
\end{align}
From the definition of $\mathcal{L}_t$, we have
\begin{align*}
\E[\mathcal{L}_0-\mathcal{L}_T] &= \E[f(\bm\theta_0)-f(\bm\theta_T)] - A_{T-1}\E[\|\bm m_{T-1}\|^2] \\
&\leq \E[f(\bm\theta_0)-f(\bm\theta_T)] \\
&\leq f(\bm\theta_0)-f^\star.
\end{align*}
The final inequality follows from the existence of the lower bound $f^\star$ of $f$. Therefore, \eqref{eq:B.2.10} implies
\[
\sum_{t=0}^{T-1}\eta_t\E[\|\nabla f(\bm\theta_{t})\|^2] \leq \frac{2(f(\bm\theta_0)-f^\star)}{1-\beta} + \sigma^2\sum_{t=0}^{T-1}\frac{\eta_t}{b_t}.
\]
Finally, since $\sum_{t=0}^{T-1}\eta_t>0$, it follows that
\[
\min_{0 \leq t \leq T-1}\E[\|\nabla f(\bm\theta_{t})\|^2] \leq \frac{2(f(\bm\theta_0)-f^\star)}{1-\beta}\frac{1}{\sum_{t=0}^{T-1}\eta_t} + \sigma^2\frac{1}{\sum_{t=0}^{T-1}\eta_t}\sum_{t=0}^{T-1}\frac{\eta_t}{b_t}.
\]
\end{proof}

\subsection{Proof of Corollary \ref{cor:3.1}}
\label{sec:B.3}
\begin{proof}
We begin with the variance term $V_T$. Since the batch size $b$ is constant, we have:
\begin{align*}
V_T 
= \frac{1}{\sum_{t=0}^{T-1}\lambda_t}\sum_{t=0}^{T-1}\frac{\lambda_t}{b} 
= \frac{1}{b}.
\end{align*}
We now proceed to analyze the term $B_T$ for different learning-rate schedules. The proof for the constant learning-rate case closely follows Theorem~3.1 in \citet{umeda2025increasing}. Let $\lambda_{\max} = \lambda$ denote the constant (maximum) learning rate.

\paragraph{[Constant LR \eqref{constant}]}~
Under a constant learning rate $\lambda_t = \lambda$, we have:
\begin{align*}
B_T = \frac{1}{\sum_{t=0}^{T-1} \lambda} = \frac{1}{\lambda T}.
\end{align*}

\paragraph{[Diminishing LR \eqref{diminishing}]}~
Using the lower bound on a sum via integral approximation:
\begin{align*}
\sum_{t=0}^{T-1} \frac{1}{\sqrt{t+1}} 
\geq \int_0^{T} \frac{dt}{\sqrt{t+1}}
= 2(\sqrt{T+1} -1),
\end{align*}
we obtain the following upper bound:
\begin{align*}
B_T 
= \frac{1}{\sum_{t=0}^{T-1} \frac{\lambda}{\sqrt{t+1}}}
\leq
\frac{1}{2\lambda(\sqrt{T+1} -1)}.
\end{align*}

\paragraph{[Cosine LR \eqref{cosine}]}~
We analyze the learning-rate schedule with a cosine decay:
\begin{align*}
\sum_{t = 0}^{KE-1} \lambda_t
&=
\lambda_{\min} KE 
+ 
\frac{\lambda_{\max} - \lambda_{\min}}{2}
KE 
+
\frac{\lambda_{\max} - \lambda_{\min}}{2}
\sum_{t = 0}^{KE -1} \cos \left\lfloor \frac{t}{K} \right\rfloor \frac{\pi}{E}.
\end{align*}
It can be shown that
\begin{align}\label{cos}
\sum_{t = 0}^{KE -1} \cos \left\lfloor \frac{t}{K} \right\rfloor \frac{\pi}{E}
=
K - 1 - \cos \pi = K,
\end{align}
so the total learning-rate sum becomes:
\begin{align*}
\sum_{t = 0}^{KE-1} \lambda_t
&=
\lambda_{\min} KE 
+ 
\frac{\lambda_{\max} - \lambda_{\min}}{2}
KE 
+
\frac{\lambda_{\max} - \lambda_{\min}}{2} K \\
&= 
\frac{1}{2}
\left\{ (\lambda_{\min} + \lambda_{\max}) KE 
+ (\lambda_{\max} - \lambda_{\min}) K \right\} \\
&\geq 
\frac{(\lambda_{\min} + \lambda_{\max})KE}{2}.
\end{align*}
Finally, we obtain the upper bound:
\begin{align*}
B_T = \frac{1}{\sum_{t = 0}^{KE-1} \lambda_t}
\leq 
\frac{2}{(\lambda_{\min} + \lambda_{\max})KE}.
\end{align*}

\paragraph{[Polynomial LR \eqref{polynomial}]}~
Since $(1 - x)^p$ is decreasing on $x \in [0, 1)$, we use the inequality:
\begin{align*}
\int_0^1 (1 - x)^p dx 
< \frac{1}{T} \sum_{t=0}^{T-1} \left(1 - \frac{t}{T} \right)^p,
\end{align*}
which implies:
\begin{align*}
\sum_{t=0}^{T-1} \left(1 - \frac{t}{T} \right)^p 
> \frac{T}{p+1}.
\end{align*}
Therefore, the total learning-rate sum satisfies:
\begin{align*}
\sum_{t=0}^{T-1} \lambda_t 
&= (\lambda_{\max} - \lambda_{\min}) \sum_{t=0}^{T-1}\left(1 - \frac{t}{T} \right)^p + \lambda_{\min} T\\
&> \left( \frac{\lambda_{\max} - \lambda_{\min}}{p+1} + \lambda_{\min} \right) T
= \frac{\lambda_{\max} + \lambda_{\min} p}{p+1} T.
\end{align*}
Therefore, the bound for $B_T$ becomes:
\begin{align*}
B_T 
= \frac{1}{\sum_{t=0}^{T-1} \lambda_t}
\leq 
\frac{p+1}{(\lambda_{\max} + \lambda_{\min} p) T}.
\end{align*}
\end{proof}

\subsection{Proof of Corollary \ref{cor:3.2}}
\label{sec:B.4}
\begin{proof}
We follow the approach outlined in the proof of Theorem A.1 in \citet{umeda2025increasing}. Let $ M \in \N $ and define $T := \sum_{m=0}^{M-1} K_m E_m$, where $E_{\max} := \sup_{M \in \N} \sup_{0 \leq m \leq M} E_m < +\infty$, $K_{\max} := \sup_{M \in \N} \sup_{0 \leq m \leq M} K_m < +\infty$, $S_0 := \N \cap [0, K_0 E_0)$, and $S_m := \N \cap \left[ \sum_{k=0}^{m-1} K_k E_k, \sum_{k=0}^m K_k E_k \right) \; (m \in [M])$.

Consider the learning-rate sequence $\{b_t\}$ defined by the exponential growth schedule \eqref{exponential_bs} with maximum parameter $\lambda_{\max} = \lambda$. By definition,
\[
b_t = \delta^{m \left\lceil \frac{t}{\sum_{k=0}^m K_k E_k} \right\rceil} b_0,
\]
where $\delta > 1$ and $b_0 > 0$.

For each $m$, we have
\[
\sum_{t \in S_m} \frac{1}{b_t} 
= \sum_{t \in S_m} \frac{1}{\delta^{m \left\lceil \frac{t}{\sum_{k=0}^m K_k E_k} \right\rceil} b_0}
\leq \sum_{t \in S_m} \frac{1}{\delta^{m} b_0}
= \frac{|S_m|}{\delta^{m} b_0}
\leq \frac{K_{\max} E_{\max}}{\delta^{m} b_0},
\]
where we used $\left\lceil \frac{t}{\sum_{k=0}^m K_k E_k} \right\rceil \geq 1$ and $|S_m| \leq K_{\max} E_{\max}$.

Summing over $m=0, \ldots, M$, yields
\begin{equation}\label{key_1}
\sum_{m=0}^{M-1} \sum_{t \in S_m} \frac{1}{b_t}
\leq \frac{K_{\max} E_{\max}}{b_0} \sum_{m=0}^{M-1} \frac{1}{\delta^{m}}
\leq \frac{\delta K_{\max} E_{\max}}{(\delta - 1) b_0}.
\end{equation}

\paragraph{[Constant LR \eqref{constant}]}~
For the constant learning rate $\lambda_t = \lambda$, it holds that
\[
V_T = \frac{1}{\sum_{t=0}^{T-1} \lambda} \sum_{t=0}^{T-1} \frac{\lambda}{b_t} = \frac{1}{T} \sum_{t=0}^{T-1} \frac{1}{b_t}.
\]
Using inequality \eqref{key_1}, we obtain
\[
V_T \leq \frac{\delta K_{\max} E_{\max}}{(\delta - 1) b_0 T}.
\]

\paragraph{[Diminishing LR \eqref{diminishing}]}~
For the diminishing learning rate $\lambda_t = \frac{\lambda}{\sqrt{t+1}}$, we have
\[
V_T = \frac{1}{\sum_{t=0}^{T-1} \frac{\lambda}{\sqrt{t+1}}} \sum_{t=0}^{T-1} \frac{\lambda}{\sqrt{t+1} b_t} = \frac{1}{\sum_{t=0}^{T-1} \frac{1}{\sqrt{t+1}}} \sum_{t=0}^{T-1} \frac{1}{\sqrt{t+1} b_t} \leq \frac{1}{\sum_{t=0}^{T-1} \frac{1}{\sqrt{t+1}}} \sum_{t=0}^{T-1} \frac{1}{b_t}.
\]
Since $\sum_{t=0}^{T-1} \frac{1}{\sqrt{t+1}} \geq 2(\sqrt{T+1} - 1)$, it follows that
\[
V_T \leq \frac{\delta K_{\max} E_{\max}}{2 (\delta - 1) b_0 (\sqrt{T+1} - 1)}.
\]

\paragraph{[Cosine LR \eqref{cosine}]}~
Suppose the learning rates satisfy $\lambda_{\min} \leq \lambda_t \leq \lambda_{\max}$. Then,
\[
V_T = \frac{1}{\sum_{t=0}^{T-1} \lambda_t} \sum_{t=0}^{T-1} \frac{\lambda_t}{b_t} \leq \frac{\lambda_{\max}}{\sum_{t=0}^{T-1} \lambda_t} \sum_{t=0}^{T-1} \frac{1}{b_t}.
\]
Applying bounds on $\sum_{t=0}^{T-1} \lambda_t\geq((\lambda_{\min}+\lambda_{\max})KE)/2$ under the cosine schedule yields
\[
V_T \leq \frac{2 \delta \lambda_{\max} K_{\max} E_{\max}}{(\delta - 1)(\lambda_{\min} + \lambda_{\max}) b_0 T}.
\]

\paragraph{[Polynomial LR \eqref{polynomial}]}~
For polynomially decaying learning rates with parameter $p > 0$ and $\lambda_{\min} \leq \lambda_t \leq \lambda_{\max}$, we similarly have
\[
V_T \leq \frac{(p+1) \delta \lambda_{\max} K_{\max} E_{\max}}{(\delta - 1)(p \lambda_{\min} + \lambda_{\max}) b_0 T}.
\]

This completes the proof.
\end{proof}

\subsection{Proof of Corollary \ref{cor:3.3}}
\label{sec:B.5}
We follow the approach outlined in the proof of Theorem A.2 in \citet{umeda2025increasing}. Let $ M \in \N $ and define $T := \sum_{m=0}^{M-1} K_m E_m$, where $E_{\max} := \sup_{M \in \N} \sup_{0 \leq m \leq M} E_m < +\infty$, $K_{\max} := \sup_{M \in \N} \sup_{0 \leq m \leq M} K_m < +\infty$, $S_0 := \N \cap [0, K_0 E_0)$, and $S_m := \N \cap \left[ \sum_{k=0}^{m-1} K_k E_k, \sum_{k=0}^m K_k E_k \right) \; (m \in [M])$.

\begin{proof}
We have that
\begin{align*}
\sum_{m=0}^{M-1} \sum_{t \in S_m} \lambda_t
&= 
\sum_{m=0}^{M-1} \sum_{t \in S_m} \gamma^{m \left\lceil \frac{t}{\sum_{k=0}^m K_k E_k} \right\rceil} \lambda_0
\geq 
\lambda_0 K_{\min} E_{\min} \sum_{m=0}^{M-1} \gamma^m\\
&=
\lambda_0 K_{\min} E_{\min} \frac{\gamma^M - 1}{\gamma - 1}
> 
\frac{\lambda_0 K_{\min} E_{\min} \gamma^M}{\gamma^2}
> 
\frac{\lambda_0 K_{\min} E_{\min} \gamma^M}{\delta^2}
\end{align*}
and
\begin{align*}
\sum_{m=0}^{M-1} \sum_{t \in S_m} \frac{\lambda_t}{b_t}
&= 
\sum_{m=0}^{M-1} \sum_{t \in S_m} \frac{\gamma^{ m \left\lceil \frac{t}{\sum_{k=0}^m K_k E_k} \right\rceil} \lambda_0}{\delta^{m\left\lceil \frac{t}{\sum_{k=0}^m K_k E_k} \right\rceil} b_0}
\leq 
K_{\max} E_{\max} \frac{\lambda_0}{b_0}
\sum_{m=0}^{M-1} \frac{\gamma^{m}}{\delta^{m}}\\
&\leq 
K_{\max} E_{\max} \frac{\lambda_0}{b_0}
\sum_{m=0}^{M-1}
\left( 
\frac{\gamma}{\delta} 
\right)^m
\leq
K_{\max} E_{\max} \frac{\lambda_0}{b_0} \frac{1}{1 - \hat{\gamma}},
\end{align*}
where $\hat{\gamma} = \frac{\gamma}{\delta} < 1$. Hence,
\begin{align*}
B_T 
= 
\frac{1}{\sum_{t=0}^{T-1} \lambda_t}
\leq 
\frac{\delta^2}{\lambda_0 K_{\min} E_{\min} \gamma^M}
\end{align*}
and
\begin{align*}
V_T 
= 
\frac{1}{\sum_{t=0}^{T-1} \lambda_t} \sum_{t=0}^{T-1} \frac{\lambda_t}{b_t}
\leq
\frac{K_{\max} E_{\max} \lambda_0 \delta^2}{K_{\min} E_{\min} b_0 (1 - \hat{\gamma}) \gamma^M}.
\end{align*}
\end{proof}

\subsection{Proof of Corollary \ref{cor:3.4}}
\label{sec:B.6}
Corollary \ref{cor:3.4} is directly obtained by applying the results established in Corollaries \ref{cor:3.2} and \ref{cor:3.3}, and thus the proof is omitted.

\section{Additional Experiments}
\label{sec:C}
The code used for the experiments is publicly available at \url{https://github.com/iiduka-researches/sgdm_lr_bs_schedule}

The details of the learning schedules (i)–(iv) used in the experiments are as follows.
\begin{description}
\item[(i) Figure~\ref{fig:scheduler_1}] The batch size is constant at 128, and the learning rate follows one of the schedules: Constant \eqref{constant}, Diminishing \eqref{diminishing}, Cosine \eqref{cosine}, Polynomial \eqref{polynomial} with $p=2$, or Linear \eqref{polynomial} with $p=1$.
\item[(ii) Figure~\ref{fig:scheduler_2}] The batch size doubles every 30 epochs, ranging from $2^3$ to $2^{12}$. The learning rate follows the same schedules as in (i). However, when the batch size is small, the number of steps per epoch becomes large, which causes the Polynomial and Linear schedules to decay rapidly in the initial phase.
\item[(iii) Figure~\ref{fig:scheduler_3}] The batch size is the same as in (ii). The learning rate increases every 30 epochs by factors of $1.080$, $1.196$, and $1.292$, starting from 0.1 and reaching 0.2, 0.5, and 1.0, respectively. To satisfy the condition in \eqref{lr_growth_control}, we set $\beta=0.87$ when $\eta_{\max}=1.0$.
\item[(iv) Figure~\ref{fig:scheduler_4}] The batch size is the same as in (ii). The learning rate increases every 3 epochs, starting from 0.1 and using the same multiplicative factors as in (iii). After this initial increase, it follows either the constant schedule \eqref{warmup_constant} or the cosine schedule \eqref{warmup_cosine}.
\end{description}

\begin{figure}[ht]
\centering
 \begin{minipage}{0.24\linewidth}
 \centering
 \includegraphics[width=1.0\linewidth]{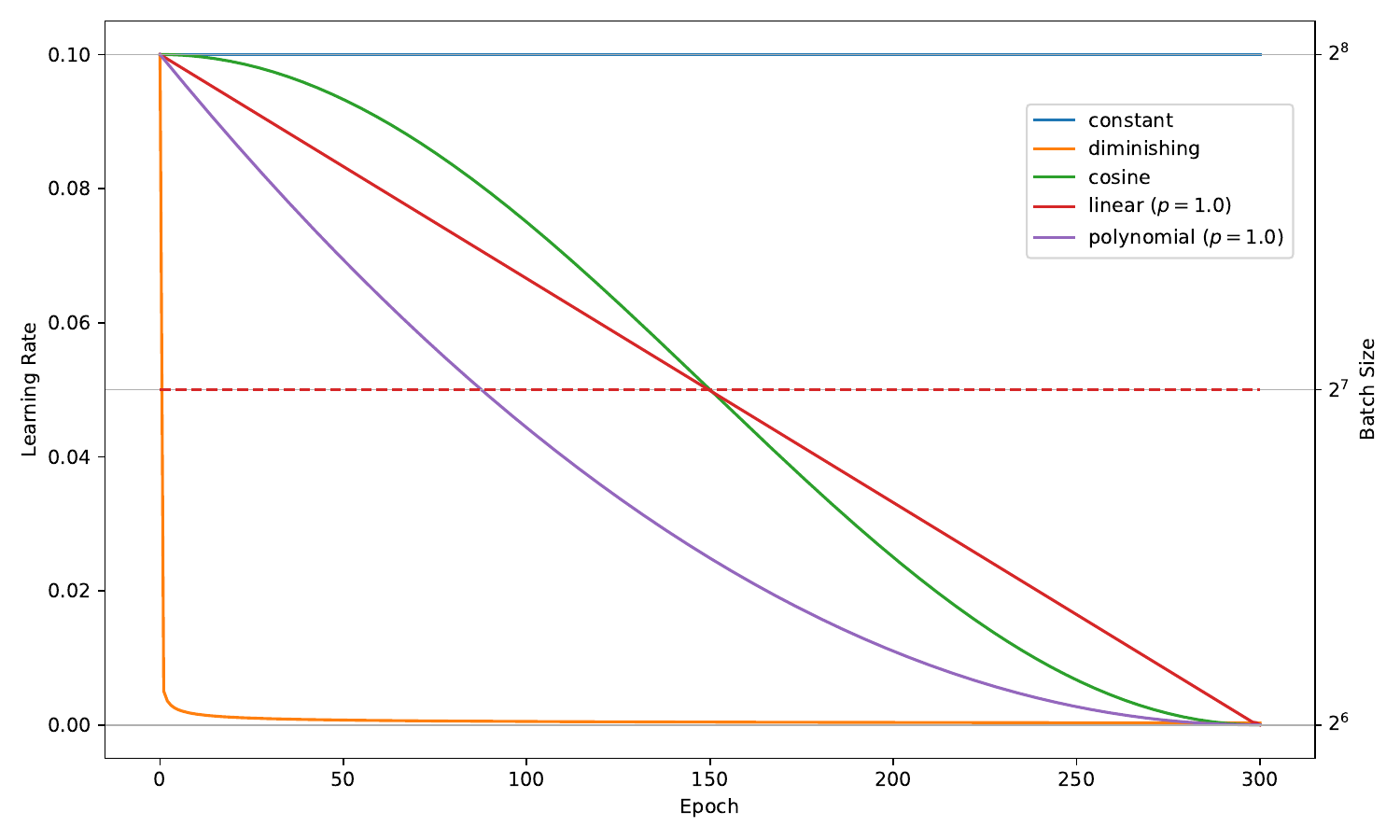}
 \caption{(i) Constant BS + decay LR}
 \label{fig:scheduler_1}
 \end{minipage} 
 \begin{minipage}{0.24\linewidth}
 \centering
 \includegraphics[width=1.0\linewidth]{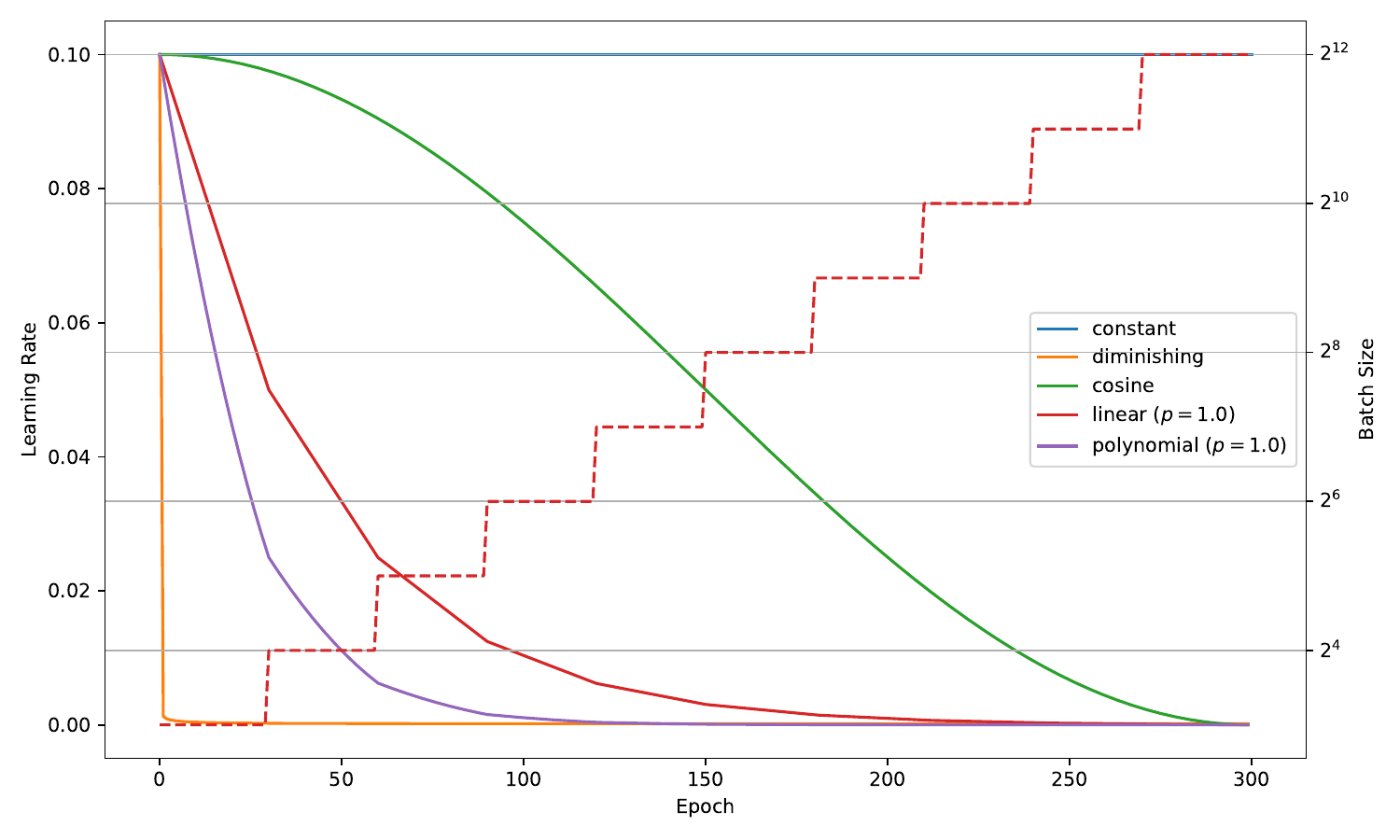}
 \caption{(ii) Increasing BS + decay LR}
 \label{fig:scheduler_2}
 \end{minipage} 
 \begin{minipage}{0.24\linewidth}
 \centering
 \includegraphics[width=1.0\linewidth]{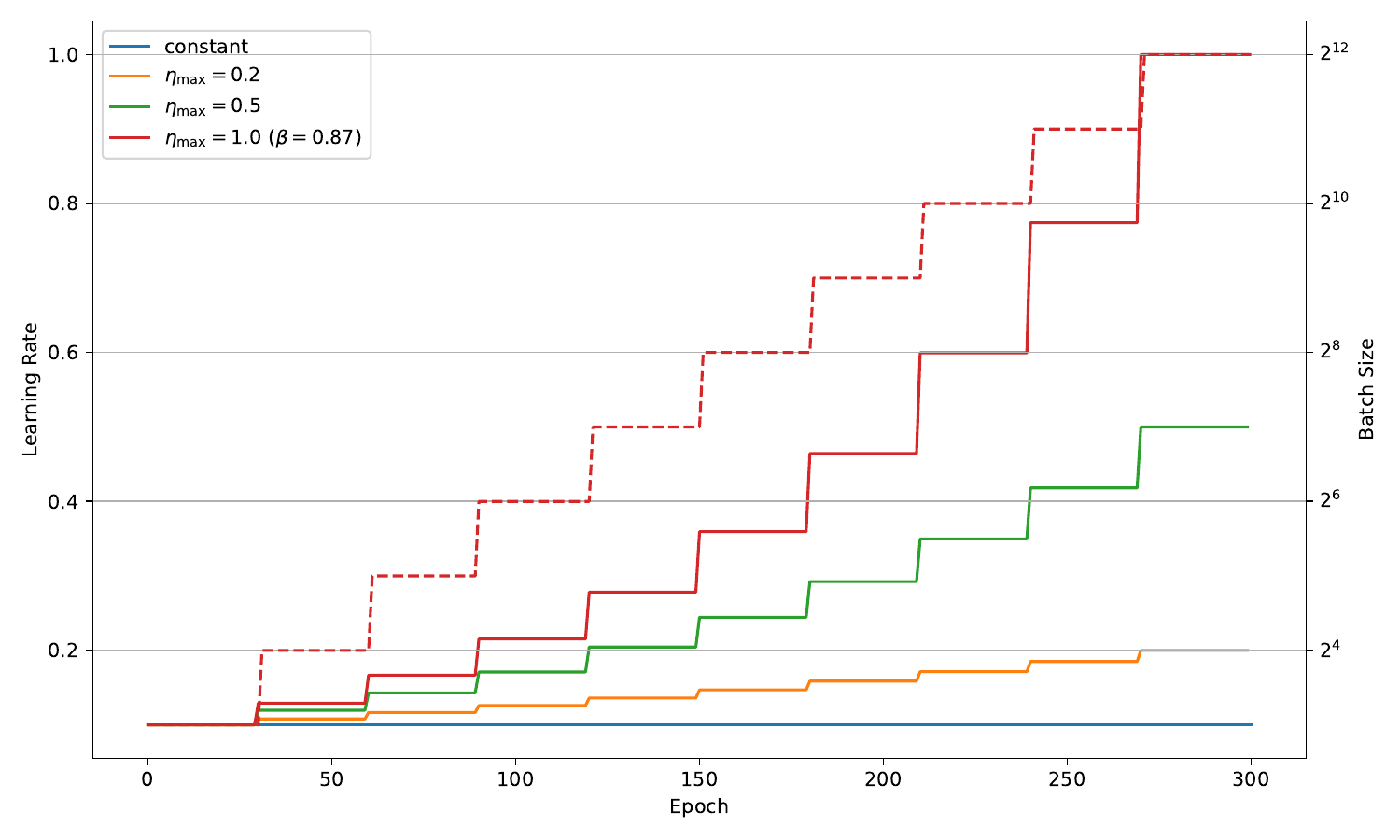}
 \caption{(iii) Increasing BS + increasing LR}
 \label{fig:scheduler_3}
 \end{minipage}
 \begin{minipage}{0.24\linewidth}
 \centering
 \includegraphics[width=1.0\linewidth]{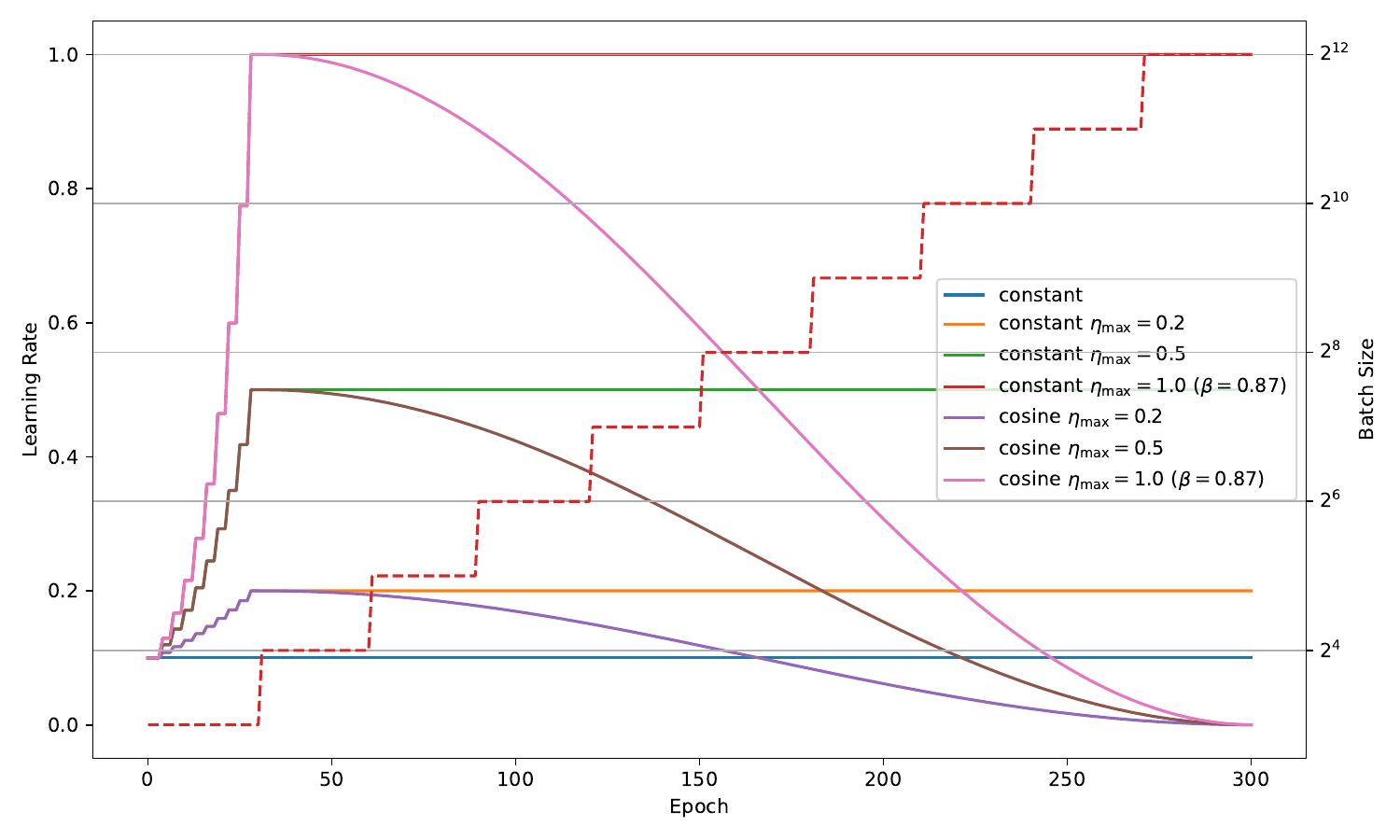}
 \caption{(iv) Increasing BS + warmup LR}
 \label{fig:scheduler_4}
 \end{minipage}
\end{figure}

\subsection{Experimental Results for Normalized Stochastic Heavy Ball (NSHB)}
\label{sec:C.1}
We evaluated NSHB under the four schedule types illustrated in Figures~\ref{fig:scheduler_1}--\ref{fig:scheduler_4}. The detailed per-schedule plots are provided in Figures~\ref{fig:nshb_const_bs}--\ref{fig:nshb_warmup}.

\begin{figure*}[htbp]
 \centering
 \begin{minipage}{0.32\linewidth}
 \centering
 \includegraphics[width=1.0\linewidth]{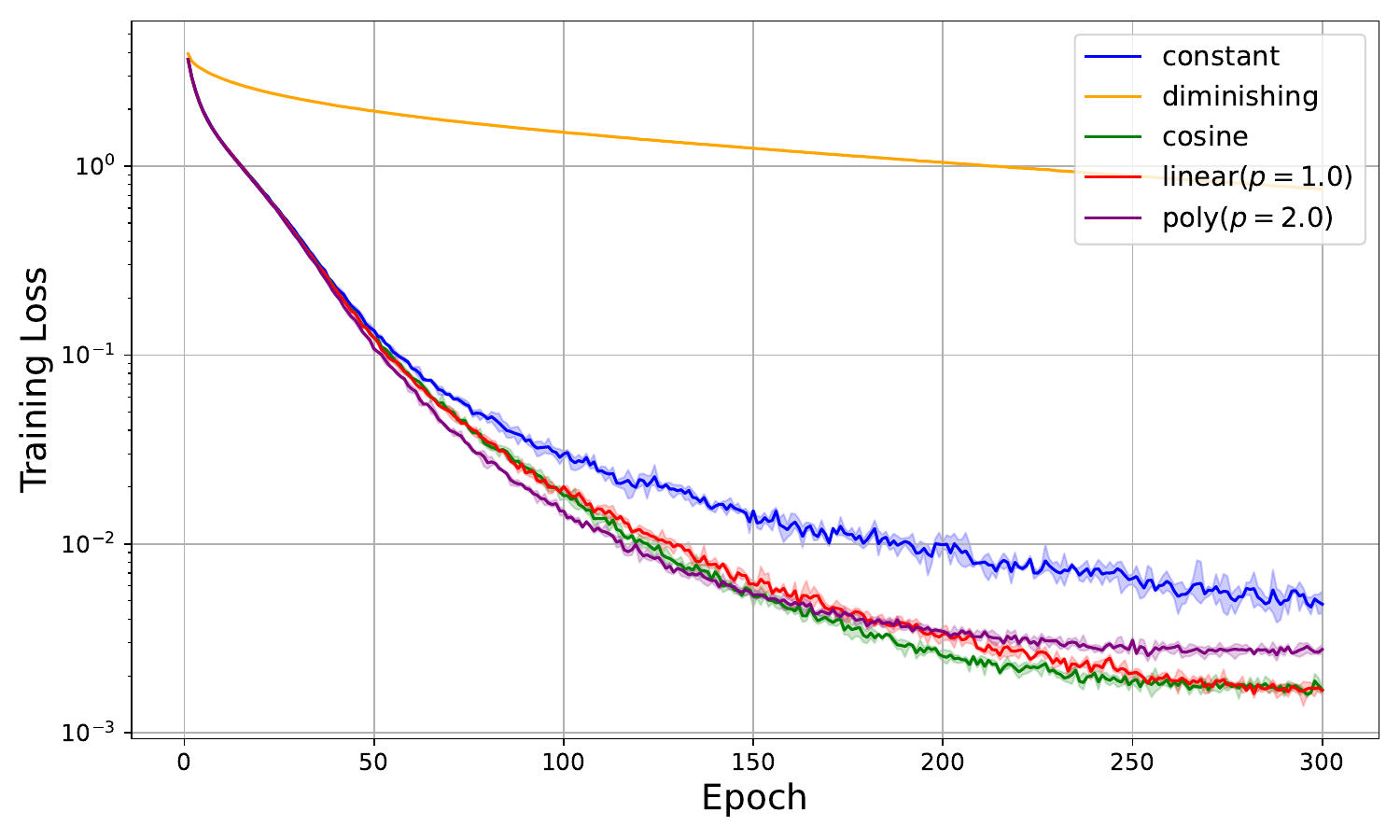}
 \caption*{Training loss}
 \end{minipage} 
 \begin{minipage}{0.32\linewidth}
 \centering
 \includegraphics[width=1.0\linewidth]{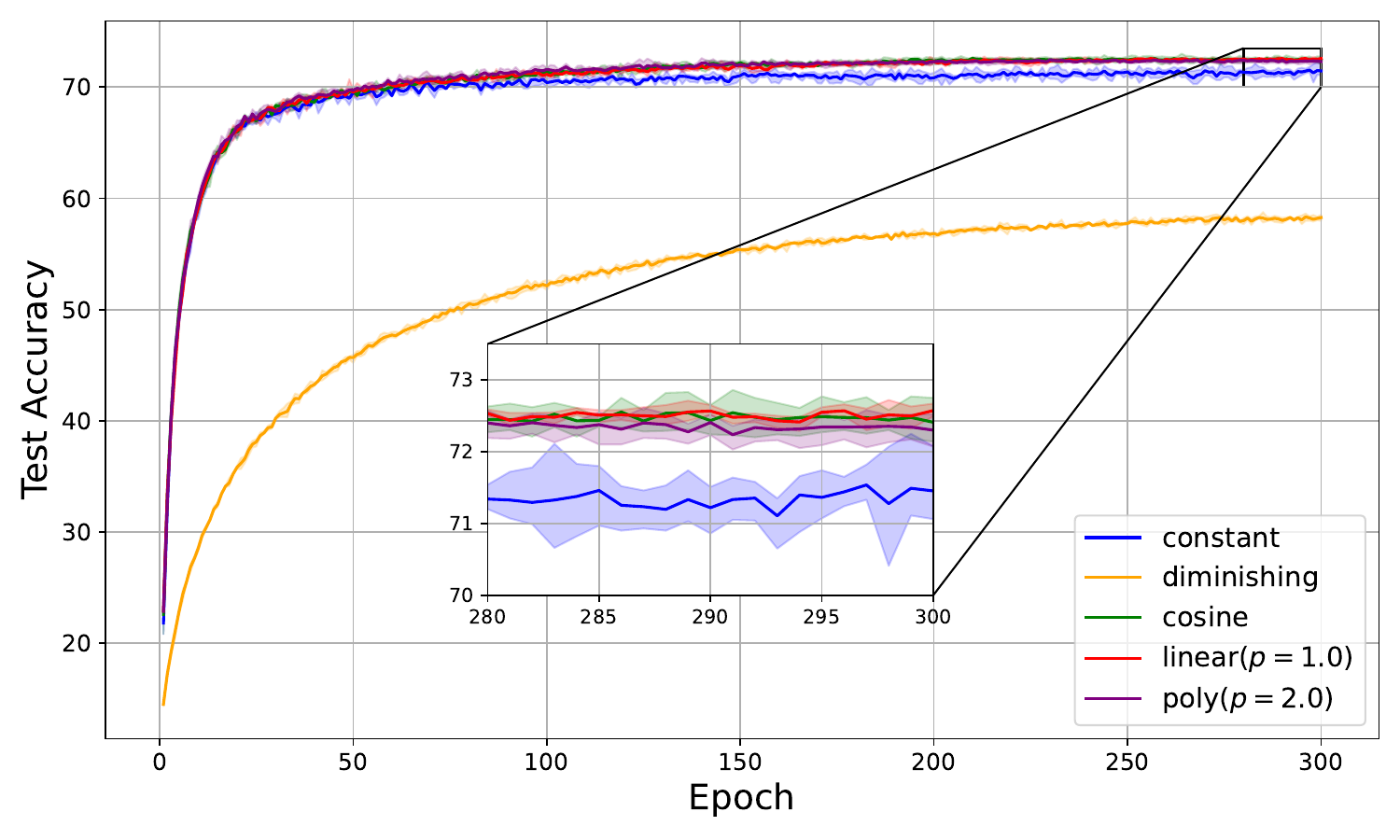}
 \caption*{Test accuracy}
 \end{minipage} 
 \begin{minipage}{0.32\linewidth}
 \centering
 \includegraphics[width=1.0\linewidth]{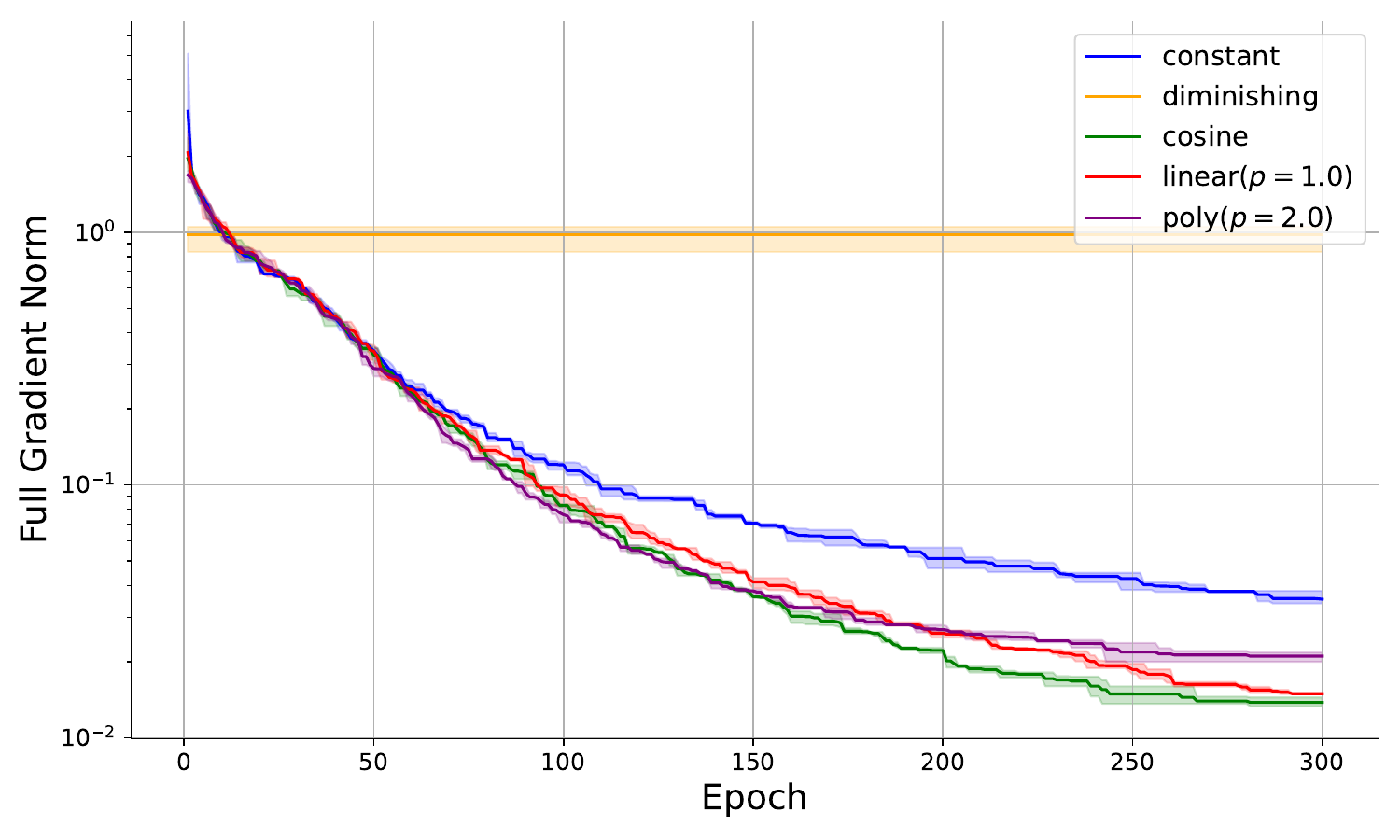}
 \caption*{Full gradient norm}
 \end{minipage} 
 \caption{Results of NSHB under (i): constant batch size and decaying learning rate.}
 \label{fig:nshb_const_bs}
\end{figure*}

\begin{figure*}[htbp]
 \centering
 \begin{minipage}{0.32\linewidth}
 \centering
 \includegraphics[width=1.0\linewidth]{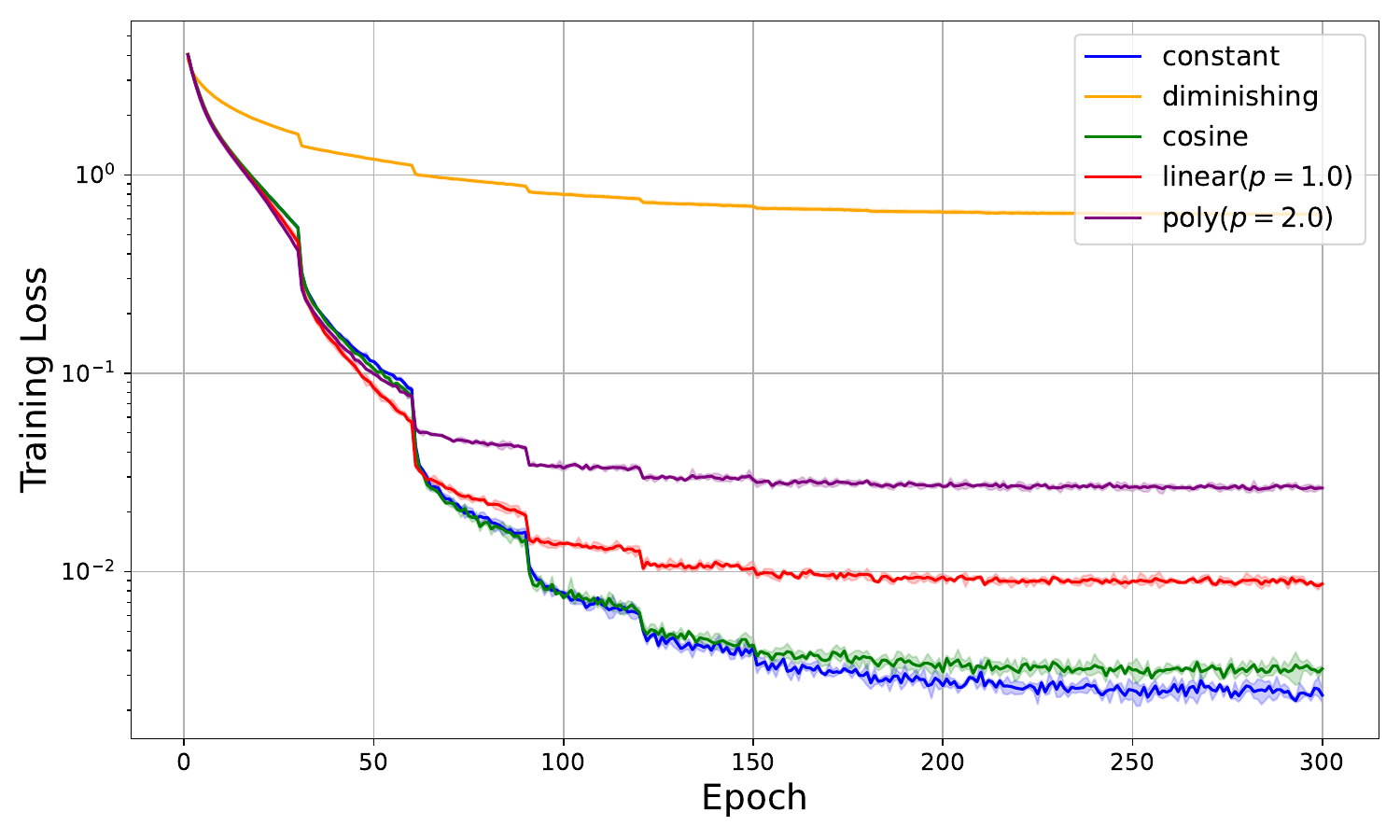}
 \caption*{Training loss}
 \end{minipage} 
 \begin{minipage}{0.32\linewidth}
 \centering
 \includegraphics[width=1.0\linewidth]{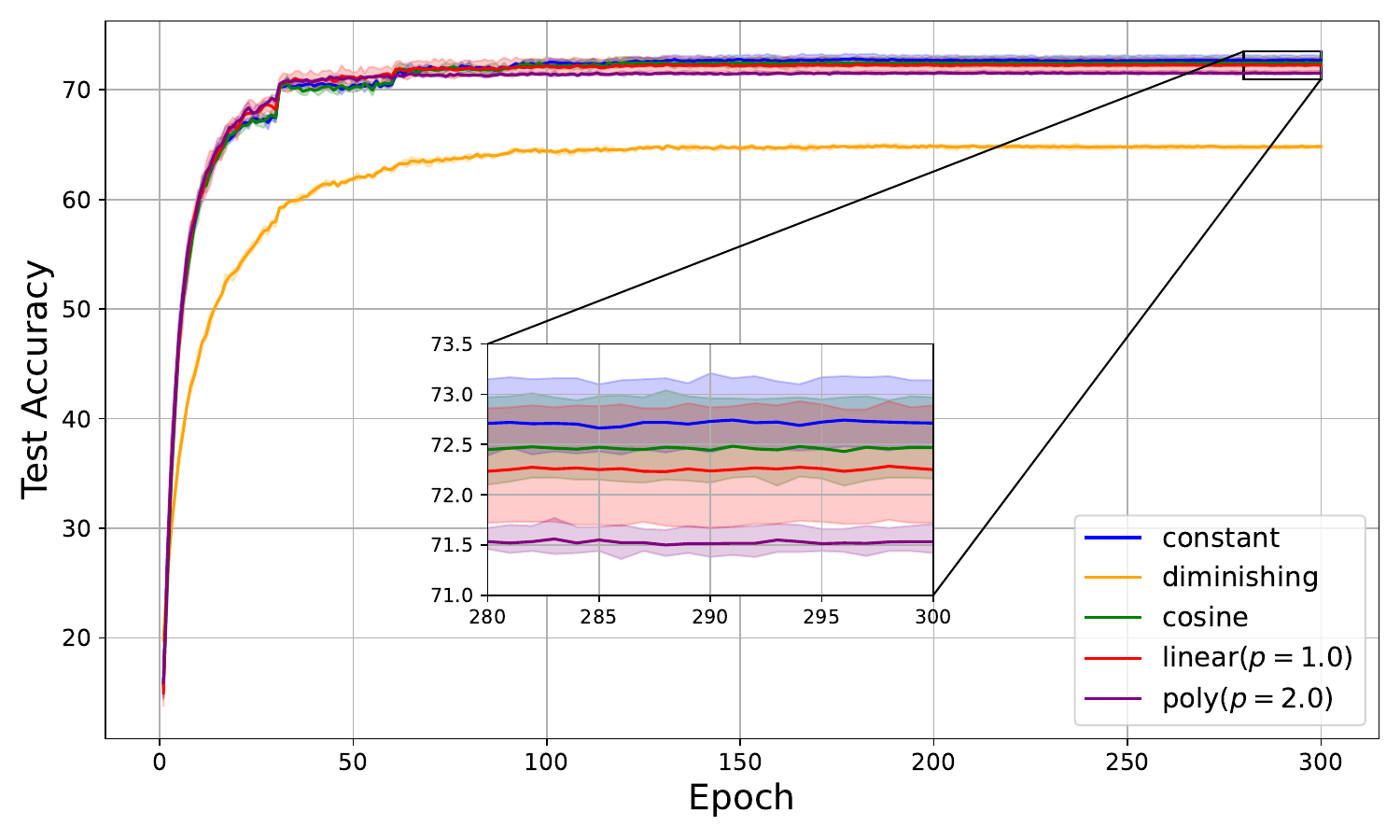}
 \caption*{Test accuracy}
 \end{minipage}
 \begin{minipage}{0.32\linewidth}
 \centering
 \includegraphics[width=1.0\linewidth]{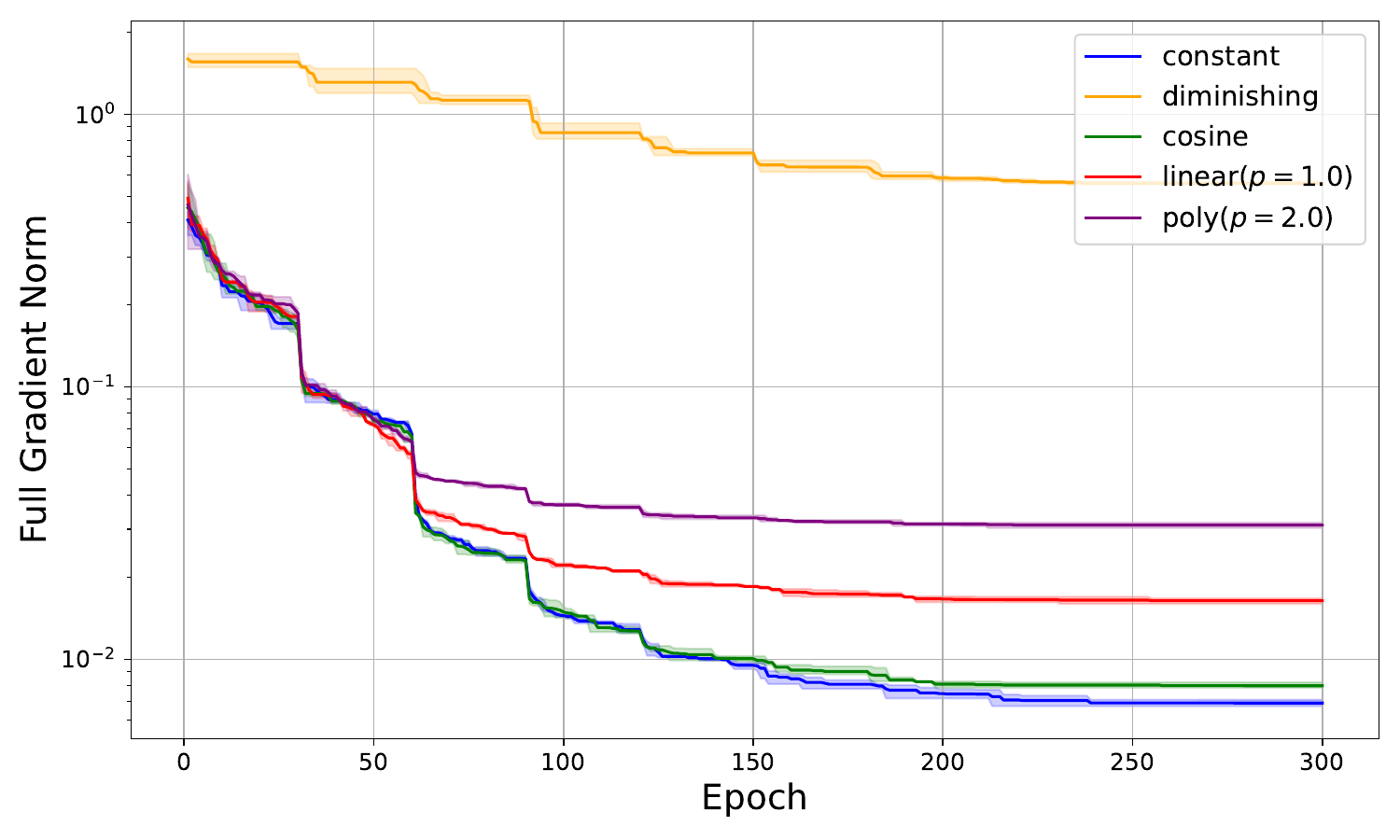}
 \caption*{Full gradient norm}
 \end{minipage}
 \caption{Results of NSHB under (ii): increasing batch size and decaying learning rate.}
 \label{fig:nshb_incr_bs}
\end{figure*}

\begin{figure*}[htbp]
 \centering
 \begin{minipage}{0.32\linewidth}
 \centering
 \includegraphics[width=1.0\linewidth]{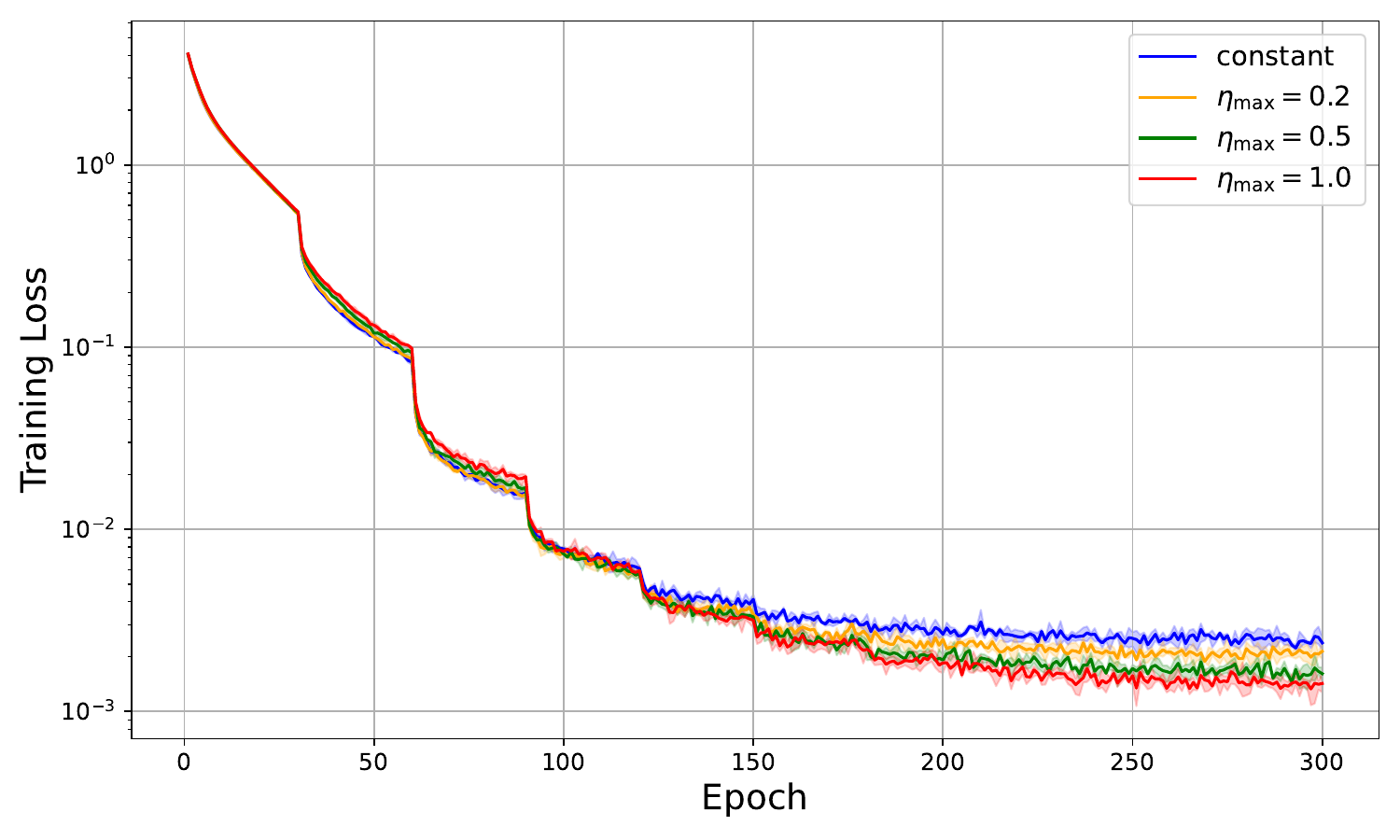}
 \caption*{Training loss}
 \end{minipage} 
 \begin{minipage}{0.32\linewidth}
 \centering
 \includegraphics[width=1.0\linewidth]{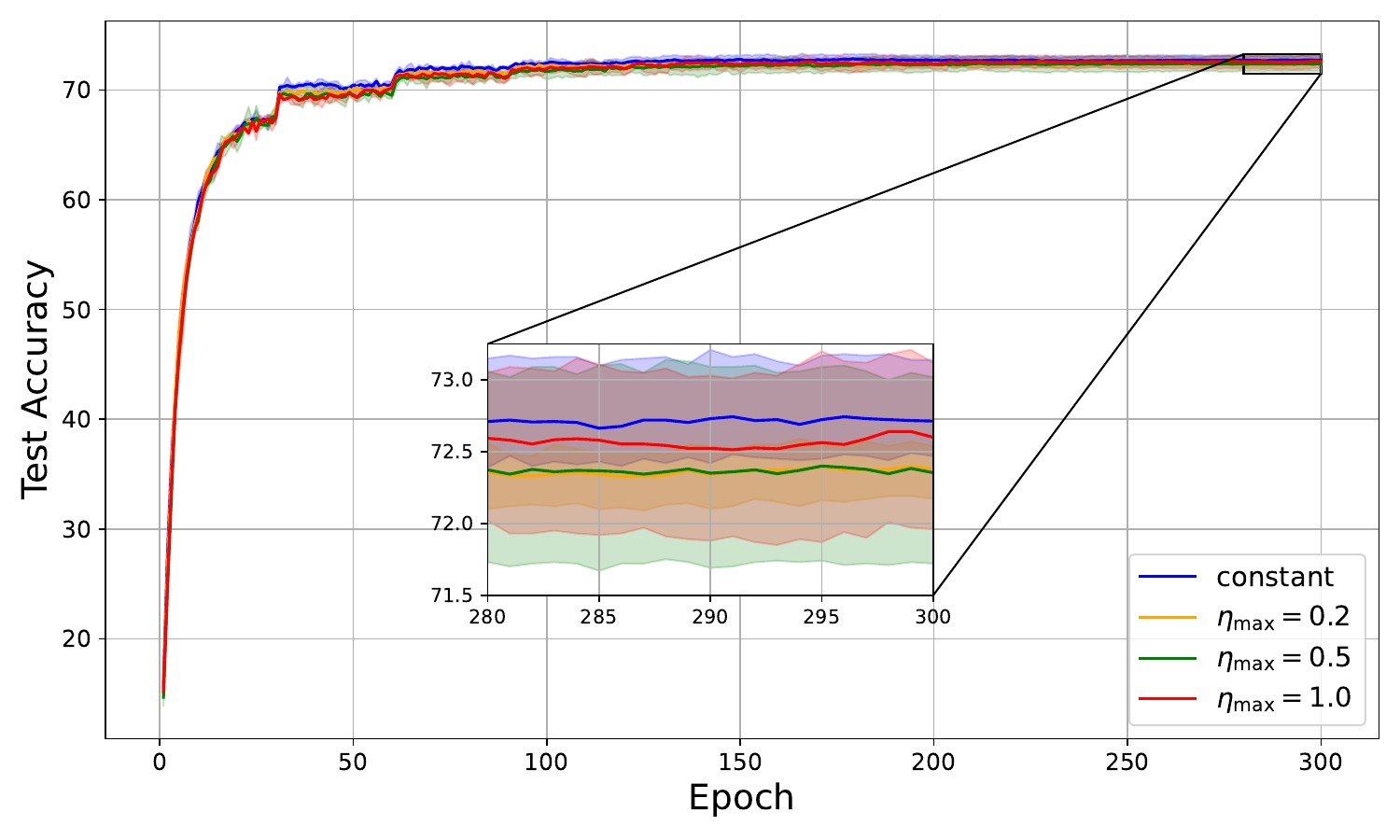}
 \caption*{Test accuracy}
 \end{minipage}
 \begin{minipage}{0.32\linewidth}
 \centering
 \includegraphics[width=1.0\linewidth]{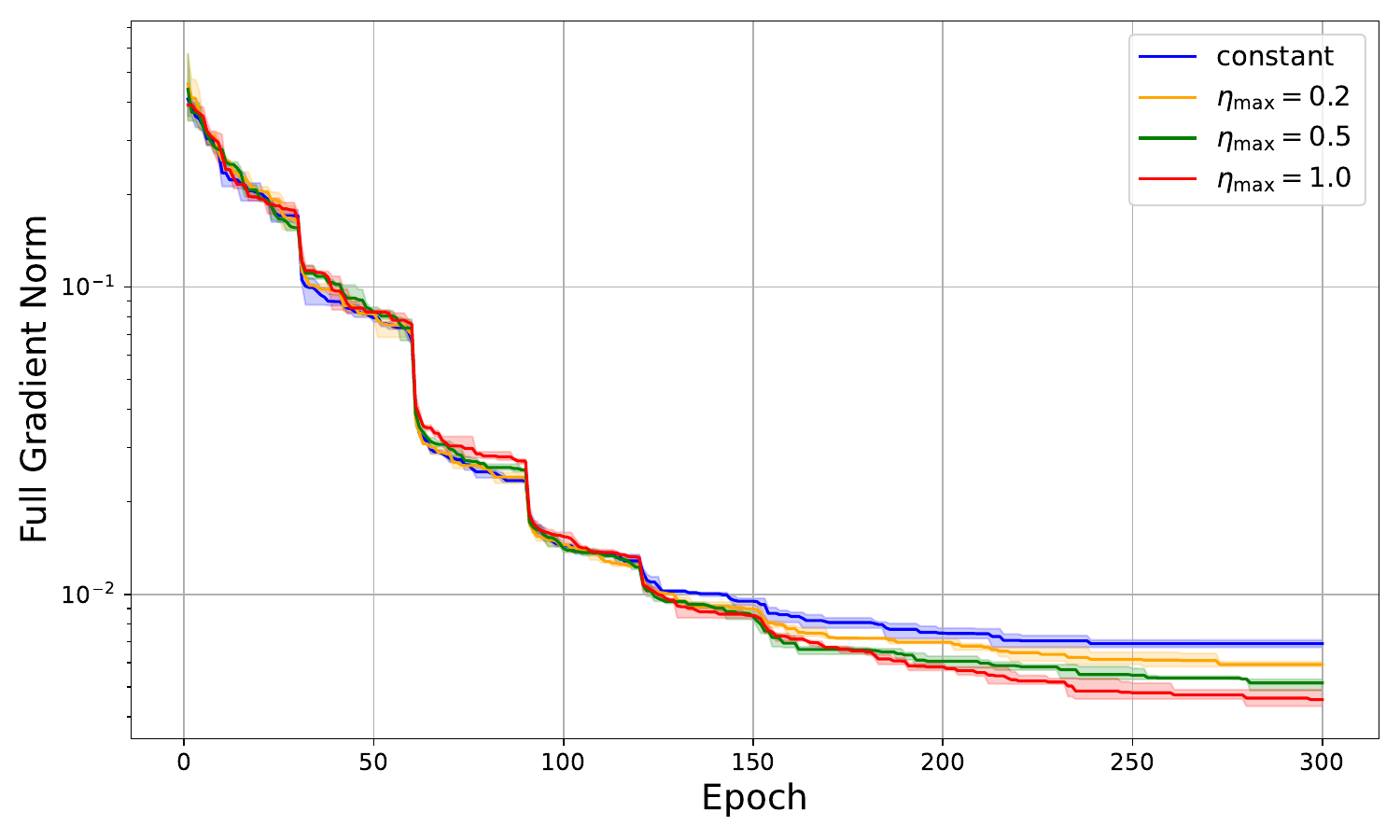}
 \caption*{Full gradient norm}
 \end{minipage}
 \caption{Results of NSHB under (iii): increasing batch size and increasing learning rate.}
 \label{fig:nshb_incr_lr}
\end{figure*}

\begin{figure*}[htbp]
 \centering
 \begin{minipage}{0.32\linewidth}
 \centering
 \includegraphics[width=1.0\linewidth]{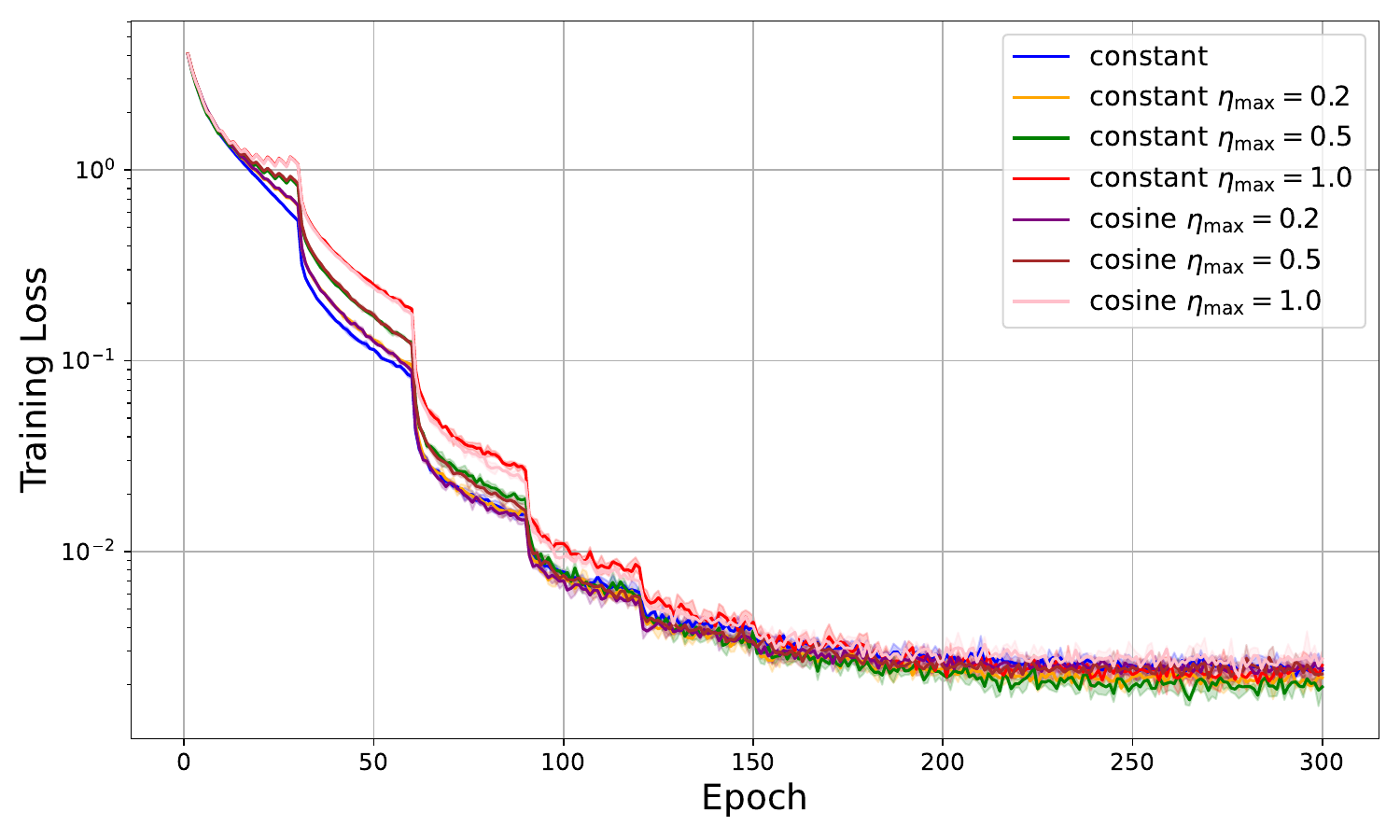}
 \caption*{Training loss}
 \end{minipage} 
 \begin{minipage}{0.32\linewidth}
 \centering
 \includegraphics[width=1.0\linewidth]{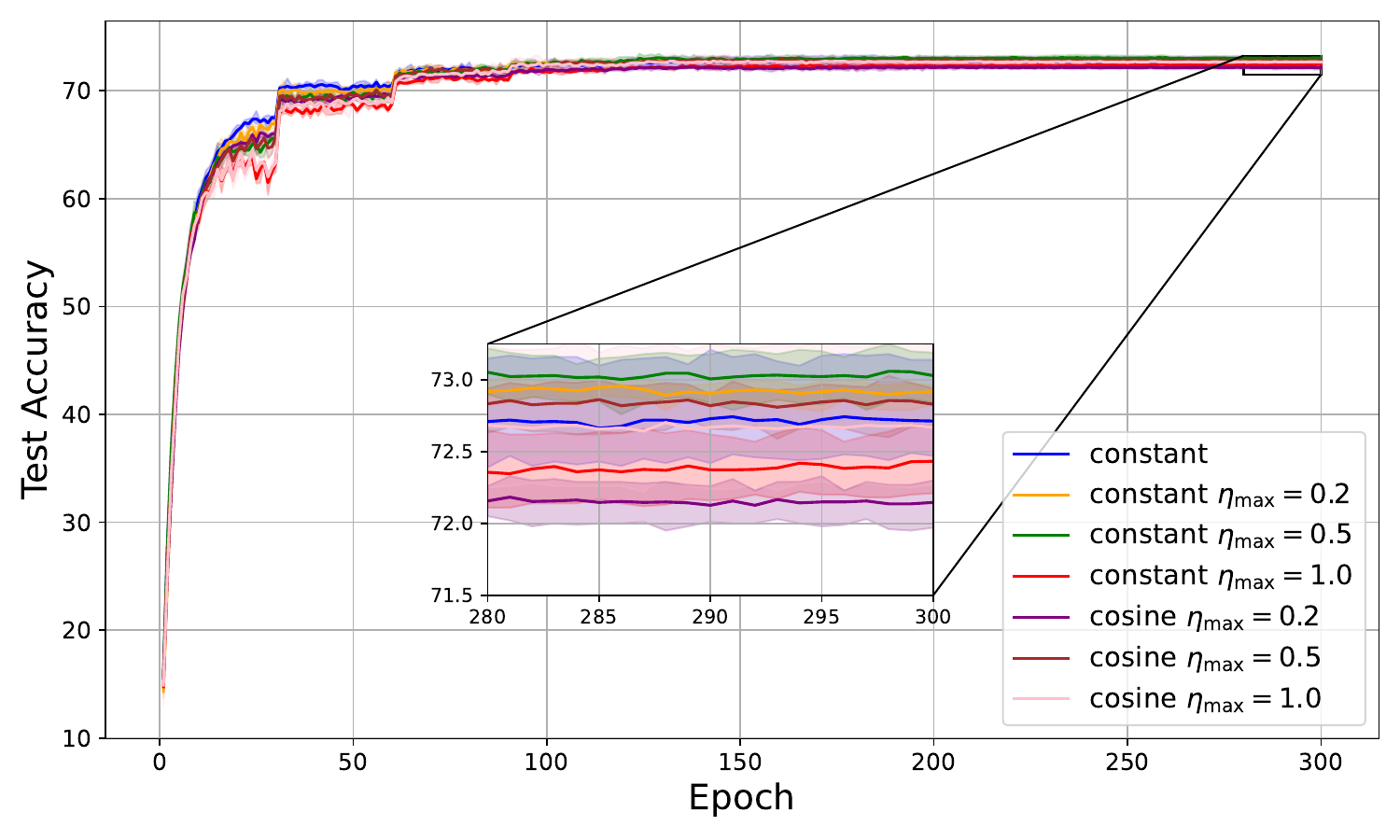}
 \caption*{Test accuracy}
 \end{minipage}
 \begin{minipage}{0.32\linewidth}
 \centering
 \includegraphics[width=1.0\linewidth]{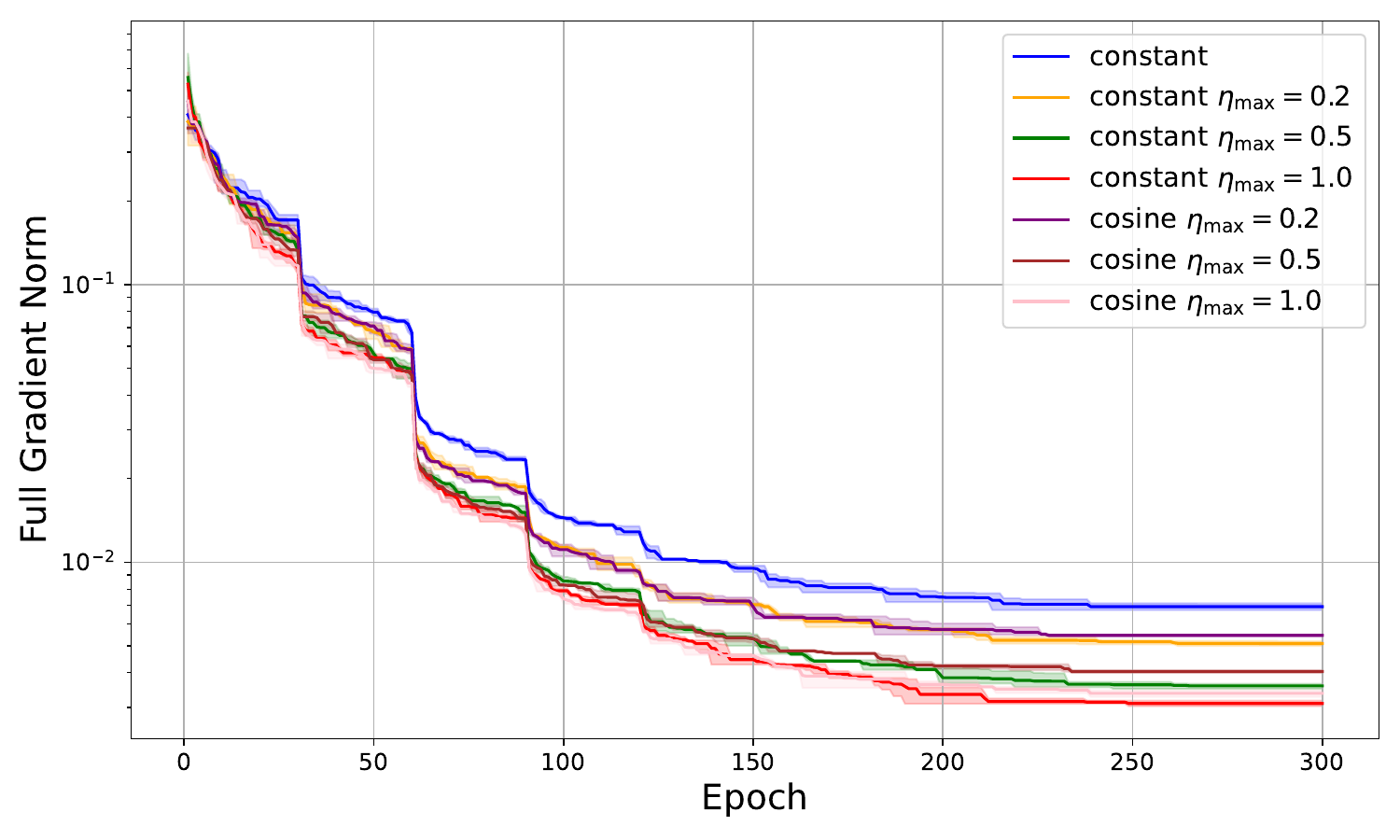}
 \caption*{Full gradient norm}
 \end{minipage}
\caption{Results of NSHB under (iv): increasing batch size and warmup learning rate.}
 \label{fig:nshb_warmup}
\end{figure*}

\subsection{Experimental Results for Stochastic Heavy Ball (SHB)}
\label{sec:C.2}
This section presents the experimental results of SHB under the four schedules (Figures~\ref{fig:scheduler_1}--\ref{fig:scheduler_4}). The SHB experiments included two learning rate settings, referred to as “high” and “low,” which correspond approximately to the nominal learning rate of NSHB and one-tenth of that rate, respectively.

The use of a smaller learning rate for SHB is motivated by the allowable learning rates for NSHB and SHB, as specified in Theorem~\ref{thm:general}:
\[
\eta_t \in \left[0, \frac{1 - c \beta^2}{L(1 - \beta)} \right) \quad \text{(NSHB)}, \quad
\alpha_t \in \left[0, \frac{1 - c \beta^2}{L} \right) \quad \text{(SHB)},
\]
from which it follows that SHB requires a smaller learning rate than NSHB.

\begin{figure}[htbp]
 \centering
 \begin{tabular}{ccc}
 \includegraphics[width=0.3\linewidth]{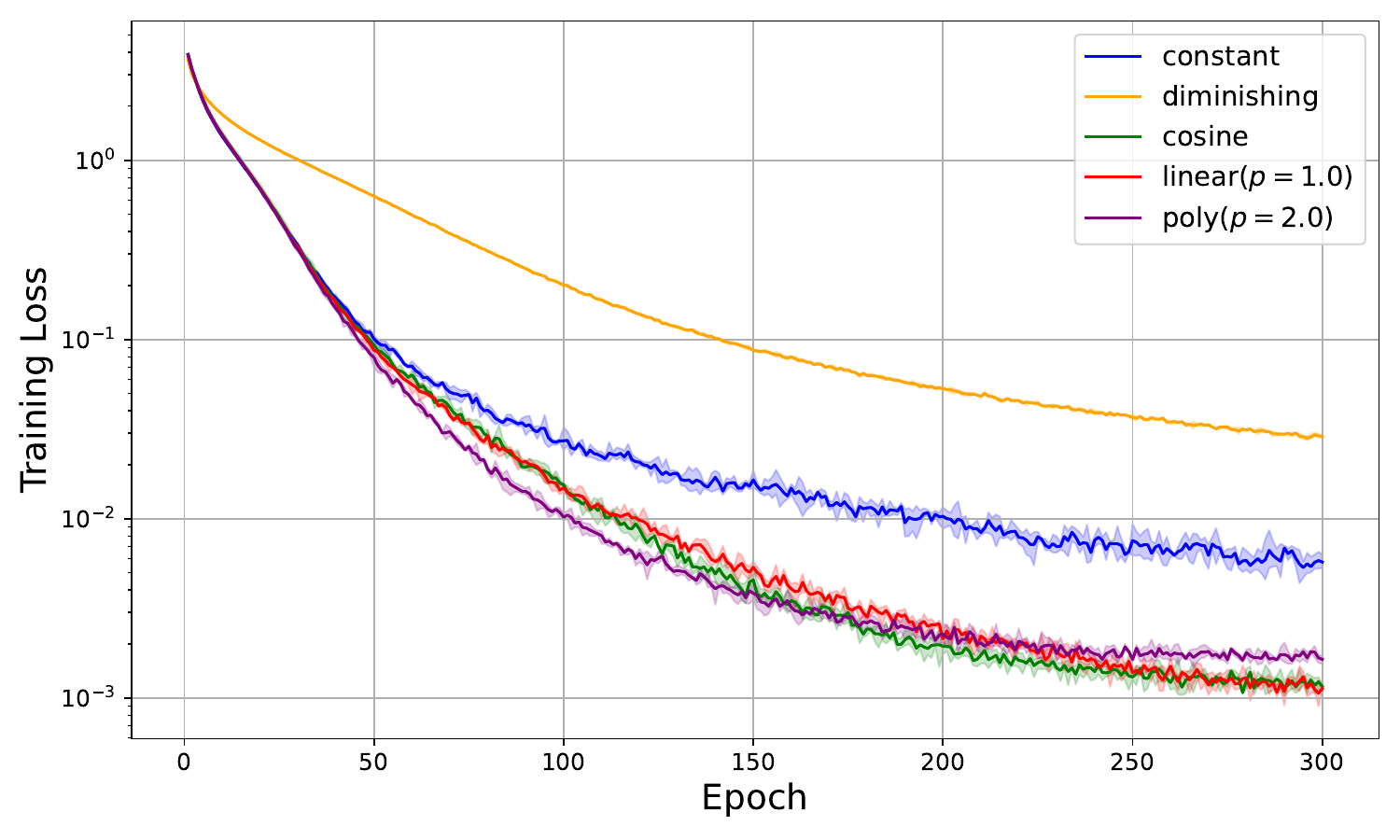} &
 \includegraphics[width=0.3\linewidth]{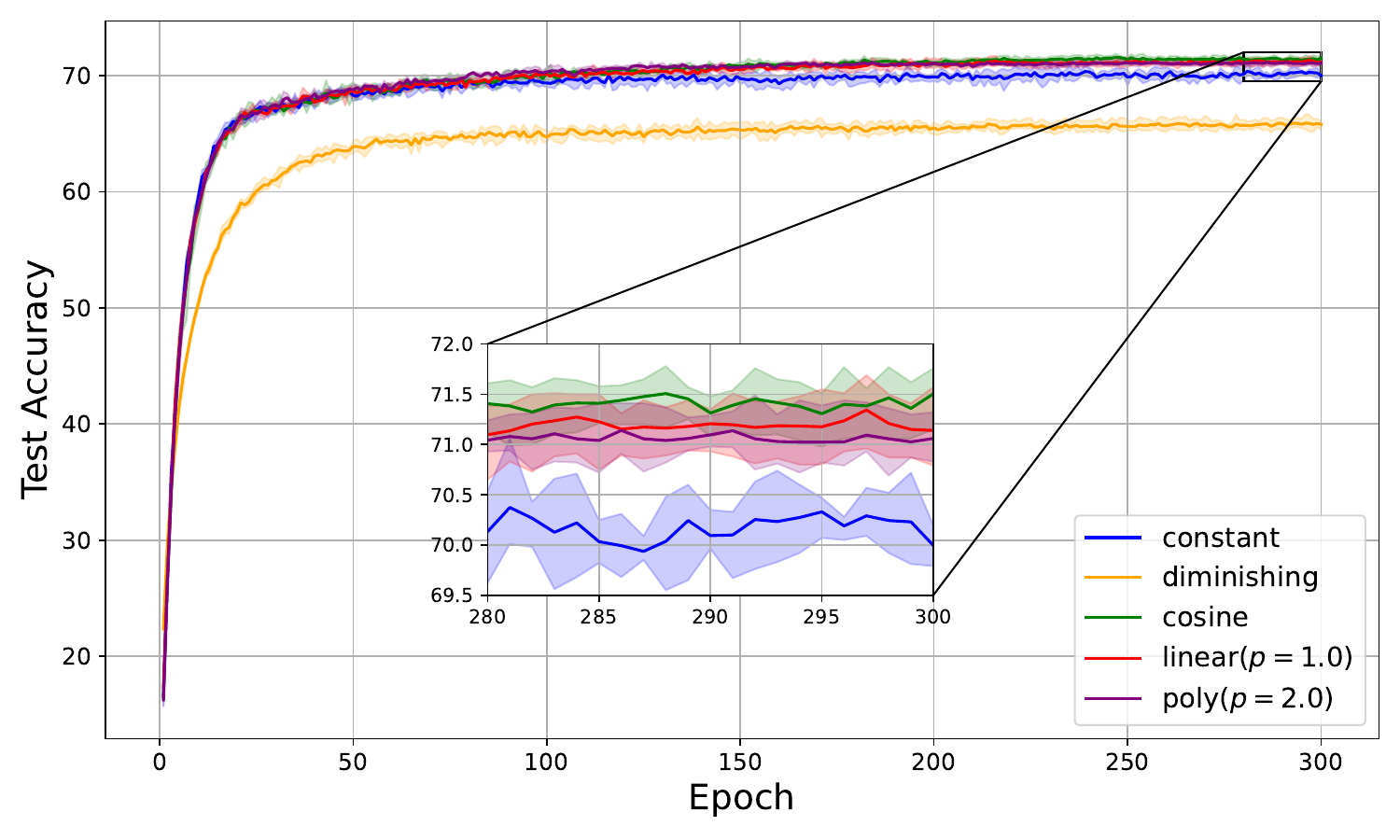} &
 \includegraphics[width=0.3\linewidth]{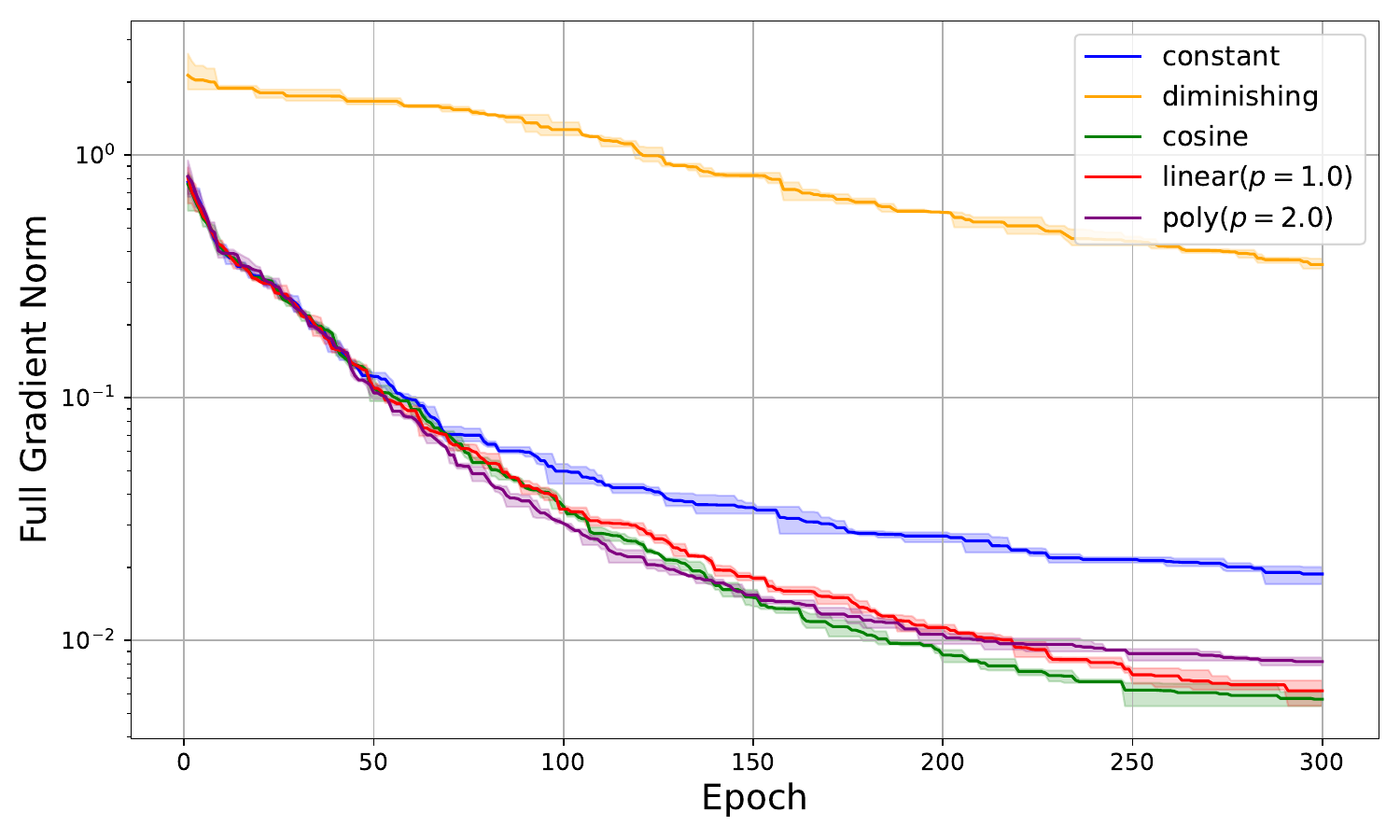} \\
 \includegraphics[width=0.3\linewidth]{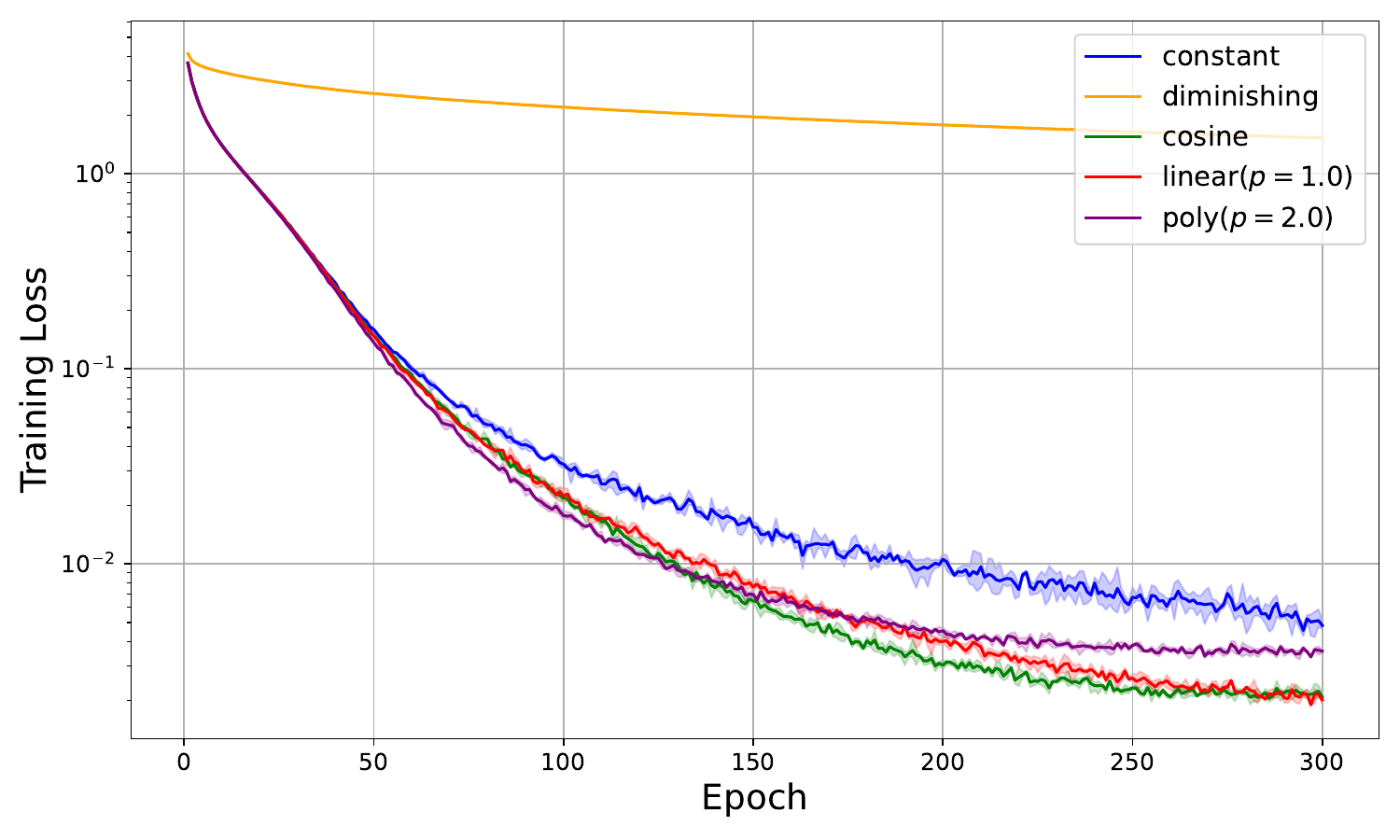} &
 \includegraphics[width=0.3\linewidth]{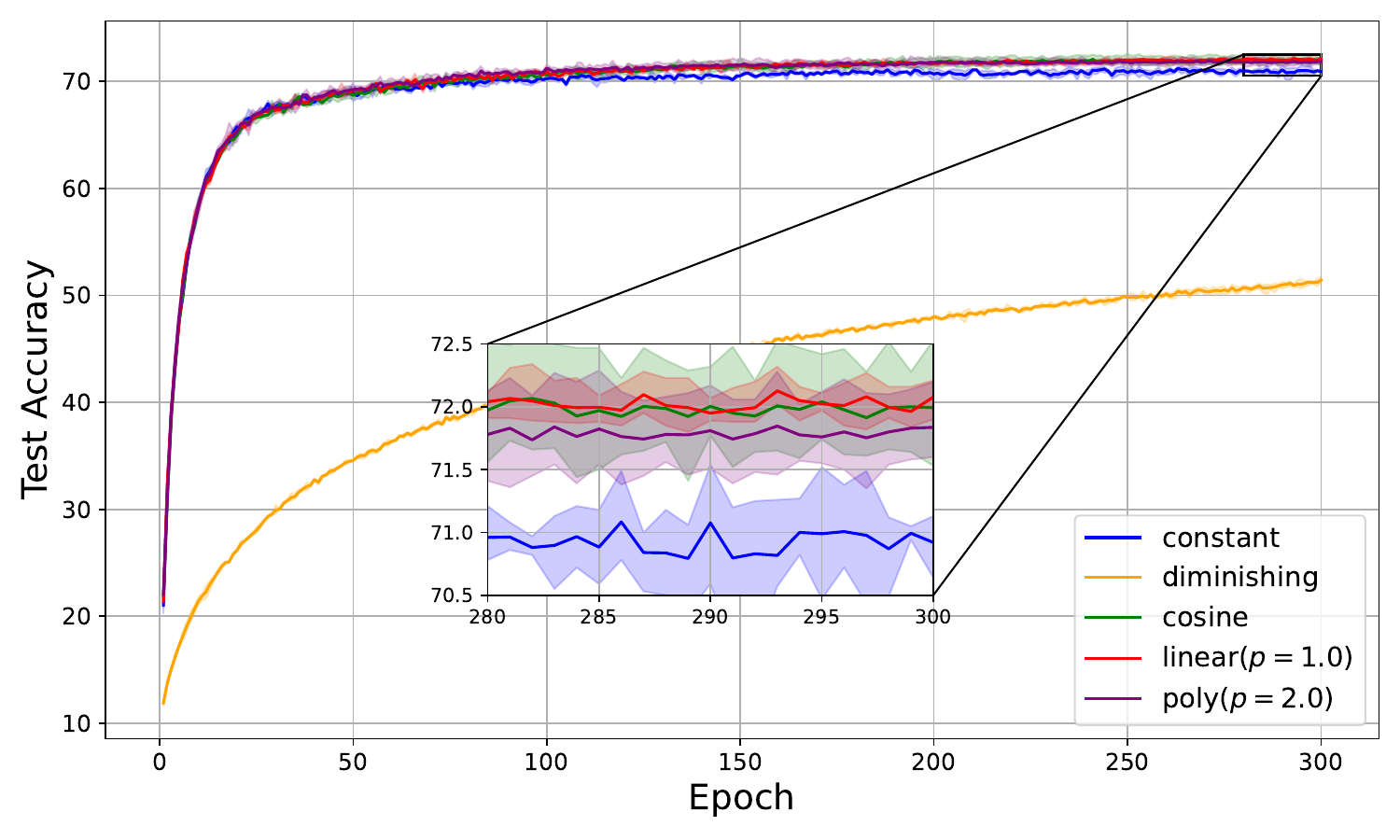} &
 \includegraphics[width=0.3\linewidth]{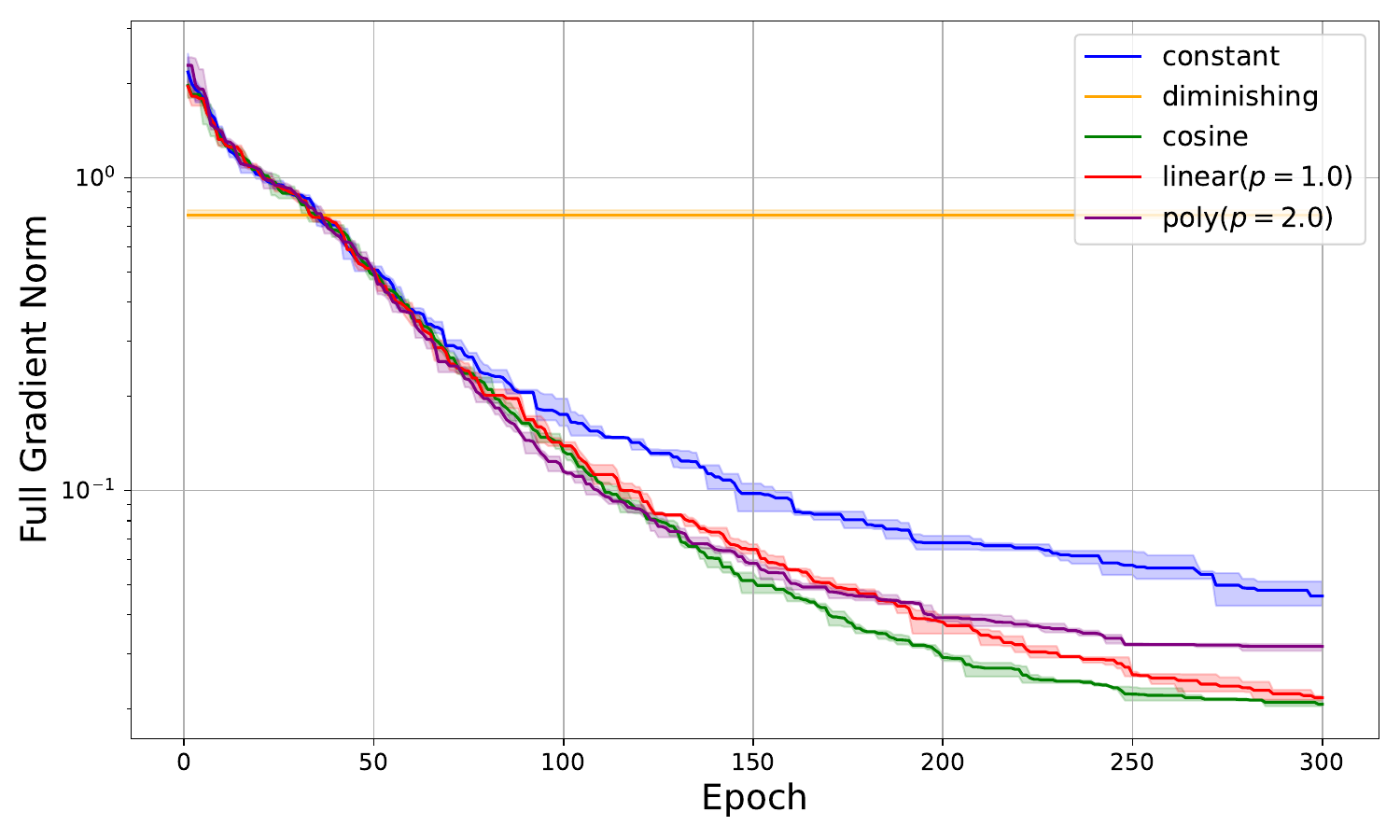} \\
 Training loss & Test accuracy & Gradient norm 
 \end{tabular}
\caption{Results of SHB under (i): constant batch size and decaying learning rate. Top row: learning rate = high (same as NSHB); bottom row: learning rate = low (one-tenth of NSHB).}
 \label{fig:shb_const_decay}
\end{figure}

\begin{figure}[htbp]
 \centering
 \begin{tabular}{ccc}
 \includegraphics[width=0.3\linewidth]{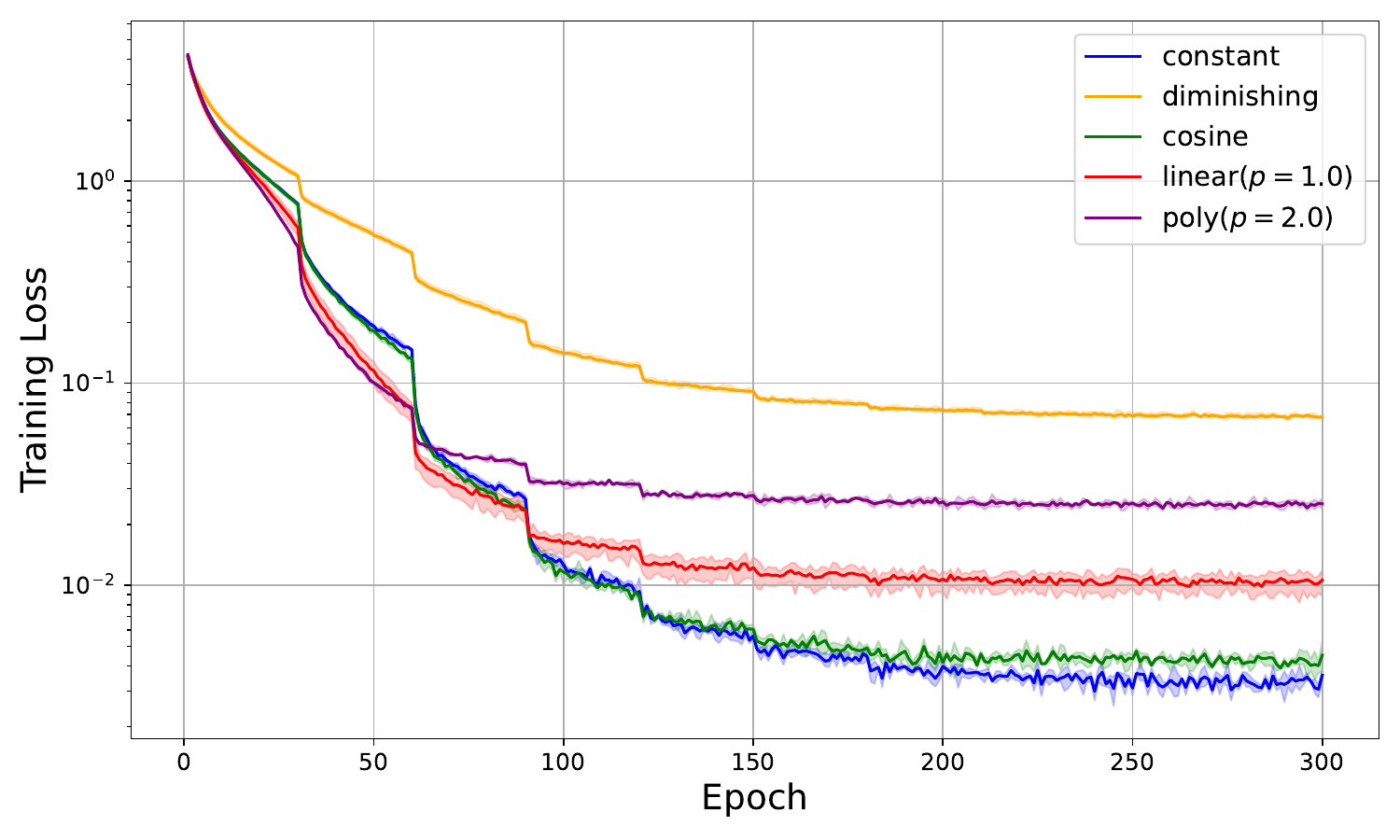} &
 \includegraphics[width=0.3\linewidth]{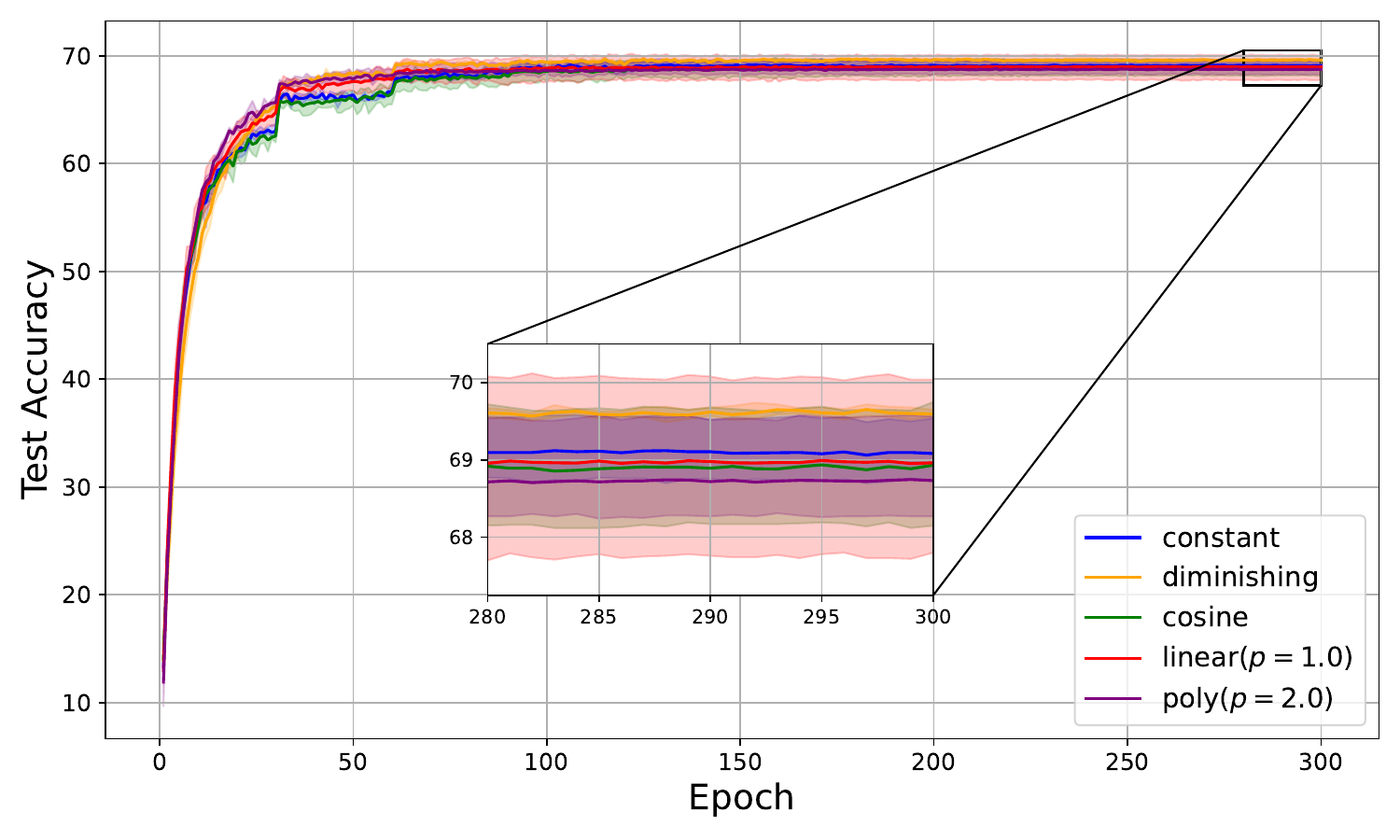} &
 \includegraphics[width=0.3\linewidth]{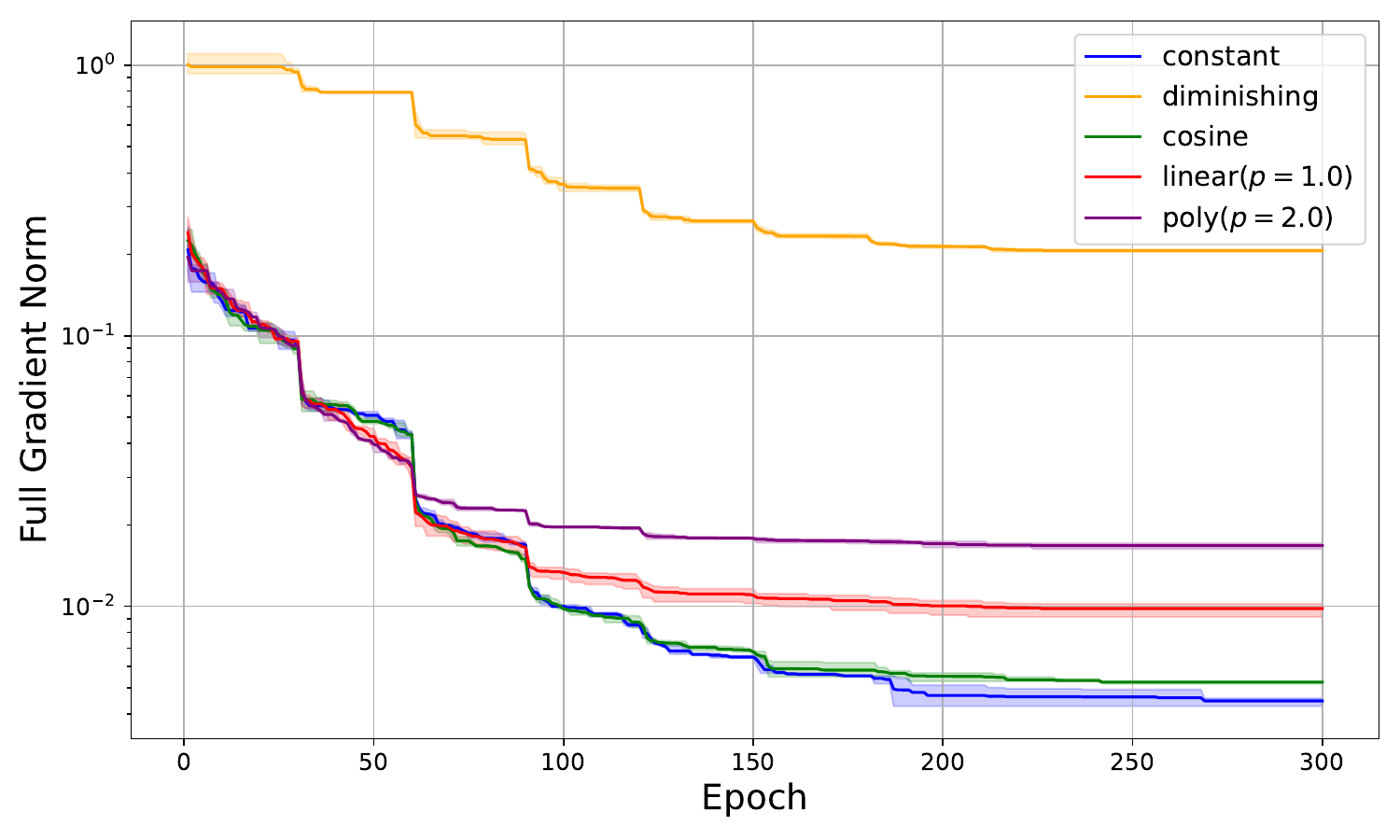} \\
 \includegraphics[width=0.3\linewidth]{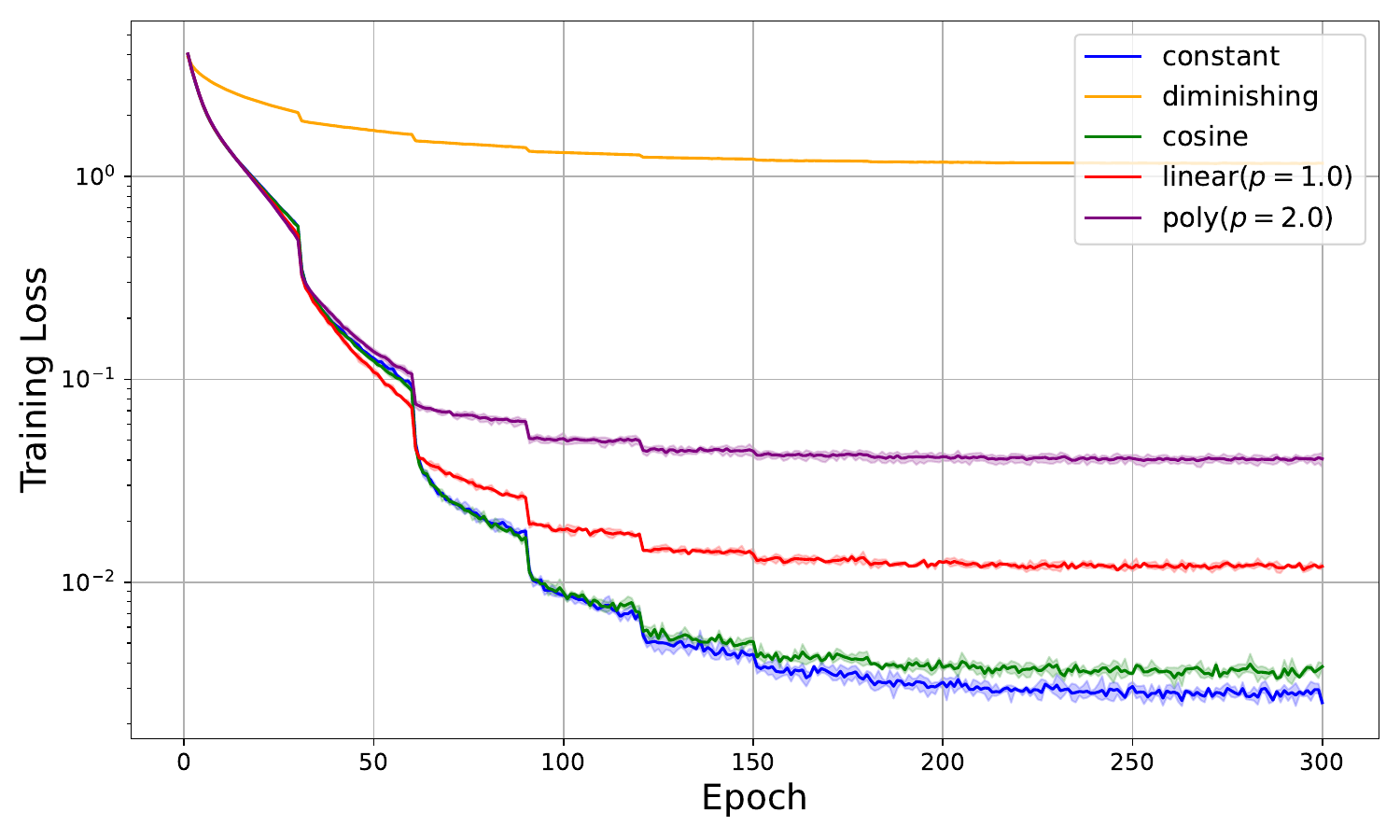} &
 \includegraphics[width=0.3\linewidth]{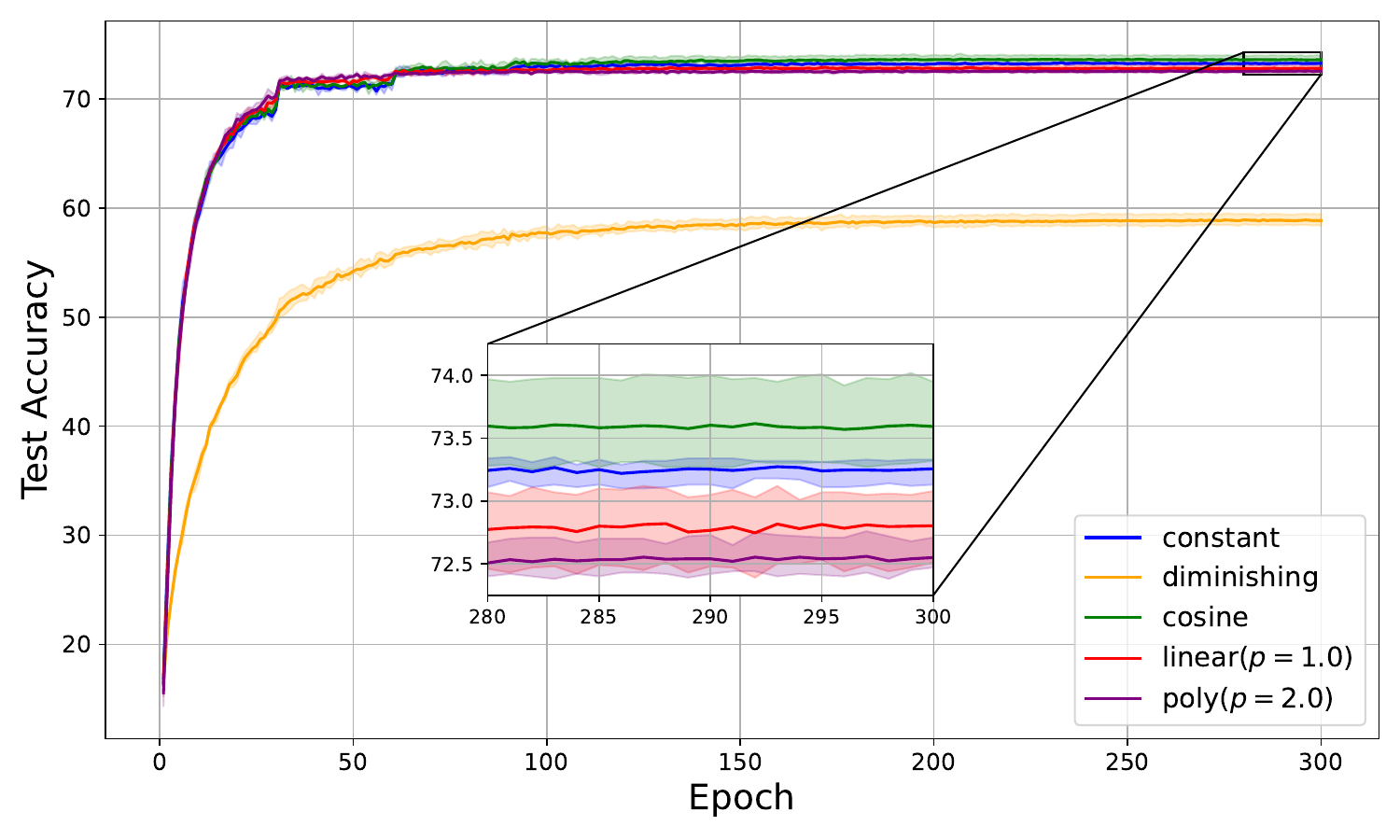} &
 \includegraphics[width=0.3\linewidth]{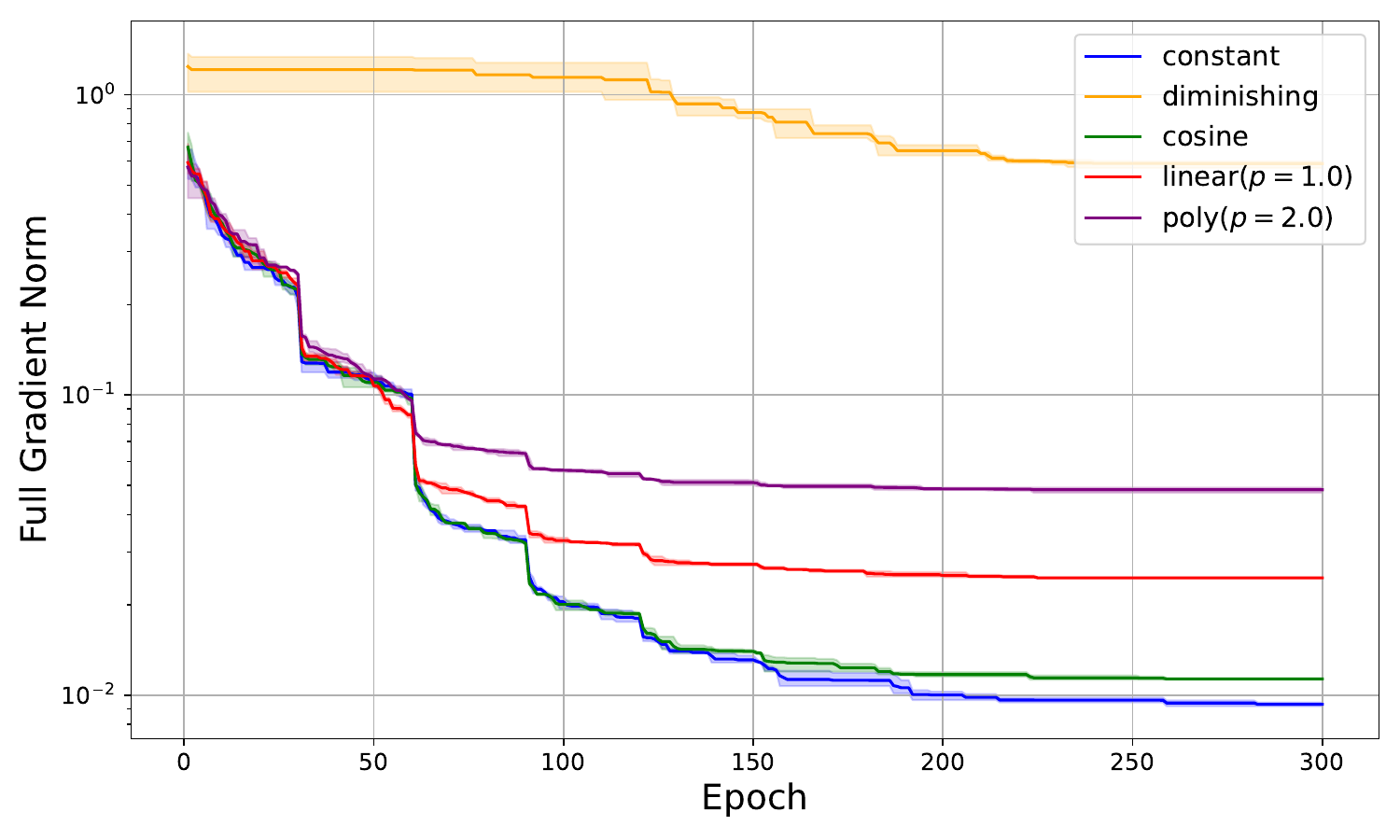} \\
 Training loss & Test accuracy & Gradient norm 
 \end{tabular}
\caption{Results of SHB under (ii): increasing batch size and decaying learning rate.}
 \label{fig:shb_incr_decay}
\end{figure}

\begin{figure}[htbp]
 \centering
 \begin{tabular}{ccc}
 \includegraphics[width=0.3\linewidth]{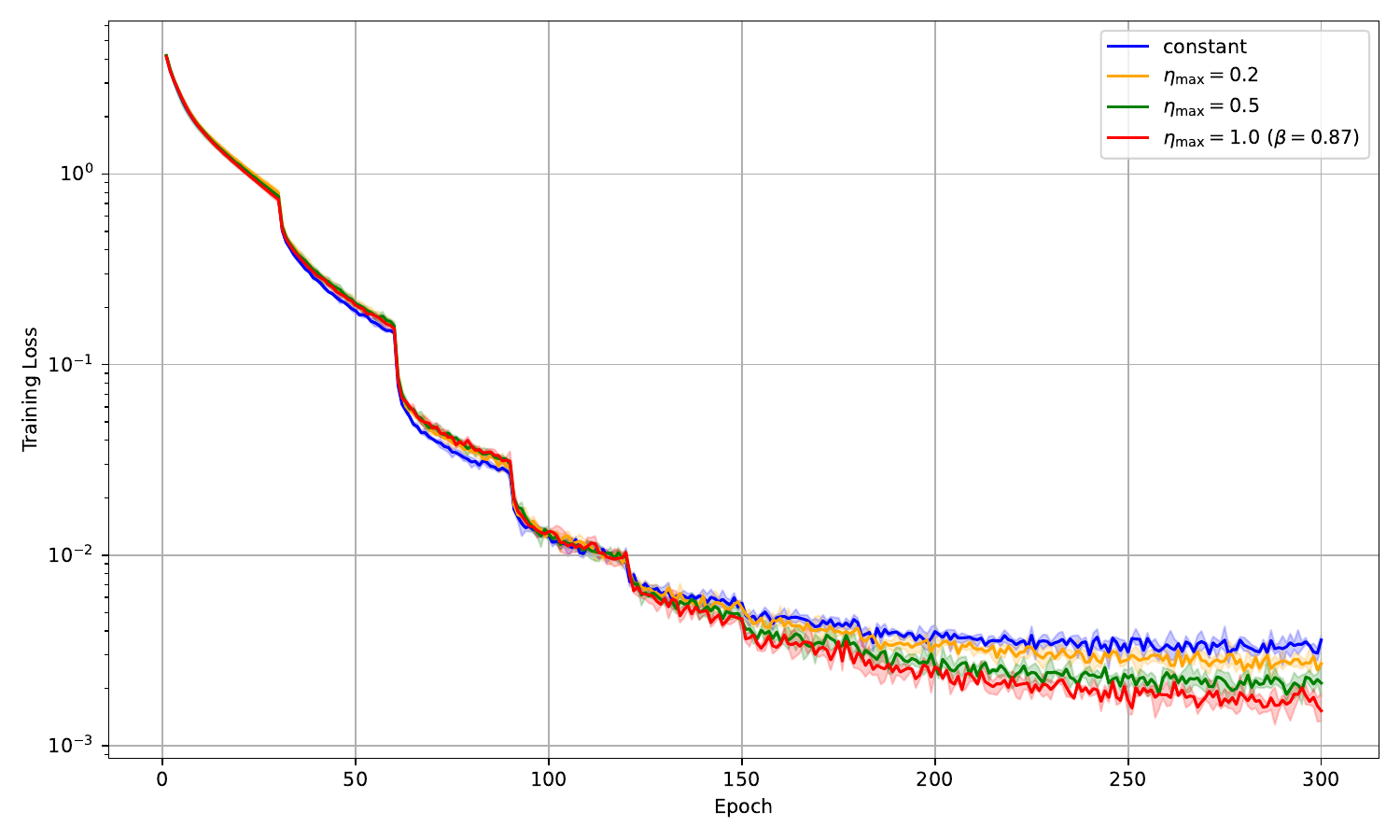} &
 \includegraphics[width=0.3\linewidth]{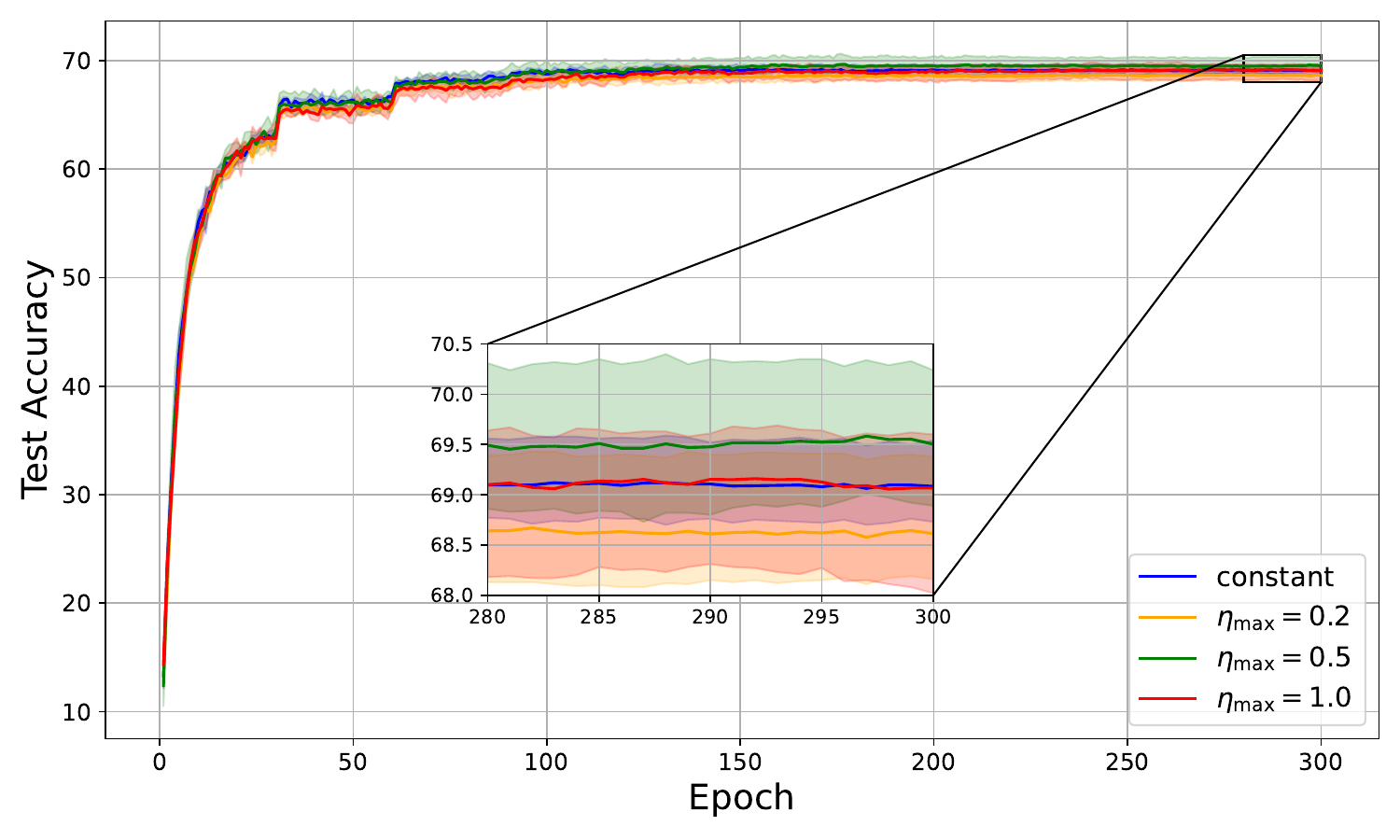} &
 \includegraphics[width=0.3\linewidth]{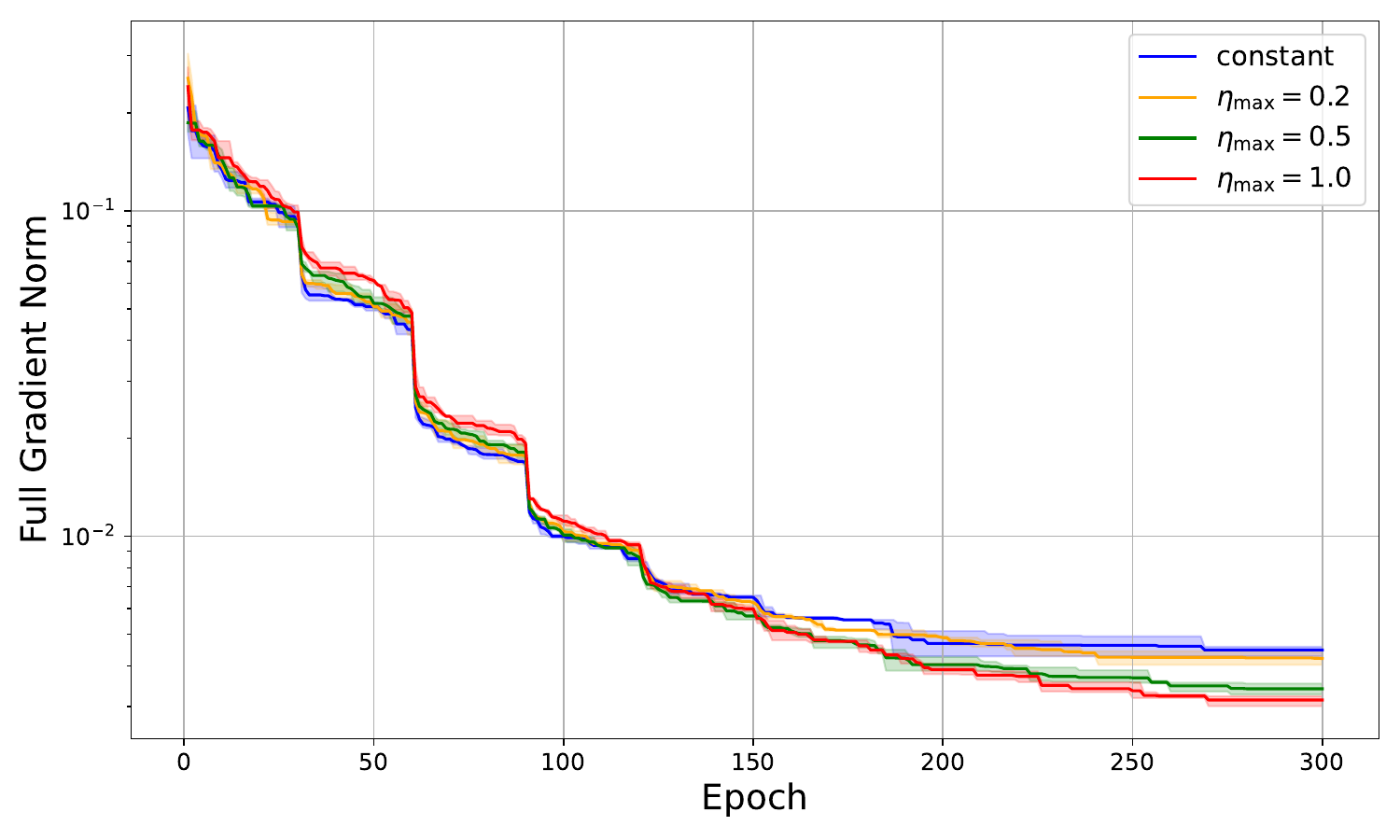} \\
 \includegraphics[width=0.3\linewidth]{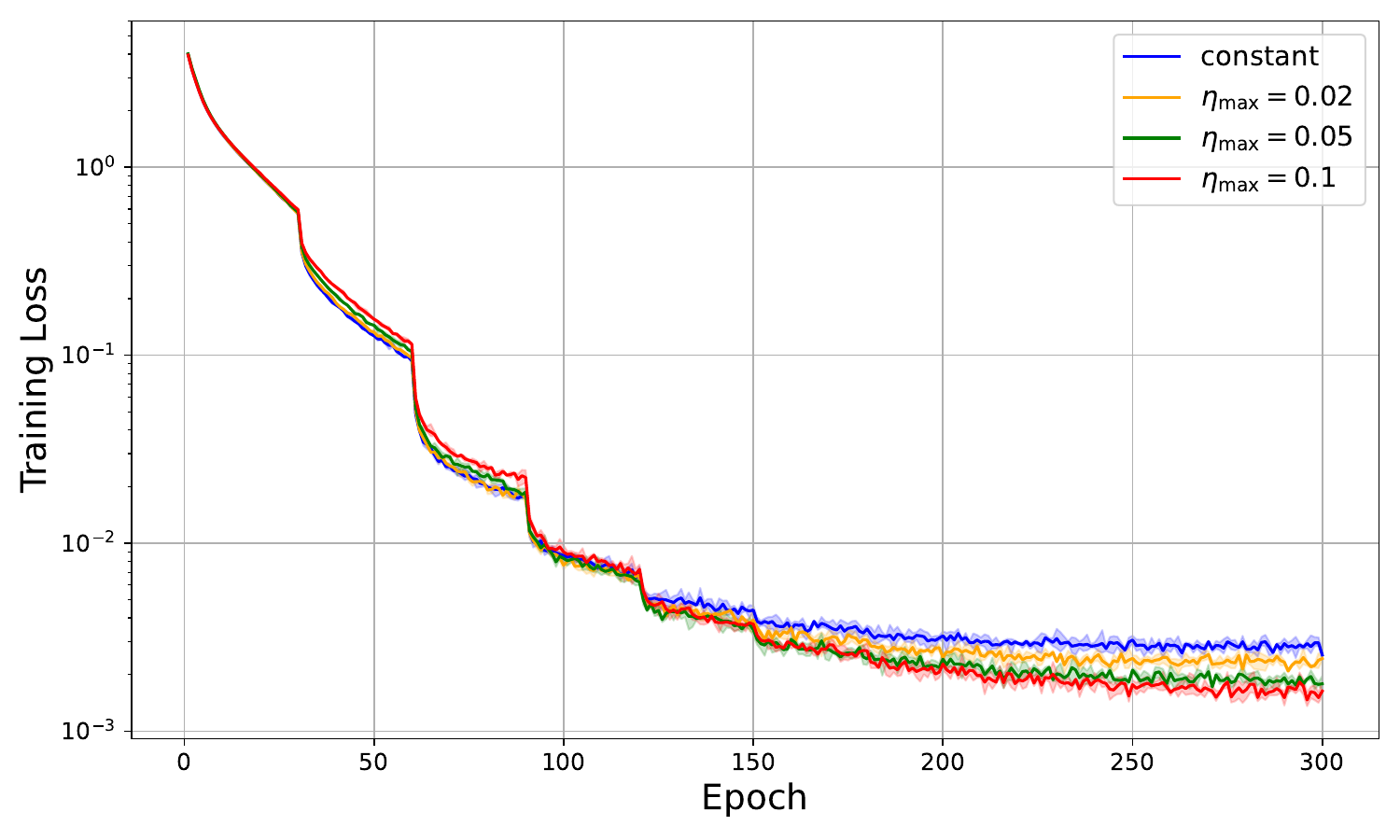} &
 \includegraphics[width=0.3\linewidth]{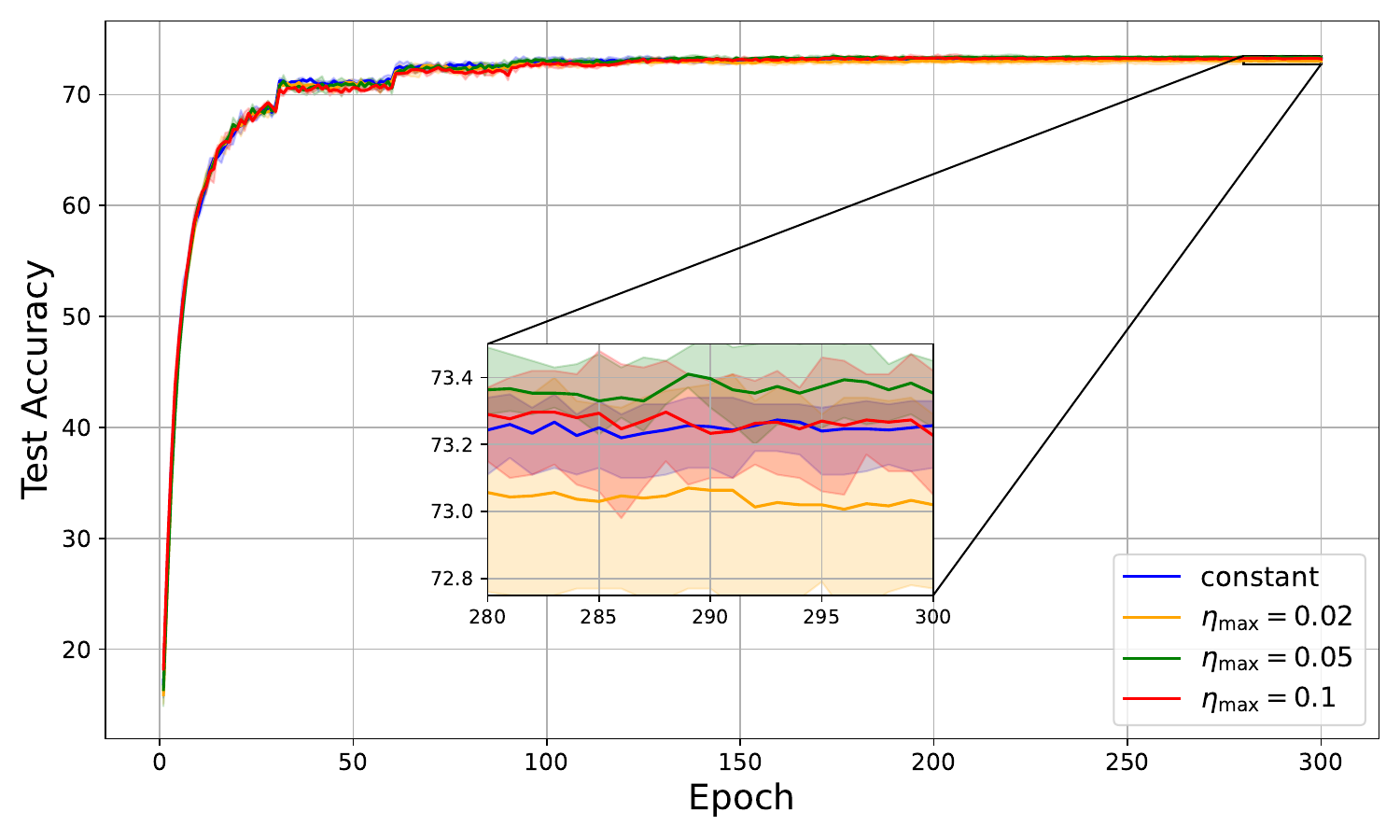} &
 \includegraphics[width=0.3\linewidth]{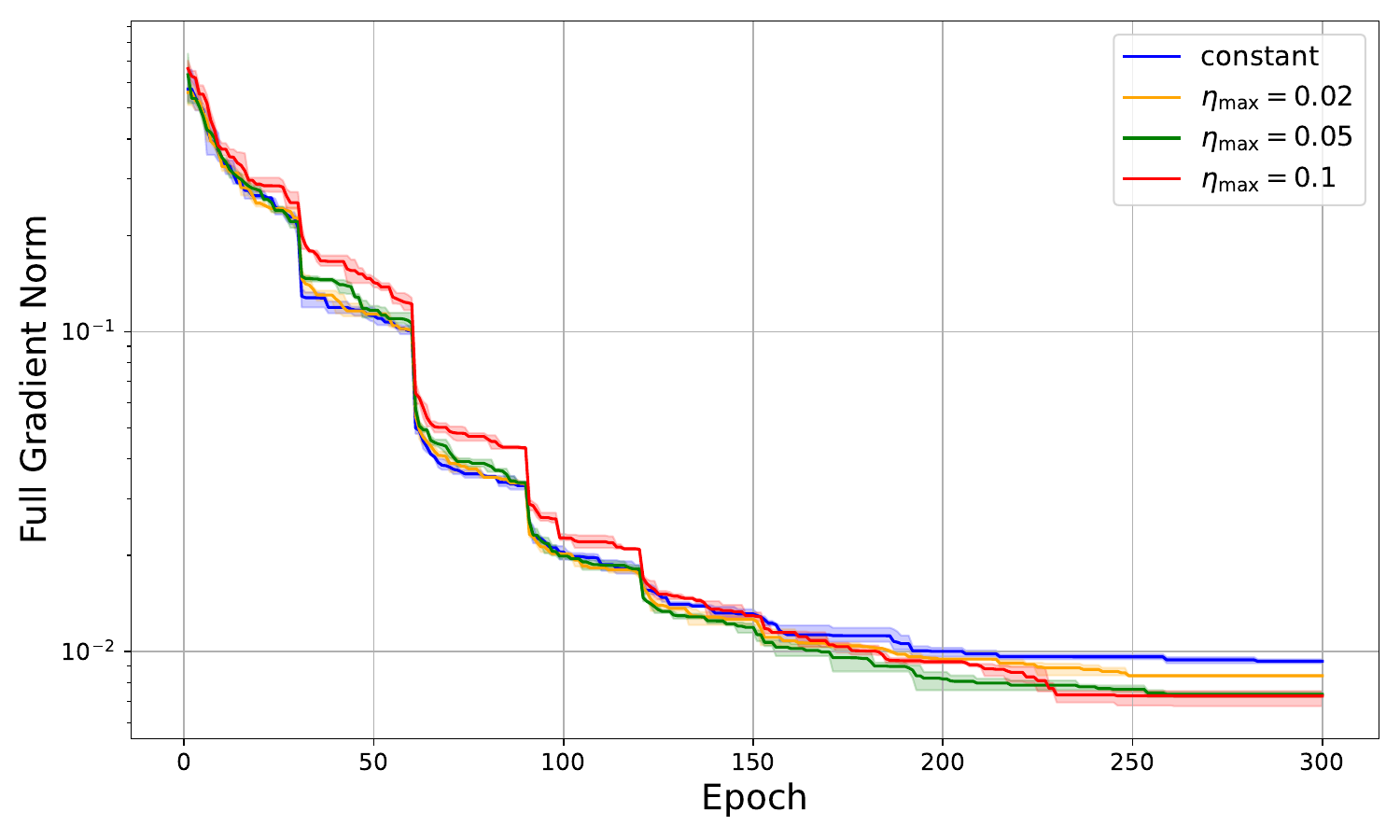} \\
 Training loss & Test accuracy & Gradient norm 
 \end{tabular}
\caption{Results of SHB under (iii): increasing batch size and increasing learning rate.}
 \label{fig:shb_incr_incr}
\end{figure}

\begin{figure}[htbp]
 \centering
 \begin{tabular}{ccc}
 \includegraphics[width=0.3\linewidth]{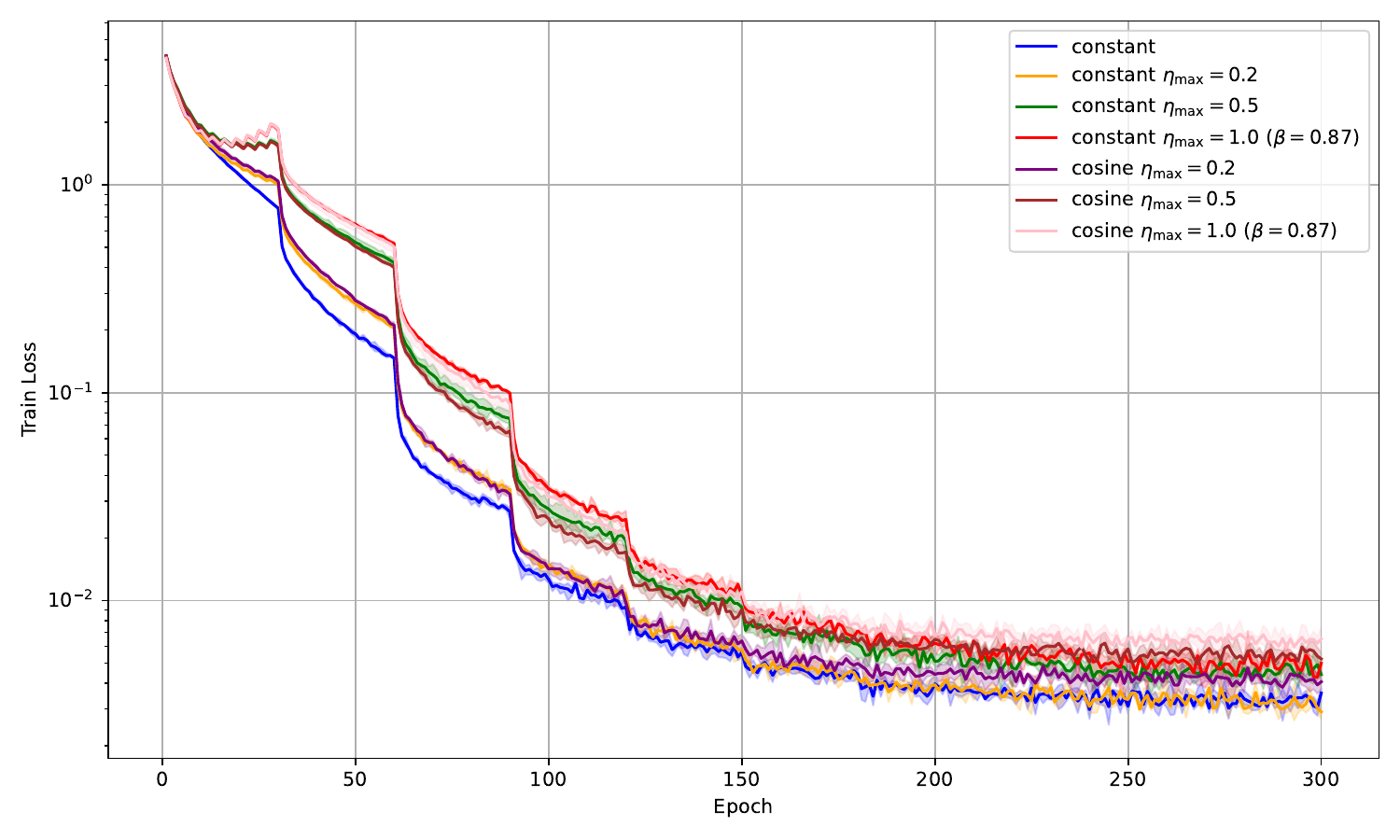} &
 \includegraphics[width=0.3\linewidth]{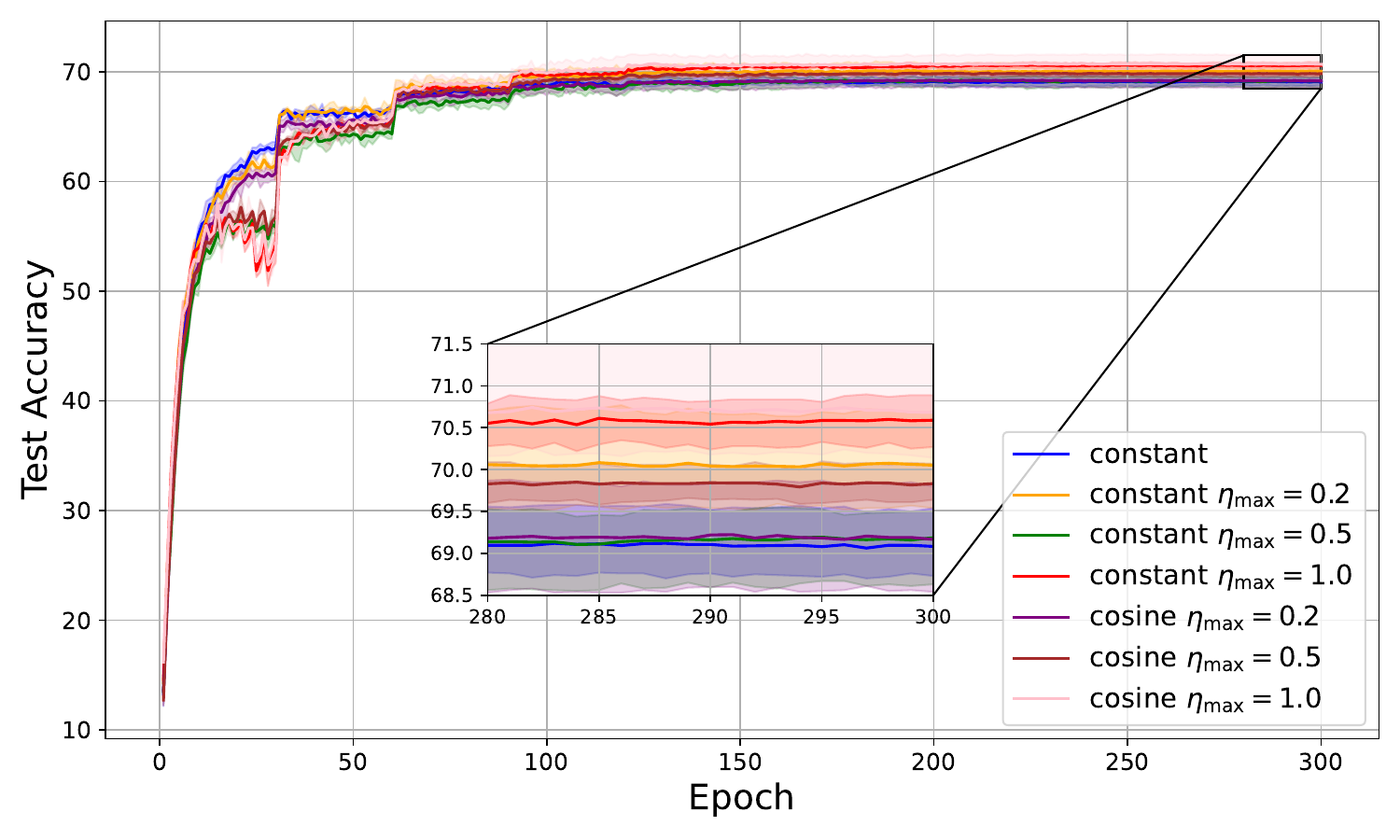} &
 \includegraphics[width=0.3\linewidth]{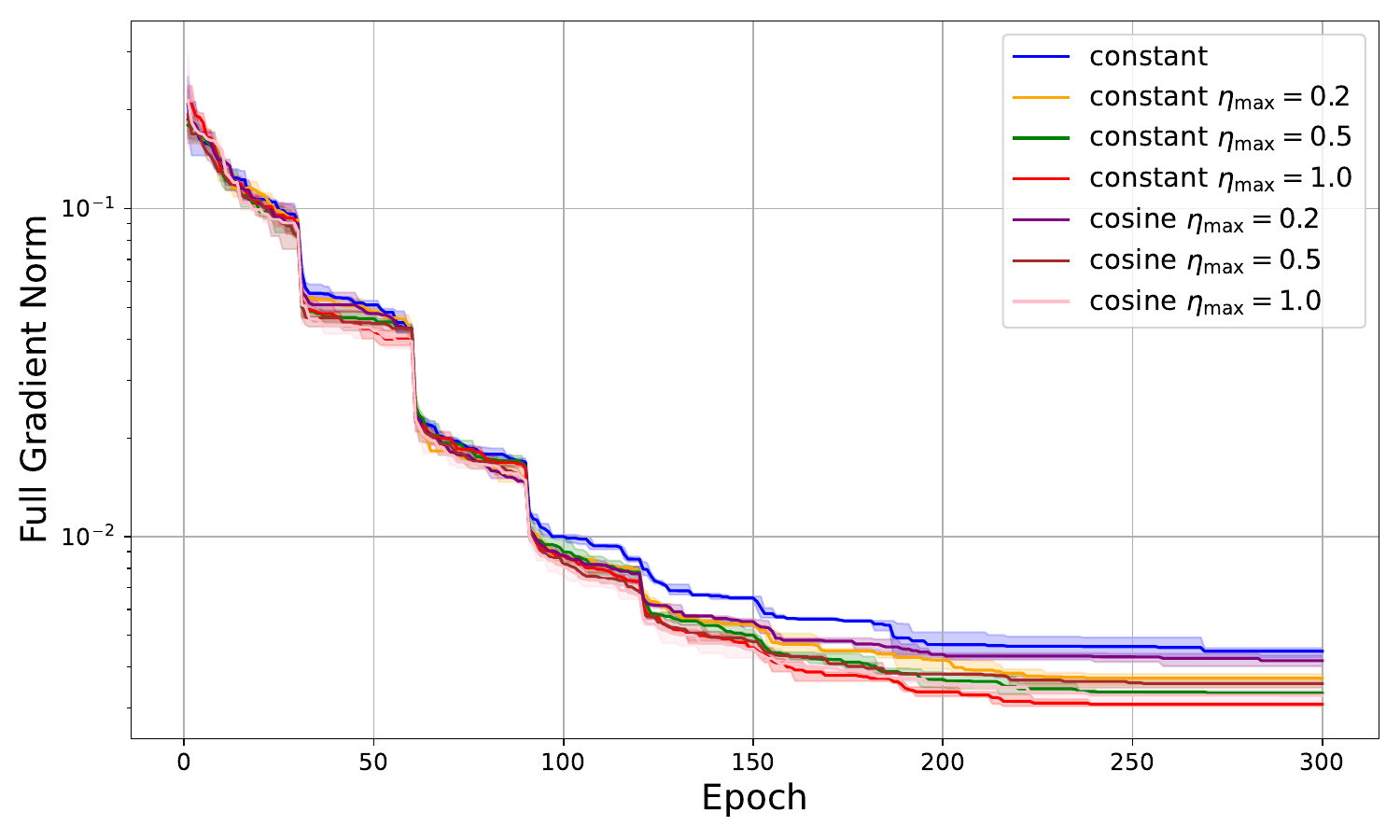} \\
 \includegraphics[width=0.3\linewidth]{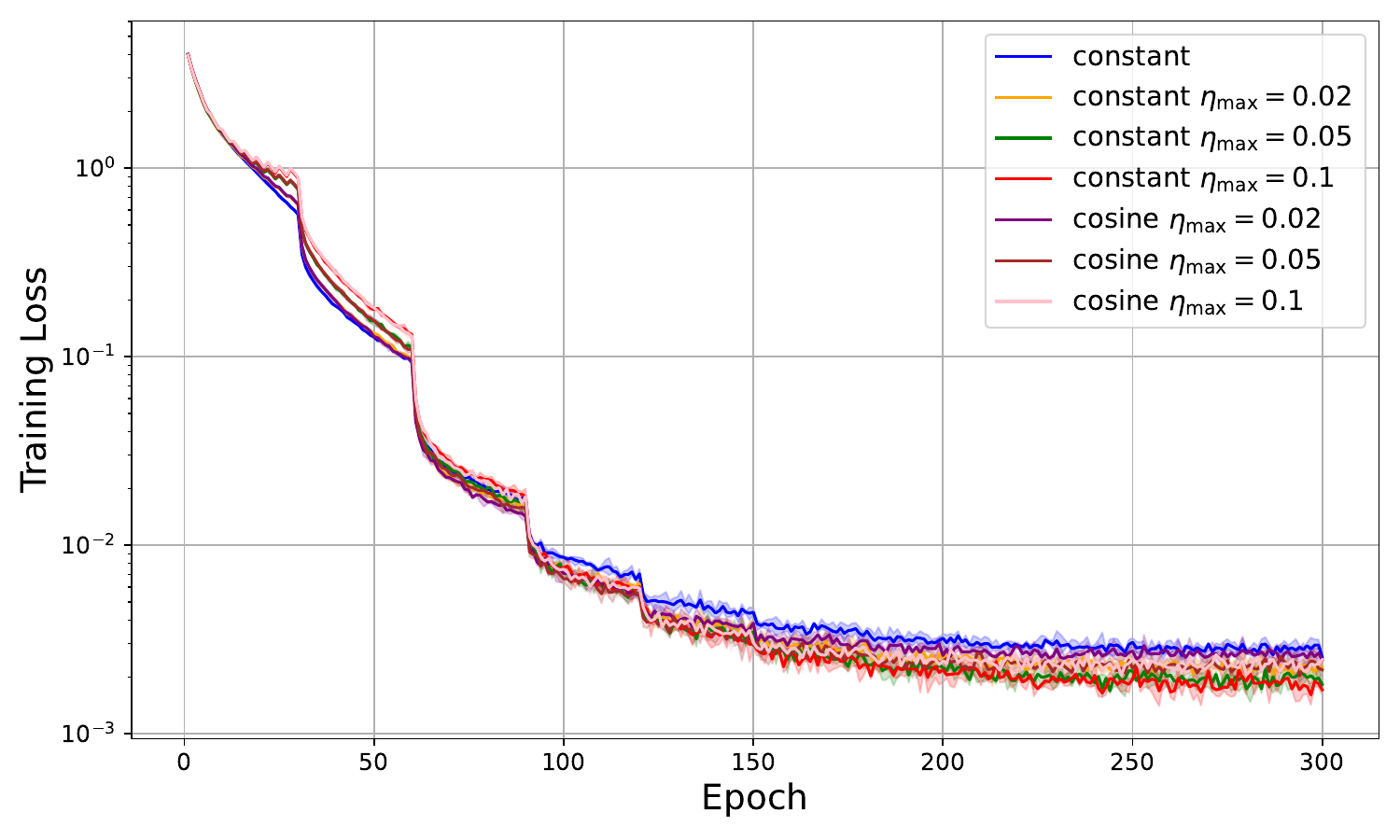} &
 \includegraphics[width=0.3\linewidth]{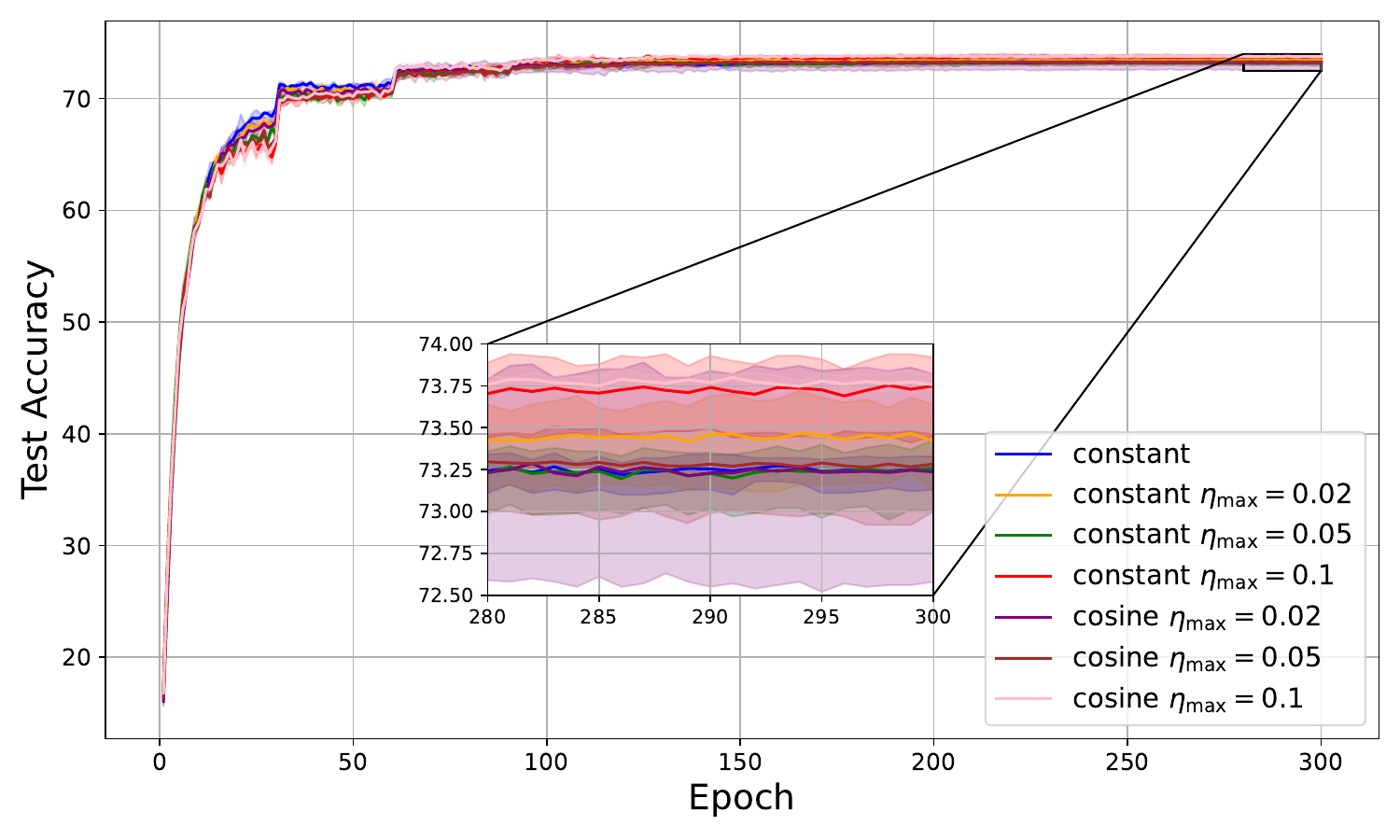} &
 \includegraphics[width=0.3\linewidth]{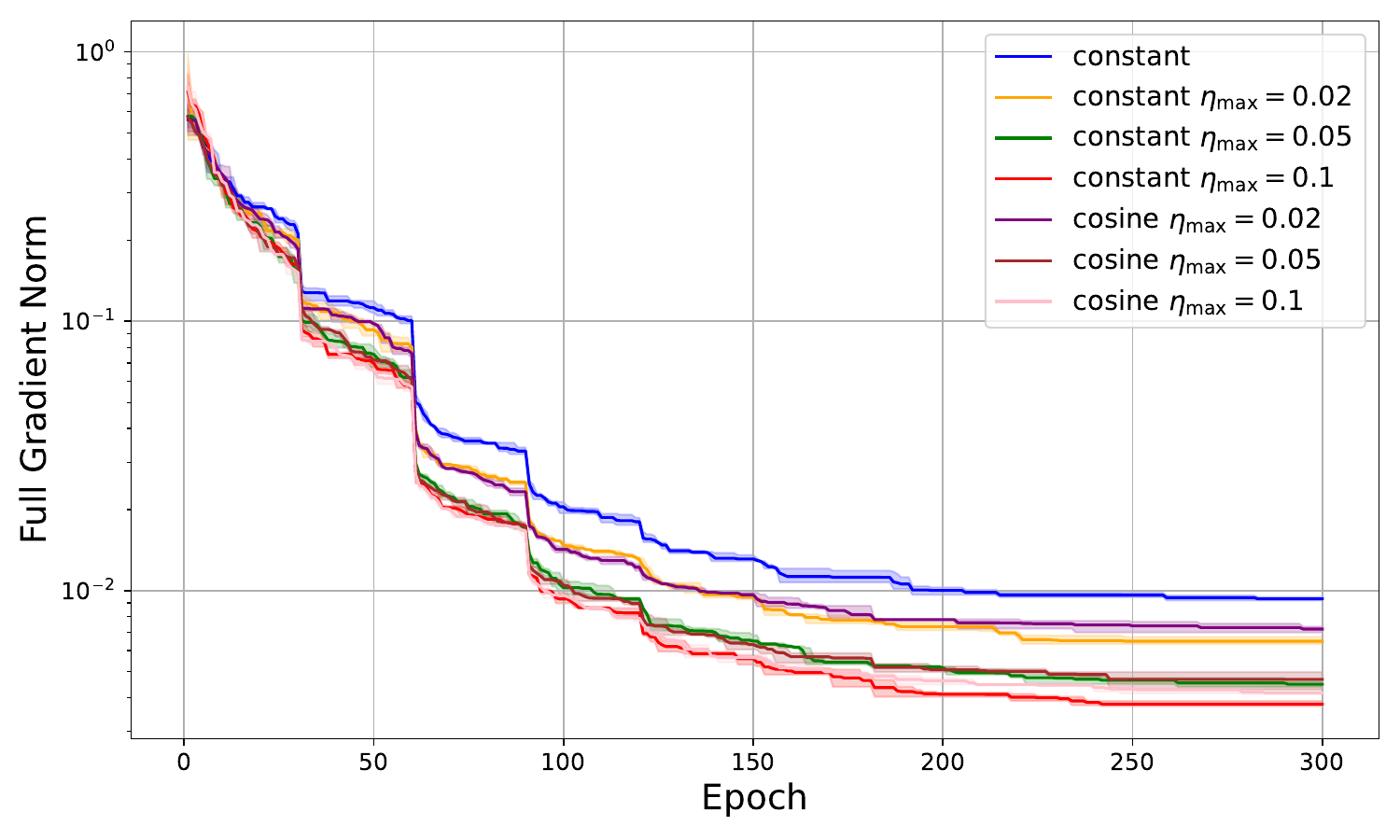} \\
 Training loss & Test accuracy & Gradient norm 
 \end{tabular}
\caption{Results of SHB with (iv): increasing batch size and warmup learning rate.}
 \label{fig:shb_incr_warmup}
\end{figure}

\subsection{Experimental Results on ImageNet}
\label{sec:C.4}
We evaluated the empirical performance using ResNet-34 on the ImageNet dataset across 200 epochs. Figure~\ref{fig:imagenet_curves} shows the trajectories of the training loss, test accuracy, and full gradient norm $\|\nabla f(\bm{\theta}_e)\|$ for the increasing batch size with a constant learning-rate schedule (schedule (ii)) and the proposed schedule with a warmup learning-rate schedule (schedule (iv)).

As illustrated in Figure~\ref{fig:imagenet_curves}, schedule (iv) achieves a lower training loss, a higher test accuracy, and a lower full gradient norm than schedule (ii) at the final epoch.

\begin{figure*}[htbp]
    \centering
    \begin{minipage}{0.32\linewidth}
    \centering
    \includegraphics[width=1.0\linewidth]{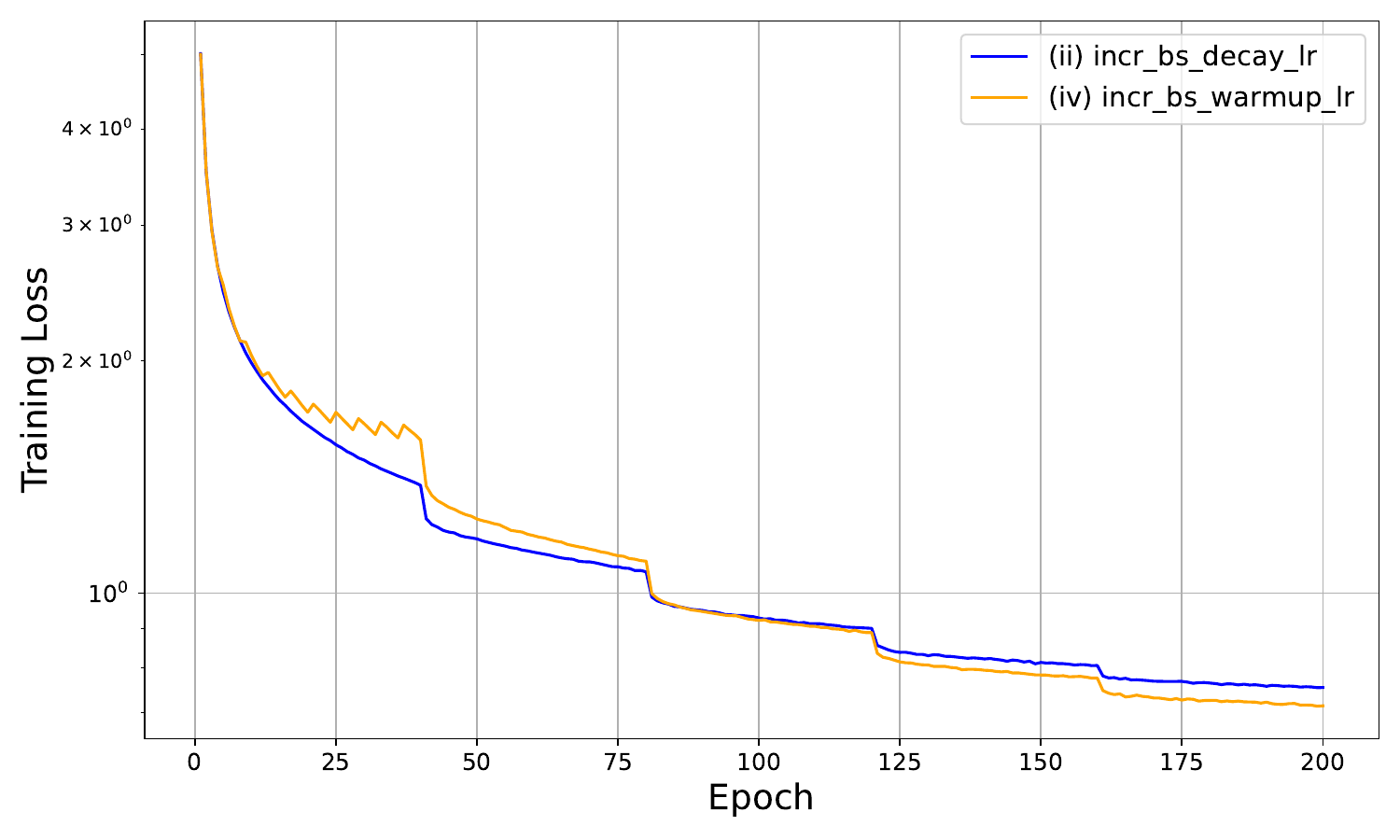}
    \caption*{Training loss}
    \end{minipage} 
    \begin{minipage}{0.32\linewidth}
    \centering
    \includegraphics[width=1.0\linewidth]{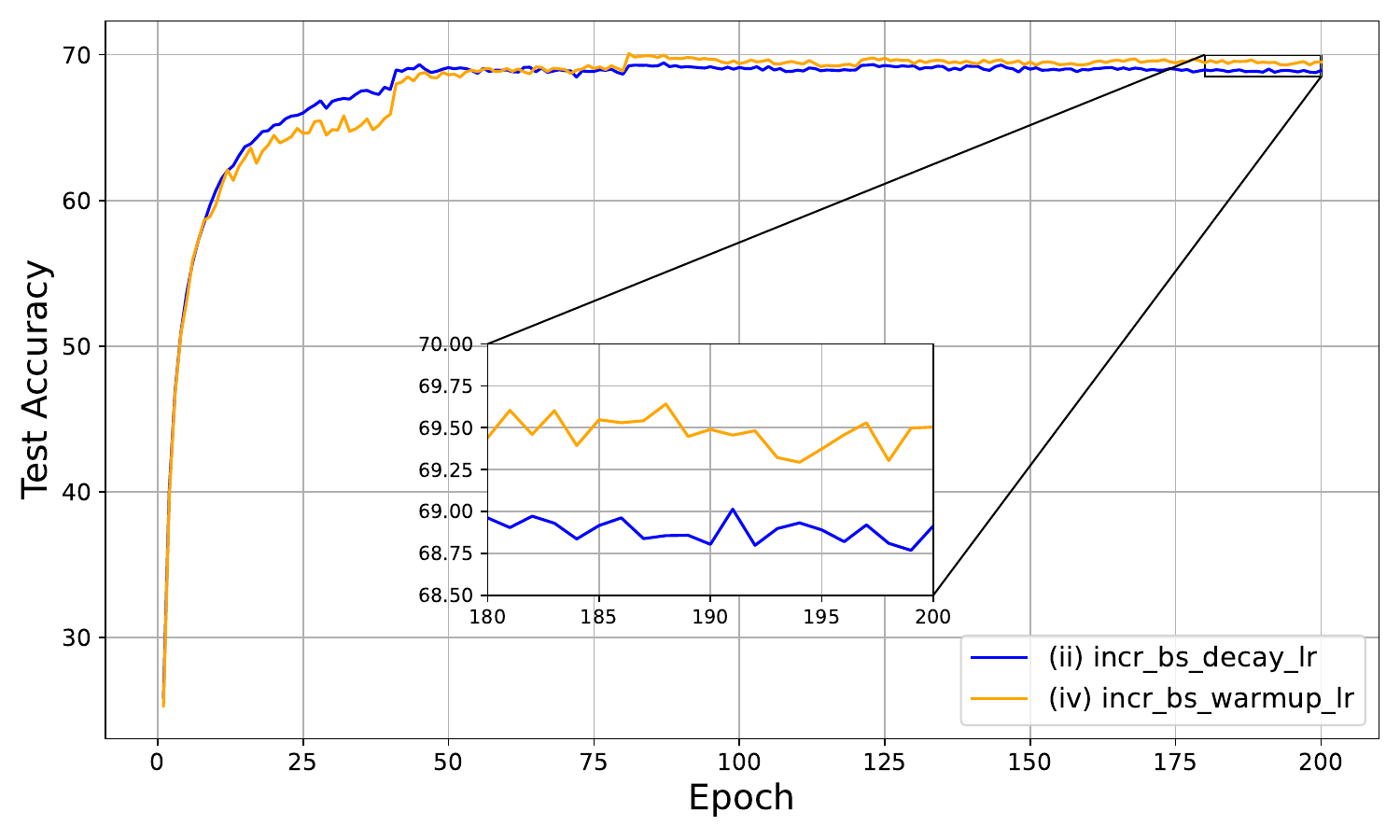}
    \caption*{Test accuracy}
    \end{minipage} 
    \begin{minipage}{0.32\linewidth}
    \centering
    \includegraphics[width=1.0\linewidth]{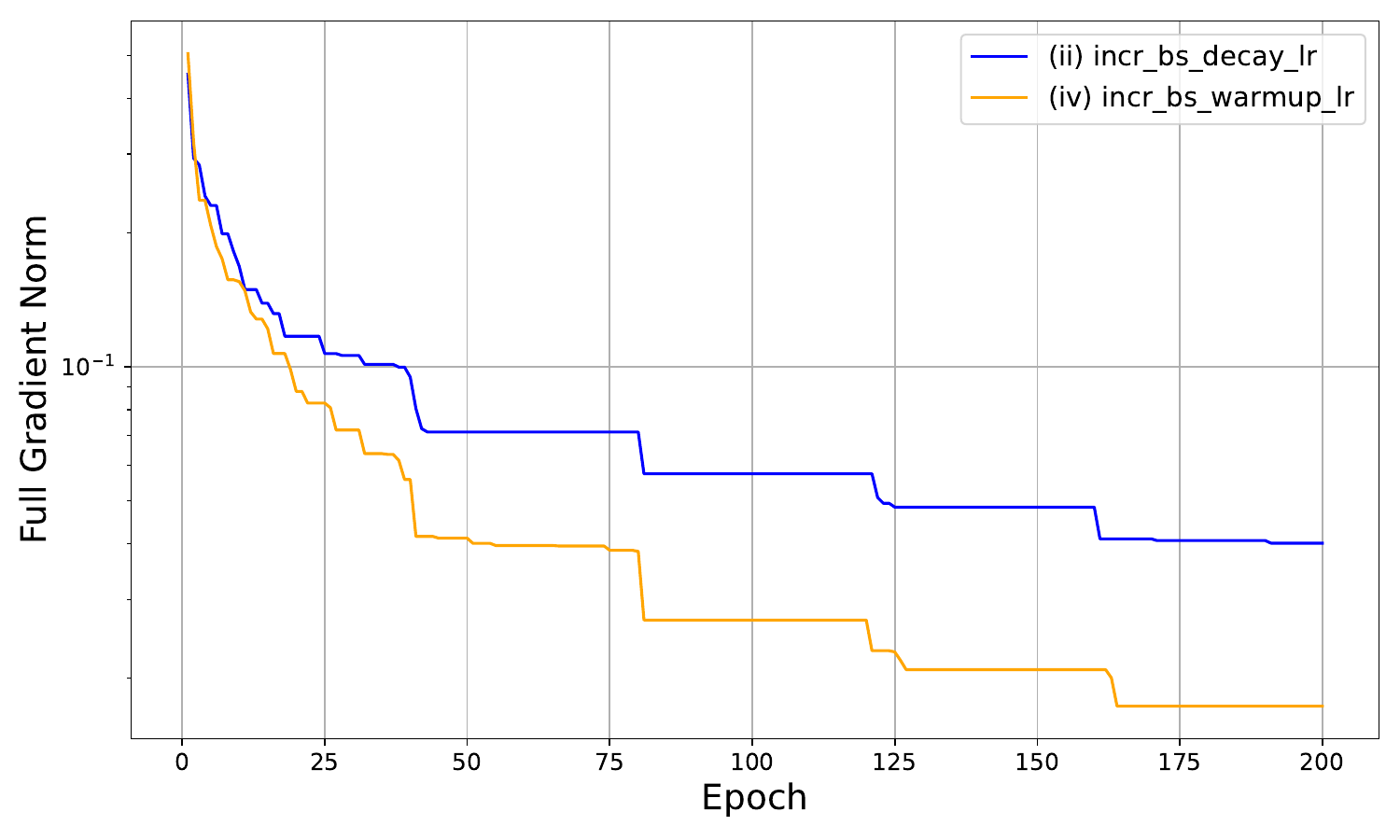}
    \caption*{Full gradient norm}
    \end{minipage} 
\caption{Empirical comparison between schedules (ii) and (iv) on ImageNet using ResNet-34.}
    \label{fig:imagenet_curves}
\end{figure*}

\section{Extension to Dynamic Momentum Scheduling}
\label{sec:dynamic_beta}
Here, we provide the full details of the extension to dynamic momentum scheduling discussed in Section~\ref{sec:discussion}. We consider the case where the momentum coefficient $\beta_t$ is time-varying and monotonically non-increasing, i.e., $\beta_t \leq \beta_{t-1}$ for all $t$.

\subsection{Modified Lyapunov Function}
For the dynamic momentum case, we modify the Lyapunov coefficient $A_t$ as follows:
\[
A_t := \frac{\eta_t - L(1-\beta_t)\eta_t^2}{2(1-\beta_t)}.
\]
Under the assumption that $\beta_t$ is monotonically non-increasing, the same descent argument as in the fixed $\beta$ case applies, and the learning rate condition takes the form:
\[
\frac{1 - c\beta_t^2}{1 - \beta_{t-1}} - L\eta_{t-1} \geq 0.
\]

\subsection{Convergence Bound}
\begin{theorem}[Convergence bound for NSHB with dynamic momentum]\label{thm:dynamic_beta}
Suppose Assumption~\ref{assum:1} holds and $\{\beta_t\}$ is monotonically non-increasing. Then, for any $T \in \mathbb{N}$, the following bound holds:
\[
\min_{0 \leq t \leq T-1} \mathbb{E}\left[\|\nabla f(\bm{\theta}_t)\|^2\right]
\leq
\frac{2(f(\bm{\theta}_0) - f^\star)}{\sum_{t=0}^{T-1}(1-\beta_t)\eta_t}
+ \frac{\sigma^2}{\sum_{t=0}^{T-1}(1-\beta_t)\eta_t}\sum_{t=0}^{T-1}\frac{(1-\beta_t)\eta_t}{b_t}.
\]
\end{theorem}

Note that by substituting a specific schedule for $\beta_t$, one can derive concrete convergence rates.

\textbf{Proof Sketch.} We follow the same argument as in the proof of Theorem~\ref{thm:general} (Appendix~\ref{sec:B.2}), except for replacing the fixed $\beta$ with the time-varying $\beta_t$ in the Lyapunov coefficient $A_t$. Using the definition $\mathcal{L}_t := f(\bm{\theta}_t) + A_{t-1}\|\bm{m}_{t-1}\|^2$ and the modified coefficient $A_t$ above, the expected difference of the Lyapunov function satisfies:
\begin{align*}
&\E[\mathcal{L}_{t+1} - \mathcal{L}_t] \\
&=  \E[f(\bm{\theta}_{t+1}) - f(\bm{\theta}_t)] + A_t \E[\|\bm{m}_t\|^2] - A_{t-1} \E[\|\bm{m}_{t-1}\|^2] \\
&\leq -\frac{1}{2}(1-\beta_t)\eta_t \E[\|\nabla f(\bm{\theta}_t)\|^2]
-\frac{1}{2}\left(\frac{\eta_{t-1}}{1-\beta_{t-1}} - \frac{\beta_t^2 \eta_t}{1-\beta_t} - L \eta_{t-1}^2 \right) \E[\|\bm{m}_{t-1}\|^2]
+ \frac{1}{2}(1-\beta_t)\eta_t \frac{\sigma^2}{b_t} \\
&\leq -\frac{1}{2}(1-\beta_t)\eta_t \E[\|\nabla f(\bm{\theta}_t)\|^2]
-\frac{1}{2}\left( \frac{1}{1-\beta_{t-1}} - \frac{c\beta_t^2}{1-\beta_t} - L \eta_{t-1} \right) \eta_{t-1} \E[\|\bm{m}_{t-1}\|^2]
+ \frac{1}{2}(1-\beta_t)\eta_t \frac{\sigma^2}{b_t} \\
&\leq -\frac{1}{2}(1-\beta_t)\eta_t \E[\|\nabla f(\bm{\theta}_t)\|^2]
-\frac{1}{2}\left( \frac{1-c\beta_t^2}{1-\beta_{t-1}} - L \eta_{t-1} \right) \eta_{t-1} \E[\|\bm{m}_{t-1}\|^2]
+ \frac{1}{2}(1-\beta_t)\eta_t \frac{\sigma^2}{b_t} \\
&\leq -\frac{1}{2}(1-\beta_t)\eta_t \E[\|\nabla f(\bm{\theta}_t)\|^2]
+ \frac{1}{2}(1-\beta_t)\eta_t \frac{\sigma^2}{b_t}.
\end{align*}
The second inequality follows from the technical condition $\eta_t \leq c\eta_{t-1}$. The third inequality follows from the monotonicity assumption $\beta_t \leq \beta_{t-1}$. The final inequality holds by the learning rate condition $\frac{1-c\beta_t^2}{1-\beta_{t-1}} - L\eta_{t-1} \geq 0$, which ensures that the coefficient of $\E[\|\bm{m}_{t-1}\|^2]$ is non-positive. Summing over $t$ and rearranging terms, as in the proof of Theorem~\ref{thm:general}, yields the stated convergence bound.

\end{document}